\newtheorem{theorem}{Theorem}
\newtheorem{assumption}{Assumption}
\newtheorem{remark}{Remark}
\newtheorem{lemma}{Lemma}
\newtheorem{proposition}{Proposition}
\title{PRIOR: Personalized Prior for Reactivating the Information Overlooked in Federated Learning.}
\author{%
  Mingjia Shi$^{1}$\thanks{3101ihs@gmail.com} \qquad
  Yuhao Zhou$^{1}$ \qquad
  Kai Wang$^{2}$ \qquad
  Huaizheng Zhang \qquad \\
  \textbf{
  Shudong Huang$^{1}$ \qquad
  Qing Ye$^{1\dag}$ \qquad
  Jiangcheng Lv$^{1}$\thanks{Equal Corresponding Authors \{yeqing, lvjiancheng\}@scu.edu.cn}
  }\\
  $^{1}$Sichuan University \qquad
  $^{2}$National University of Singapore
}
\begin{document}

\maketitle

\begin{abstract}

Classical federated learning (FL) enables training machine learning models without sharing data for privacy preservation, but heterogeneous data characteristic degrades the performance of the localized model. %
Personalized FL (PFL) addresses this by synthesizing personalized models from a global model via training on local data. %
Such a global model may overlook the specific information that the clients have been sampled. %
In this paper, we propose a novel scheme to inject personalized prior knowledge into the global model in each client, which attempts to mitigate the introduced incomplete information problem in PFL. %
At the heart of our proposed approach is a framework, the \textit{PFL with Bregman Divergence} (pFedBreD), decoupling the personalized prior from the local objective function regularized by Bregman divergence for greater adaptability in personalized scenarios.
We also relax the mirror descent (RMD) to extract the prior explicitly to provide optional strategies.
Additionally, our pFedBreD is backed up by a convergence analysis. %
Sufficient experiments demonstrate that our method reaches the \textit{state-of-the-art} performances on 5 datasets and outperforms other methods by up to 3.5\% across 8 benchmarks. 
Extensive analyses verify the robustness and necessity of proposed designs. %
\url{https://github.com/BDeMo/pFedBreD_public}

\end{abstract}

\section{Introduction}

Federated learning (FL)~\cite{mcmahan2017communication} has achieved significant success in many fields~\cite{10.1145/3298981, 9599369, yang_federated_2020, ZHANG2021106679, xu_federated_2021, 9415623, 9475501, 8844592, s20216230, long_federated_2020}, which include recommendation systems utilized by e-commerce platforms~\cite{yang_federated_2020}, prophylactic maintenance for industrial machinery~\cite{ZHANG2021106679}, disease prognosis employed in healthcare~\cite{xu_federated_2021}.
Data heterogeneity is a fundamental characteristic of FL, leading to challenges such as inconsistent training and testing data (data drift)~\cite{kairouz2021advances}.
An efficient solution to these challenges is to fine-tune the global model locally for adaptation on local data~\cite{arivazhagan2019federated, Zhang_2022_CVPR, jiang2019improving}.
This solution is straightforward and pioneering, but presents a fundamental limitation when dealing with highly heterogeneous data. For examples, heterogeneous data drift may introduce substantial noise~\cite{hsu_measuring_2019} and the resulted model may not generalize well to new sample~\cite{fallah2020personalized, cheng2022federated}.
Thus, heterogeneous data in FL is still challenging~\cite{tan2022towards}.

Recently, personalized FL (PFL) is proposed to mitigate the aforementioned negative impact of heterogeneous data~\cite{tan2022towards}. 
To improve the straightforward solution mentioned above, Per-FedAvg~\cite{fallah2020personalized} is introduced to train a global model that is easier to fine-tune. Another paper on the similar topic, FedProx~\cite{li_federated_2020}, aims to resolve the issue of personalized models drifting too far from the global model during training with a dynamic regularizer in the objective during local training. This issue could occur especially in post-training fine-tuning methods without regularization (\textit{e.g.}, Per-FedAvg~\cite{fallah2020personalized}).
Moreover, pFedMe~\cite{t2020personalized}, another regularization method modeling local problems using Moreau envelopes, replaces FedProx's personalized model aggregation method with an interpretable approach for aggregating local models~\cite{li_federated_2020}. It also accommodates first-order Per-FedAvg~\cite{fallah2020personalized}.

Although the existing PFL methods have achieved promising results, the prior knowledge from single global model for local training~\cite{tan2022towards} hinders the development of PFL.
Specifically, we analyze the shortcomings of current PFL methods as follows:
1) utilizing the same global model for direct local training could potentially disregard the client's sampling information.
As shown in Figure~\ref{fig_intuition}, a single global model provides global knowledge directly for local training, which overlooks the client-sampling information when the global knowledge is transferred to specific clients.
2) Explicitly extracting prior knowledge can be a challenging task. Most of the insightful works~\cite{dieuleveut2021federated, marfoq2021federated} propose assumptions for recovering this incomplete information, but these assumptions are implicit, which limits the way to use the information to develop personalized strategies.

To address the former issue above, we propose framework pFedBreD to inject personalized prior knowledge into the one provided by a global model. As shown in Figure~\ref{fig_intuition}, it is injected in the $2^{nd}$ step and the local knowledge is transferred into global model via local models instead of directly aggregating personalized models~\cite{li_federated_2020}. To address the latter, we introduce \textit{relaxed mirror descent} RMD to explicitly extract the prior for exploring personalized strategies.

Our method is backed up with direct theoretical support from Bayesian modeling in Section~\ref{sec_mth} and a convergence analysis in Section~\ref{sec_alg}, which provides a linear bound $\mathcal{O}(1/TN)$ with aggregation noise and a quadratic speedup $\mathcal{O}(1/(TNR)^{2})$ without.\footnote{$TN$ and $TNR$: total global / local epoch in FL system. See Appendix~\ref{appdx_glsry}.}
Meanwhile, the existence and validity of the injection and extraction aforementioned information is verified in Section~\ref{ssec_analysis}.
The remarkable performance of the implements of the proposed method is tested on 5 datasets and 8 benchmarks. Consistently, our method reach the \textit{state-of-the-art}. Especially, the improvement of accuracy on task DNN-FEMNIST~\cite{caldas2018leaf} is up to 3.5\%. Extensive ablation study demonstrate that parts of the hybrid strategy $\mathbf{mh}$ are complementary to each other.
Our contributions can be summarized as follows:
\begin{itemize}
    \item The problem of overlooking client-sampling information at prior knowledge being transferred is introduced in this paper, and we first investigate the possibility of explicitly expressing the prior knowledge of the information and design personalized strategies on it.
    \item To express the personalized prior, we model PFL into a Bayesian optimization problem, \textit{Global-MLE and Local-MAP}. A novel framework, pFedBreD, is proposed for computing the modeled problem, and RMD is introduced to explicitly extract prior information.
    \item Sufficient experiments demonstrate our method surpasses most baselines on public benchmarks, thereby showcasing its robustness to data heterogeneity, particularly in cases involving small aggregation ratios and non-convex local objective settings.
\end{itemize}

\begin{figure}[t]
    \centering
    \includegraphics[width=.95\textwidth]{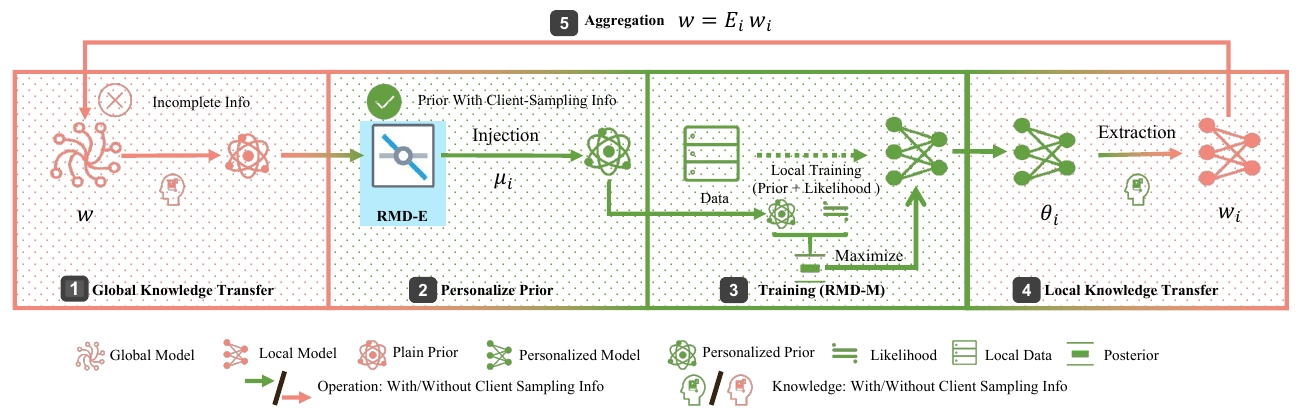}
    \caption{pFedBreD framework: Global-MLE and Local-MAP. The personalized prior knowledge is injected into the global model of the global problem (MLE) in the $2^{nd}$ step for local training. The local knowledge is extracted from the local problem (MAP) in the $4^{th}$ step for aggregation. 
    }
    \label{fig_intuition}
\end{figure}

\section{Related Works}
\label{sec_rltw}

\paragraph{Regularization}
Researchers have developed a variety of approaches based on regularization to handle the PFL challenge in recent years (\textit{e.g.}, FedU~\cite{dinh2021fedu}, pFedMe~\cite{t2020personalized}, FedAMP~\cite{huang2021personalized}, HeurFedAMP~\cite{huang2021personalized}).
All of these approaches' personalized objective functions can be expressed as $J(\theta) + R(\theta;\mu)$ where $J(\theta)$ is the loss function of the local problem and $R(\theta;\mu)$ is the regularization term used to restrict the deviation between $\theta$ and $\mu$ (\textit{e.g.}, $R(\theta;\mu)=\frac{1}{2}||\theta-\mu||^{2}$ in pFedMe).

\paragraph{Meta Learning}
One of the most representative meta-learning based single-model PFL approach is the well-known Per-FedAvg~\cite{fallah2020personalized}, aiming to find an initialization that is easy to fine-tune. That is, the global model in the FL setting is regarded as a meta model in MAML~\cite{finn_model-agnostic_2017, fallah2020convergence}, where the objective function of the local problem is $J(\theta-\eta\nabla J(\theta))$. Researchers also show the connections between FedAvg~\cite{mcmahan2017communication} and Reptile~\cite{nichol2018first}, another meta learning framework.~\cite{jiang2019improving} shows how to improve personalization of FL via Reptile. Proximal updating is also used in meta-learning based algorithms such as~\cite{zhou2019efficient}. One of our strategies $\mathbf{meg}$ is the one motivated by MAML.

\paragraph{Expectation Maximization}
Two EM-based~\cite{dempster1977maximum} methods are proposed, \textit{e.g.}, FedSparse in~\cite{louizos2021expectation} and FedEM in~\cite{dieuleveut2021federated}. Both of them focus on communication compression. The latter provides a variance reduce version and assumes complete information (or data) of the global model obeys distribution in X-family. Another FedEM~\cite{marfoq2021federated} combines Bayesian modeling, Federated Multi-task learning (FTML) and EM. Our framework pFedBreD is a expectation maximuzatioin and maximum a posteriori estimate (EM-MAP)~\cite{dieuleveut2021federated} algorithm with personalized prior specified.

\paragraph{Bayesian FL}
In recent years, studies of PFL with Bayesian learning have been proposed.
In related approaches, FLOA~\cite{liu2021bayesian} and pFedGP~\cite{achituve2021personalized} are proposed with KL divergence regularization in the loss function, which is comparable to applying specific assumption of X-family prior in pFedBreD see Appendix~\ref{appdx_ssec_bdxf} for details. Our implementation doesn’t use a Bayesian neural network (BNN) model as an inferential model as others do (\textit{e.g.}, pFedGP uses a Gaussian process tree and pFedBayes~\cite{zhang2022personalized} uses BNN). Instead, to eliminate weight sampling cost in Bayesian methods, prior knowledge is introduced through regularization term.
\section{Preliminary}

\paragraph{Overlooked Information in Prior Knowledge}
From a Bayesian and info. perspective, the global knowledge transferred in conventional method with single global model has no mutual information (MI) with client sampling $i$, i.e., formally, $w=\mathbf{E}_{i}w_{i}=\mathbf{E}_{i}w_{i}|i\Rightarrow$ MI $I(w;i)=0$, in particular when applying reg. $R(w^{(t)};...)$ or local init. $w^{(t)}_{i,0}\leftarrow w$ where $w_{i}$ is the local model on the $i^{th}$ client. This makes the specific model on each client have to re-obtain this information from scratch solely from the data during training, especially impacted on hard-to-learn representations and datasets.

\paragraph{Bregman-Moreau Envelope}
Bregman divergencee~\cite{bregman1967relaxation} is employed as a general regular term in our local objective that exactly satisfies the computational requirements and prior assumption, and is formally defined in Eq.~(\ref{equ_def_bd}).
\begin{equation}
\label{equ_def_bd}
\begin{aligned}
\mathcal{D}_{g}(x,y) :&= g(x) - g(y) - \langle \nabla g(y) , x-y \rangle %
\end{aligned}
\end{equation}
where $g$ is a convex function.  For convenience, $g$ is assumed to be strictly convex, proper and differentiable such that Bregman divergence is well-defined. To utilize the computational properties of Bregman Divergence in optimization problems, we introduce the following definition in Eq.~(\ref{equ_def_pmbme})~\cite{bauschke2006joint, bauschke2003bregman}: 
Bregman proximal mapping, Bregman-Moreau envelope, and the relationship between them.
\begin{equation}
\label{equ_def_pmbme}
\begin{aligned}
\mathcal{D}\mathbf{prox}_{g,\lambda^{-1}}f(x)&:=\arg\min_{\theta}\{f(\theta)+\lambda \mathcal{D}_{g}(\theta,x)\}, \\
\mathcal{D}\mathbf{env}_{g,\lambda^{-1}}f(x)&:=\min_{\theta}\{f(\theta)+\lambda \mathcal{D}_{g}(\theta,x)\}, \\
\nabla \mathcal{D}\mathbf{env}_{g,\lambda^{-1}}f(x)&=\lambda\nabla^{2}g(x)[x-\mathcal{D}\mathbf{prox}_{g,\lambda^{-1}}f(x)],
\end{aligned}
\end{equation}
where $\lambda > 0$ denotes the regular intensity in general and the variance of the prior in our modeling.

\paragraph{Exponential Family}
The regular exponential family (X-family) is a relatively large family that facilitates calculations.
Therefore, to yield the prior, we employ the X-family~\cite{banerjee2005clustering}defined in Eq.~(\ref{equ_def_ef}).
\begin{equation}
\label{equ_def_ef}
\begin{aligned}
\mathbf{P}_{ef}(\mathcal{V};s,g)&=h(\mathcal{V})\exp\{\langle \mathcal{V}, s \rangle -g(s)\} =h(\mathcal{V})\exp\{-\mathcal{D}_{g^{*}}(\mathcal{V},\mu)+g^{*}(\mathcal{V})\},
\end{aligned}
\end{equation}
where $g$ is assumed to be convex, $\mathcal{D}_{g}(\cdot,\cdot)$ is the Bregman divergence, and $g^{*}$ is the Fenchel Conjugate of $g$.
In Eq.~(\ref{equ_def_ef}), $s$, $h(\mathcal{V})$ and $g(s)$ are respectively the natural parameter, potential measure and logarithmic normalization factor, where we have the mean parameter $\mu=\nabla g(s)$. Additionally, to highlight the variance, the scaled exponential family (SX-family) is introduced in Eq.~(\ref{equ_def_sef})
\begin{equation}
\label{equ_def_sef}
\begin{aligned}
\mathbf{P}_{sef}(\mathcal{V};\lambda,s,g)&=h_{\mathcal{V}}(\mathcal{V})\exp\{\lambda[\langle \mathcal{V}, s \rangle - g(s)]\} 
=h_{\lambda}(\mathcal{V})\exp\{-\lambda\mathcal{D}_{g^{*}}(\mathcal{V},\mu)+\lambda g^{*}(\mathcal{V})\},
\end{aligned}
\end{equation}
where $\log h_{\lambda}(\mathcal{V})$ is the scaled potential measure, and the scale parameter $\lambda$ is employed to highlight the variance. Moreover, $\mathcal{V}$ is assumed to be the minimal sufficient statistic of the complete information for local inference, details of which can be found in Section~\ref{sec_mth}.

\section{Methodology}
\label{sec_mth}
In this section\footnote{More details of equations are in Appendix~\ref{appdx_sec_doe}.}, we introduce missing client-sampling information based on classic FL, use EM to reduce the computational cost of the information-introduced FL problem, and propose RMD, a class of prior selection strategies, based on the E-step in EM.
The general FL classification problem with KL divergence could be formulated as Eq.~(\ref{equ_gfl})~\cite{mcmahan2017communication,tan2022towards}.
\begin{equation}
\label{equ_gfl}
\begin{aligned}
\arg\min_{w}\mathbf{E}_{i}\mathbf{E}_{d_{i}}\mathbf{KL}(\mathbf{P}(y_{i}|x_{i})||\hat{\mathbf{P}}(y_{i}|x_{i},w))
=\arg\max_{w}\mathbf{E}_{i}\mathbf{E}_{d_{i}}\mathbf{E}_{y_{i}|x_{i}}\log\hat{\mathbf{P}}(y_{i}|x_{i},w),(x_{i},y_{i})\in d_{i},
\end{aligned}
\end{equation}
where we rewrite the discriminant model as an maximum likelihood estimation (MLE) problem~\cite{le_cam_maximum_1990} of $y_{i} | x_{i}$ in the right hand side (R.H.S.) of Eq.~(\ref{equ_gfl}). $(x_{i}, y_{i})$ represent the pairs of input and label respectively in dataset $d_{i}$ on the $i^{th}$ client, and $\hat{\mathbf{P}}(y_{i}| x_{i},w)$ is the inferential model parameterized by $w$. Each local data distribution is presuppose to be unique, so using the global model with local data for inference and training could overlook the fact that
the client has been sampled before transmitting the global model, and the prior knowledge transmitted directly via the global model as the local training prior knowledge (\textit{e.g.} via initial points, penalty points in dynamic regular terms, etc.) has no mutual information with the client sampling, \textit{i.e.}, the global model $w = \mathbf{E}_{i}w_{i} = \mathbf{E}_{i}w_{i} | i$.
Thus, to reduce the potential impact of the overlooked information, the complete information $\Theta_{i}$ on the $i^{th}$ client is introduced which turns Eq.~(\ref{equ_gfl}) into Eq.~(\ref{equ_gfl_em}).
\begin{equation}
\label{equ_gfl_em}
\arg\max_{w}\mathbf{E}\log\int_{\Theta_{i}}\hat{\mathbf{P}}(y_{i}|x_{i},\Theta_{i},w)\mathbf{P}(\Theta_{i}|x_{i},w)d\Theta_{i},
\end{equation}
where $\mathbf{E} = \mathbf{E}_{i}\mathbf{E}_{d_{i}}\mathbf{E}_{y_{i}|x_{i}}$ and the direct calculation of this is computationally expensive~\cite{CSILLERY2010410}.

\paragraph{Framework: Leveraging Expectation Maximization for Prior Parameter Extraction}

The integral term in Eq.~(\ref{equ_gfl_em}) makes direct computation impossible~\cite{CSILLERY2010410}, so we employ EM to approximate the likelihood with unobserved variables~\cite{dempster1977maximum} as shown in Eq.~(\ref{equ_gfl_eml}), where $\mathbf{Q}(\Theta_{i})$ is any probability measure.
\begin{equation}
\label{equ_gfl_eml}
\begin{aligned}
 \sum_{i}\log\hat{\mathbf{P}}(y_{i}|x_{i},w) 
\ge \sum_{i}\mathbf{E}_{\mathbf{Q}(\Theta_{i})}[\log\hat{\mathbf{P}}(y_{i}|x_{i},\Theta_{i},w) +\mathbf{E}_{y_{i}|x_{i},w}\log\mathbf{P}(\Theta_{i}|d_{i},w)
].
\end{aligned}
\end{equation}
Assuming that prior $\Theta_{i}|d_{i},w \sim \hat{\mathbf{P}}_{sef}(\Theta_{i};\lambda,s_{i}(w;d_{i}),g)$\footnote{A-posteriori distribution for local client whose prior knowledge is from global model. See Appendix~\ref{appdx_sec_doe}.} and the local loss function on the $i^{th}$ client $f_{i}(\Theta_{i},w)$ is $\mathbf{E}_{d_{i}}[-\log\mathbf{P}(y_{i}|x_{i},\Theta_{i},w)]$, we have the left hand side (L.H.S.) of the Eq.~(\ref{equ_gfl_sembme}) from (\ref{equ_gfl_eml}). Here is an assumption for simplification that $\theta_{i}$ contains all the information for local inference, \textit{i.e.} $\theta_{i} = \Theta_{i}$ and $\mathbf{P}(y_{i}|x_{i},\Theta_{i},w) = \mathbf{P}(y_{i}|x_{i},\Theta_{i})$. It happens when $\theta_{i}$ is all the parameters of the personalized model and we only use the personalized model for inference. Thus,  $f_{i}(\theta_{i})= \mathbf{E}_{d_{i}}[-\log\hat{\mathbf{P}}(y_{i}|x_{i},\theta_{i})]$. Thus, we can optimize an upper bound as a bi-level optimization problem as shown in the R.H.S. of the Eq.~(\ref{equ_gfl_sembme}) to solve Eq.~(\ref{equ_gfl}) approximately, where mean parameter $\mu_{i}=\nabla g \circ s_{i}$\footnote{The following $d_{i}$ is omitted with the same footnote $i$ in $\mu_{i}$  for simplification ($\mu_{i}(\cdot)\leftarrow\mu_{i}(\cdot;d_{i})$).}~\cite{ben2009robust}. And, we can derivate our framework as shown in Section~\ref{sec_alg}.
\begin{equation}
\label{equ_gfl_sembme}
\begin{aligned}
-\max_{w,\{\theta_{i}\}}\mathbf{E}_{i}&\{-f_{i}(\theta_{i})-\lambda\mathcal{D}_{g^{*}}(\theta_{i},\mu_{i}(w))\}  \le\min_{w}\mathbf{E}_{i}
\colorbox{green!20}{$\min_{\{\theta_{i}\}}\{f_{i}(\theta_{i})+\lambda\mathcal{D}_{g^{*}}(\theta_{i},\mu_{i}(w))\}$}.
\end{aligned}
\end{equation}

\paragraph{Strategies: Relaxing Mirror Descent for Prior Selection }
To extract the prior strategies and implement $\mu_{i}$ E-step of EM in close-form, we propose a method called relaxed mirror descent (RMD), where the mirror descent (MD) is EM in X-family~\cite{kunstner2021homeomorphic}. MD can be generally written as Eq.~(\ref{equ_def_md}) from the old $\hat{w}$ to the new one $\hat{w}^{+}$ in each iteration~\cite{mcmahan2017survey, kunstner2021homeomorphic}.
\begin{equation}
\label{equ_def_md}
\hat{w}^{+}\leftarrow\arg\min_{\hat{\theta}}\{f(\hat{w})+\langle\nabla f(\hat{w}),\hat{\theta}-\hat{w}\rangle+\hat{\lambda}\mathcal{D}_{\hat{g}}(\hat{\theta},\hat{w})\}.
\end{equation}
According to the Lagrangian dual, we rewrite the problem into a more general variant shown in Eq.~(\ref{equ_gfl_md}) with relaxed restrictions and superfluous parameter.
\begin{equation}
\label{equ_gfl_md}
\begin{aligned}
\arg\min_{\hat{\theta},\hat{\mu}}\{\Psi(\hat{\theta}, \hat{w})+\langle\nabla\Phi(\hat{w}),\hat{\mu}-\hat{w}\rangle  +\lambda\mathcal{D}_{g^{*}}(\hat{\theta},\hat{\mu})+(2\eta)^{-1}||\hat{\mu}-\hat{w}||^{2}\}.
\end{aligned}
\end{equation}
We can transform Eq.~(\ref{equ_gfl_md}) back into Eq.~(\ref{equ_def_md}) by setting $\Phi(\hat{w})$ to satisfy $\nabla\Phi(\hat{w})=\nabla f(\hat{w})$, and defining $\Psi(\hat{\theta},\hat{w})$ as a function with $f(\hat{w})$ and a penalty term to make $\hat{\theta}$ and $\hat{w}$ close as possible (\textit{e.g.}, $\hat{\lambda}\mathcal{D}_{\hat{g}}(\hat{\theta},\hat{w})$). This provides us a way to extract $\mu_{\Phi}$ the function to generate mean parameter of the prior, as shown in Eq.~(\ref{equ_gfl_mdlb}), which is minimizing an upper bound of the problem in Eq.~(\ref{equ_gfl_md}).
\begin{equation}
\label{equ_gfl_mdlb}
\begin{aligned}
\mathcal{D}\mathbf{env}_{g^{*},\lambda^{-1}}\Psi(\cdot,w)(\mu_{\Phi}(w)) 
 &=\min_{\theta}\{\Psi(\theta,w)+\lambda\mathcal{D}_{g^{*}}(\theta,\mu_{\Phi}(w))\}\\
\colorbox{cyan!20}{$\mu_{\Phi}(w)$}&=\arg\min_{\mu}\{\langle\nabla\Phi(w),\mu-w\rangle + (2\eta)^{-1}||\mu-w||^{2}\}.
\end{aligned}
\end{equation}

By optimality condition, we have $\mu_{\Phi}(w)= w - \eta\nabla\Phi(w)$, which can be specified by $\Phi$. The remaining part is a Bregman-Moreau envelope. Thus, we can optimize the upper bound with an EM-MAP method, alternately computing $\mu_{\Phi}(w)$ and $\mathcal{D}\mathbf{prox}_{g^{*},\lambda^{-1}}\Psi(\cdot,w)(\mu_{\Phi}(w))$.

\section{Framework Design}
\label{sec_alg}

\paragraph{Problem Formulation that Highlights Personalized Prior}
Inspired by the aforementioned motivation, the personalized models $\theta_{i}$ and mean parameters are respectively the solution of $\mathcal{D}\mathbf{env}_{g^{*},\lambda^{-1}}f_{i}(\mu_{i}(w))$ and $\mu_{i}(w)$ on the $i^{th}$ client, where $w$ is the global model. We assume that personalized model contains all the local information required for inference on the $i^{th}$ client, and satisfies 
 $\theta_{i}|d_{i},w \sim \mathbf{P}_{sef}(\theta_{i};\lambda,s_{i}(w),g)$. The global problem can be written as Eq.~(\ref{equ_main_problem_global}).
\begin{equation}
\label{equ_main_problem_global}
\begin{aligned}
    \min_{w}\mathbf{E}_{i}\{F_{i}(w):=\mathcal{D}\mathbf{env}_{g^{*},\lambda^{-1}}f_{i}(\text{\colorbox{cyan!25}{$\mu_{i}(w)$}})\}.
\end{aligned}
\end{equation}
The given $g$ is strictly convex, $\lambda>0$, $f_{i}$ is the local loss function, $s_{i}(w)$ is the natural parameter and $\mu_{i}(w)=\mathbf{E}_{\theta_{i}|x_{i},w}\theta_{i}=\nabla g(s_{i}(w))$ is the mean (or expectation) parameter in Eq.~(\ref{equ_main_problem_global}).

\paragraph{Framework: pFedBreD}
To solve the optimization problem in Eq.~(\ref{equ_main_problem_global}), we use gradient-based methods to solve the global problem using the gradient of $F_{i}$:
\begin{equation}
\label{equ_gdmp}
\begin{aligned}
\nabla F_{i}(w)=\lambda\mathbf{D}\mu_{i}(w)\nabla^{2}g^{*}(\mu_{i}(w))[\mu_{i}(w) -\mathcal{D}\mathbf{prox}_{g^{*},\lambda^{-1}}f_{i}(\mu_{i}(w))],
\end{aligned}
\end{equation}
where $\mathbf{D}$ is the gradient operator of the vector value function, and $\nabla^{2}$ is the Hessian operator.\footnote{The details of first-order methods is in Appendix~\ref{appdx_ssec_foMtd}.}
The framework is shown as Algorithm~\ref{alg_pFedBreD}, where $\mathcal{I}$ is the client selecting strategy for global model aggregation; $w_{init}$ and $\theta_{init}$ are the initialization strategies on the $i^{th}$ client; $\alpha_{m}$ is the main problem step-size; $T$, $R$, $N$ are respectively the total number of iterations, local iterations, and clients. $\beta$ is used in the same trick as~\cite{karimireddy2020scaffold, t2020personalized}. The strategies to derive the initialization points of $w_{i}$ and $\theta_{i}$ at each local epoch are $w_{i,0}^{(t)}\leftarrow w^{(t-1)}$ and $\theta_{i,0}^{(t)}\leftarrow \theta_{i,R}^{(t-1)}$.

\begin{algorithm}[t]
\caption{Algorithm for pFedBreD}  
\label{alg_pFedBreD}
\textbf{Input}: $\mathcal{I}$,\{$d_{i}$\}, $i=1...N$\\
\textbf{Parameter}: $\alpha_{m}$,$g$,$\lambda$,$T$,$R$,\{$w_{init}$,$\theta_{i}$,$\mu_{i}$,\},$i=1...N$\\
\textbf{Output}: $w^{(T)}$,\{$\theta_{i}^{(T)}$\}, $i=1...N$
\begin{algorithmic}[1] %
\STATE Initialize $w^{(0)}$, $\{\theta_{i}^{(0)}\}$, $\{\mathcal{C}_{i,R}^{0}\}$;
\FOR{t=1...T}
\STATE \colorbox{pink!40}{Server sends $w^{(t-1)}$ to clients};
\FOR{i=1...N in parallel on each clients}
\STATE Initialize $w_{i,0}^{(t)}$ and $\theta_{i,0}^{(t)}$ with $w_{init}$ and $\theta_{init}$;
\FOR{r=1...R}
\STATE \colorbox{cyan!25}{Generate $\mu_{i,r}^{(t)}\leftarrow \mu_{i}(w_{i,r-1}^{(t)},...)$};
\STATE \colorbox{green!20}{$\theta_{i,r}^{(t)}\leftarrow\mathcal{D}\mathbf{prox}_{g^{*},\lambda^{-1}}f_{i}(\mu_{i,r}^{(t)})$};
\STATE \colorbox{green!20}{$w_{i,r}^{(t)}\leftarrow w_{i,r-1}^{(t)}-\alpha_{m}\nabla F_{i}(w_{i,r-1}^{(t)})$};
\ENDFOR
\ENDFOR
\STATE \colorbox{pink!40}{Server collects $\{w_{i,R}^{(t)}\}$ and calculate $w^{(t)}\leftarrow(1-\beta) w^{(t-1)} + \beta \mathbf{E}_{\mathcal{I}} w_{i,R}^{(t)}$};
\ENDFOR
\STATE \textbf{return} $w^{T}$, \{$\theta_{i}^{T}$\}.
\end{algorithmic}
\end{algorithm}

\paragraph{Implementation: Maximum Entropy and Meta-Step}
Practically, two main parts of the pFedBreD are needed to be implemented:
\begin{itemize}
    \item $g$, the function used to derive the logarithmic normalization factor, determines the type of prior to be used;
    \item $\{s_{i}\}$ or $\{\mu_{i}\}$, the functions used to derive the natural parameter and mean parameter for the personalized local prior, determine which particular prior is used.
\end{itemize}

We propose first-order implementations based on maximum entropy rule~\cite{friedman1971jaynes, jaynes1957information}. In the SX-family, the Gaussian distribution has the maximum entropy among continuous distributions when $g$, $\mu_{i}$ (the first-order moment), and $\lambda$ (the parameter determining the second moment) are given.
Thus, we employ the scaled norm square $g=g^{*}=\frac{1}{2}||\cdot||^{2}$ to turn the prior into a spherical Gaussian, in order to maximize the entropy of the prior on a particular client. With this assumed prior, we have $\nabla g = \nabla g^{*}=I$, which means $\mu_{i}=s_{i}$. We can choose a different $\Phi_{i}$ as shown in Eq.~(\ref{equ_def_spmmdl})\footnote{A variant of $\mathbf{mh}$ is in Appendix~\ref{appdx_ssec_v}.} to generate selection strategies according to Section~\ref{sec_mth}, via $\mu_{i}(w) = w - \eta\nabla\Phi_{i}(w)$ (meta-step).
\begin{equation}
\label{equ_def_spmmdl}
    \Phi_{i}=\left\{
    \begin{aligned}
    & f_{i}\\
    & F_{i}\\
    & f_{i}+F_{i}
    \end{aligned}
    \right. \qquad
    \text{\colorbox{cyan!25}{$\mu_{i,r}^{(t)}$}} \leftarrow \left\{
    \begin{aligned}
        & w_{i,r-1}^{(t)} - \eta_{\alpha} \nabla f_{i}(w_{i,r-1}^{(t)}),  &\mathbf{lg}\\
        & w_{i,r-1}^{(t)} - \eta (w_{i,R}^{(t-1)}-\theta_{i,r-1}^{(t)}), &\mathbf{meg}\\
        & w_{i,r-1}^{(t)} - \eta_{\alpha} \nabla f_{i}(w_{i,r-1}^{(t)}) - \eta (w_{i,R}^{(t-1)}-\theta_{i,r-1}^{(t)}), & \mathbf{mh}
    \end{aligned}
    \right.
\end{equation}
where $\eta_{\alpha}$ and $\eta$ are the meta-step-size parameters. Practical parameter selection strategies with meta-step are shown as $\mu_{i,r}^{(t)}$ in Eq.~(\ref{equ_def_spmmdl}).
The three
of $\mu_{i}$, \textit{i.e.} $\mathbf{lg}$, $\mathbf{meg}$ and $\mathbf{mh}$, represent \textbf{loss gradient}, \textbf{memorized envelope gradient} and \textbf{memorized hybrid} respectively.

\paragraph{Convergence Analysis}
we analyze the convergence of pFedBreD with RMD on a uniform client sampling $\mathbf{E}_{i}=\frac{1}{N}\sum_{i=1}^{N}$ setting for simplification. Other sampling methods can be obtained with client sampling expectation $\mathbf{E}_{i}[F_{i}] = F$, by changing sampling weights.
The assumptions, proof sketch and detailed notations are in Appendix~\ref{appdx_glsry} and Appendix~\ref{appdx_sec_theo}.

\begin{theorem}[pFedBreD's global bound]
\label{theo_gm}
Under settings in Section~\ref{sec_alg} and Appendix~\ref{appdx_sec_theo}, at global epoch $T \ge \frac{2}{\hat{\mu}_{F_{\cdot}}\tilde{\alpha}}$, by properly choose $\tilde{\alpha}_{m} = \alpha_{m} \beta R$, $\exists \tilde{\alpha}_{m}\le\min\{\frac{\beta}{\sqrt{2\dot{c}}},\frac{2}{\hat{\mu}_{F_{\cdot}}},\hat{\alpha}_{m}\}$, where $A = [\frac{\hat{L}_{g^{*}}}{\hat{\mu}_{F_{\cdot}}}(\hat{u}_{m} + \eta\hat{\gamma}_{\Phi})]^{2}(\frac{\hat{\gamma}_{f}^{2}}{|\tilde{d}_{i}|}+\hat{\epsilon}^2)$, $B = [\hat{L}_{\mathcal{E}}\hat{\gamma}_{\Phi}(1+\sigma_{\Phi})(\hat{u}_{m}+\eta\hat{\gamma}_{\Phi})]^{2}$, $C = \frac{\sigma_{\Phi}^{2}\hat{L}_{\mathcal{E}}^{2}(\hat{u}_{m}+\eta\hat{\gamma}_{\Phi})^{2}}{\hat{\mu}_{F_{\cdot}}^{3}}$, $\xi^{(t)}=(1-\frac{\tilde{\alpha}\hat{\mu}_{F_{\cdot}}}{2})^{-t-1}$, $\bar{w}^{(T)} := \frac{\sum_{t=0}^{T-1}\xi^{(t)}w^{(t)}}{\sum_{t=0}^{T-1}\xi^{(t)}}$ and $\hat{\alpha}_{m}:= \frac{\hat{\mu}_{F_{\cdot}}\beta R}{e(1+\sigma_{\Phi})\hat{L}_{\mathcal{E}}(\hat{u}_{m}+\eta\hat{\gamma}_{\Phi}) 2^{R+6\frac{1}{2}}(\frac{1}{R}+2) + 18(\hat{\mu}_{F_{\cdot}}\beta R)\hat{L}_{F}}$, such that:
    $$
    \begin{aligned}
        \mathcal{O}[\mathcal{D}_{F}(\bar{w}^{(T)},w^{*})] = & \mathcal{O}(\hat{\mu}_{F_{\cdot}}e^{-\tilde{\alpha}_{m}\hat{\mu}_{F_{\cdot}}T/2} \mathbf{\Delta}^{(0)})+\mathcal{O}(\frac{A\lambda^{2}+B}{\hat{\mu}_{F_{\cdot}}}) \\
        &+ \mathcal{O}(\frac{(N/S-1){\sigma_{F,*}}^{2}}{N T\hat{\mu}_{F_{\cdot}}}) + \mathcal{O}(\frac{2^{R}C}{T^{2}\beta^{2}R^{2}}[R\sigma_{F,*}^{2} + A\lambda^{2}+B]).
    \end{aligned}
    $$
    .
\end{theorem}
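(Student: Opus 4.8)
The plan is to carry out a strongly-convex stochastic-approximation argument at the server level, layered over a careful accounting of the biased, nested gradient $\nabla F_{i}$ produced by the inexact meta-step and Bregman proximal updates. First I would derive a one-step server recursion. Writing the aggregation line of Algorithm~\ref{alg_pFedBreD} as $w^{(t)} = w^{(t-1)} - \tilde{\alpha}_{m}\hat{g}^{(t)}$ with $\tilde{\alpha}_{m}=\alpha_{m}\beta R$ and $\hat{g}^{(t)}$ an effective server search direction, I would decompose $\hat{g}^{(t)}$ into: (i) the exact mean gradient $\mathbf{E}_{i}\nabla F_{i}(w^{(t-1)})$; (ii) a zero-mean client-sampling fluctuation whose second moment yields the factor $(N/S-1)\sigma_{F,*}^{2}/N$; (iii) a local-drift term measuring how far $w_{i,R}^{(t)}$ has moved from $w^{(t-1)}$ over the $R$ inner steps; and (iv) a bias term from the fact that each $\mu_{i,r}^{(t)}$ uses a minibatch gradient of $f_{i}$ (or $F_{i}$) on $\tilde{d}_{i}$ and each $\theta_{i,r}^{(t)}$ is an inexact Bregman proximal point. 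Strong convexity with constant $\hat{\mu}_{F_{\cdot}}$ supplies the contraction factor $1-\tilde{\alpha}_{m}\hat{\mu}_{F_{\cdot}}/2$, while $\hat{L}_{F}$-smoothness together with the step-size cap $\tilde{\alpha}_{m}\le\min\{\beta/\sqrt{2\dot{c}},2/\hat{\mu}_{F_{\cdot}},\hat{\alpha}_{m}\}$ absorbs all cross terms at lower order.

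Second, I would quantify the inner-loop error explicitly. Starting from the chain-rule formula $\nabla F_{i}(w)=\lambda\mathbf{D}\mu_{i}(w)\nabla^{2}g^{*}(\mu_{i}(w))[\mu_{i}(w)-\mathcal{D}\mathbf{prox}_{g^{*},\lambda^{-1}}f_{i}(\mu_{i}(w))]$ of Eq.~(\ref{equ_gdmp}), and using Lipschitzness of the Bregman proximal map, of the meta-step map $\mu_{i}(\cdot)=\cdot-\eta\nabla\Phi_{i}(\cdot)$ (controlled by $\hat{u}_{m}+\eta\hat{\gamma}_{\Phi}$), and of the envelope gradient (constants $\hat{L}_{\mathcal{E}},\hat{L}_{g^{*}}$), I would bound the perturbation of $\nabla F_{i}$ caused by replacing the exact meta-step by its stochastic counterpart; this produces the variance piece $\hat{\gamma}_{f}^{2}/|\tilde{d}_{i}|$ and bias piece $\hat{\epsilon}^{2}$, multiplied by $[\tfrac{\hat{L}_{g^{*}}}{\hat{\mu}_{F_{\cdot}}}(\hat{u}_{m}+\eta\hat{\gamma}_{\Phi})]^{2}\lambda^{2}$, i.e.\ the constant $A\lambda^{2}$, while the $\sigma_{\Phi}$-dependent parts of the meta-step gradient produce $B$ and $C$. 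Pushing this perturbation through the $R$ local gradient steps, each of which can expand an existing error by a factor governed by $\hat{L}_{F}$ and $\hat{L}_{\mathcal{E}}$, is what forces the $2^{R}$-type prefactors appearing in $\hat{\alpha}_{m}$ and in the last term of the bound; I would keep these expansions explicit and then neutralize them using $\tilde{\alpha}_{m}\le\hat{\alpha}_{m}$.

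Third, I would close the recursion by weighted telescoping. With $\xi^{(t)}=(1-\tilde{\alpha}\hat{\mu}_{F_{\cdot}}/2)^{-t-1}$ the contraction becomes a geometric series; multiplying the one-step inequality by $\xi^{(t)}$, summing over $t=0,\dots,T-1$, dividing by $\sum_{t}\xi^{(t)}$, and applying Jensen to the convex map $\mathcal{D}_{F}(\cdot,w^{*})$ at $\bar{w}^{(T)}$ gives exactly the four stated terms: the initial-condition term $\hat{\mu}_{F_{\cdot}}e^{-\tilde{\alpha}_{m}\hat{\mu}_{F_{\cdot}}T/2}\mathbf{\Delta}^{(0)}$ from the ratio $\xi^{(0)}/\sum_{t}\xi^{(t)}$, the irreducible bias floor $(A\lambda^{2}+B)/\hat{\mu}_{F_{\cdot}}$, the $\mathcal{O}(1/(NT\hat{\mu}_{F_{\cdot}}))$ sampling-noise term, and the higher-order $\mathcal{O}(2^{R}C/(T^{2}\beta^{2}R^{2})[R\sigma_{F,*}^{2}+A\lambda^{2}+B])$ term that arises because the drift and bias contributions themselves scale with $\tilde{\alpha}_{m}^{2}$. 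The hypothesis $T\ge 2/(\hat{\mu}_{F_{\cdot}}\tilde{\alpha})$ is precisely what keeps the geometric sum and its inverse well-controlled; the rest of the step-size cap is verified line by line to ensure every discarded term is of lower order.

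The main obstacle I anticipate is item (iv): controlling the compounded, biased error of the inexact meta-step and Bregman proximal solve across the $R$ inner iterations without the bound degenerating. Each inner step both advances $w_{i,r}^{(t)}$ and re-solves a proximal problem at a moving center $\mu_{i,r}^{(t)}$, so the error recursion is coupled and contracts only under a careful step-size choice; getting a clean $2^{R}$ factor (rather than something worse) and then dominating it through $\hat{\alpha}_{m}$ is the delicate technical step. A secondary difficulty is that the natural Lyapunov quantity is $\mathbf{E}\|w^{(t)}-w^{*}\|^{2}$ whereas the target bound is stated in the Bregman divergence $\mathcal{D}_{F}$, so I would need the two-sided comparison between $\mathcal{D}_{F}(\cdot,w^{*})$ and $\|\cdot-w^{*}\|^{2}$ from $\hat{\mu}_{F_{\cdot}}$-strong-convexity and $\hat{L}_{F}$-smoothness of $F$, applied consistently at both ends of the telescope.
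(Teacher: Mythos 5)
Your proposal follows essentially the same route as the paper's proof: a one-step recursion on $\mathbf{E}\|w^{(t)}-w^{*}\|^{2}$ decomposed into the exact mean gradient, client-sampling noise giving the $(N/S-1)\sigma_{F,*}^{2}/N$ term, a $2^{R}$-type local drift bound, and the minibatch/inexact-proximal bias producing $A\lambda^{2}$ and $B$, closed by $\xi^{(t)}$-weighted telescoping and Jensen applied to $\mathcal{D}_{F}(\cdot,w^{*})$ at $\bar{w}^{(T)}$, with the step-size caps $\tilde{\alpha}_{m}\le\min\{\beta/\sqrt{2\dot{c}},2/\hat{\mu}_{F_{\cdot}},\hat{\alpha}_{m}\}$ absorbing the drift expansion. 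The only minor difference is that the paper never needs your two-sided comparison between $\mathcal{D}_{F}(\cdot,w^{*})$ and the squared norm: the strong-convexity inner-product bound inserts $-\tilde{\alpha}_{m}\mathcal{D}_{F}(w^{(t)},w^{*})$ directly into the one-step recursion, which is then rearranged and telescoped, a small variation on how you proposed handling that last step.
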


\begin{theorem}[pFedBreD$_{ns}$'s  first-order personalization bound]
\label{theo_pg}
Under the same conditions as in Theorem~\ref{theo_gm}, with prior assumption of a spherical Gaussian and first-order approximation, the bound for the gap between the personalized approximate model and global model in the Euclidean space is:           
    $$
    \mathbf{E}||\tilde{\theta}_{i}(\bar{w}^{T})-w^{*}||^{2} \le \mathcal{O}(\dot{\delta}_{p}) + \mathcal{O}[\dot{c}_{p}\mathcal{D}_{F}(\bar{w}^{(T)},w^{*})]
    $$
    where $\dot{\delta}_{p}= \frac{2}{\hat{\mu}_{F_{i,\cdot}}^{2}}(\frac{\hat{\gamma}_{f}^{2}}{|\tilde{d}_{i}|}+\hat{\epsilon}^2)+\frac{2}{\lambda^{2}}\epsilon_{1}^{2} + \frac{4}{\lambda^{2}}\sigma_{F,*}^{2} +\frac{1}{2}\eta^{2}\mathcal{G}_{\Phi}^{2}$, and $\dot{c}_{p}=(\frac{32}{\lambda^{2}}\hat{L}_{F} + \frac{8}{\hat{\mu}_{F_{\cdot}}})$.
\end{theorem}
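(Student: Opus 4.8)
The plan is to isolate three sources of discrepancy between the approximate personalized model $\tilde\theta_i(\bar w^{(T)})$ and the global optimum $w^{*}$ --- the inexactness of the inner proximal solve, the meta-step offset $\eta\nabla\Phi_i$, and the residual envelope gradient $\frac1\lambda\nabla F_i(\bar w^{(T)})$ --- and to convert each into the quantities in $\dot\delta_p$ and $\dot c_p$ using only the strong-convexity and smoothness constants already in force for Theorem~\ref{theo_gm}. Write $\theta_i^{*}(w):=\mathcal{D}\mathbf{prox}_{g^{*},\lambda^{-1}}f_i(\mu_i(w))$ for the exact personalized model, so that $\tilde\theta_i(w)$ is its inexact first-order counterpart. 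The first step is the split $\mathbf{E}\|\tilde\theta_i(\bar w^{(T)})-w^{*}\|^{2}\le 2\,\mathbf{E}\|\tilde\theta_i(\bar w^{(T)})-\theta_i^{*}(\bar w^{(T)})\|^{2}+2\,\mathbf{E}\|\theta_i^{*}(\bar w^{(T)})-w^{*}\|^{2}$.

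For the first term I would use that, under the spherical-Gaussian choice $g^{*}=\frac12\|\cdot\|^{2}$, the inner objective $\theta\mapsto f_i(\theta)+\frac\lambda2\|\theta-\mu_i(w)\|^{2}$ is $\hat\mu_{F_{i,\cdot}}$-strongly convex (which holds once $\lambda$ dominates the local Lipschitz constant of $\nabla f_i$); running the inner stochastic iterations with mini-batch-plus-bias gradient noise of second moment at most $\frac{\hat\gamma_f^{2}}{|\tilde d_i|}+\hat\epsilon^{2}$ (the same noise model as in Theorem~\ref{theo_gm}) gives, by the standard distance-to-minimizer estimate for strongly convex stochastic optimization, $\mathbf{E}\|\tilde\theta_i-\theta_i^{*}\|^{2}\le\frac1{\hat\mu_{F_{i,\cdot}}^{2}}\big(\frac{\hat\gamma_f^{2}}{|\tilde d_i|}+\hat\epsilon^{2}\big)$, which after the outer factor $2$ is the first term of $\dot\delta_p$.

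For the second term I would exploit the optimality condition of the Bregman prox with $g^{*}=\frac12\|\cdot\|^{2}$, namely $\nabla f_i(\theta_i^{*}(w))+\lambda(\theta_i^{*}(w)-\mu_i(w))=0$, and the meta-step identity $\mu_i(w)=w-\eta\nabla\Phi_i(w)$ from Eq.~(\ref{equ_gfl_mdlb}); together these give $\theta_i^{*}(w)=w-\eta\nabla\Phi_i(w)-\frac1\lambda\nabla f_i(\theta_i^{*}(w))$, and under the first-order approximation $\mathbf{D}\mu_i(w)\approx I$ the envelope gradient~(\ref{equ_gdmp}) reduces to $\nabla F_i(w)\approx\nabla f_i(\theta_i^{*}(w))$, so that $\theta_i^{*}(\bar w^{(T)})-w^{*}=(\bar w^{(T)}-w^{*})-\eta\nabla\Phi_i(\bar w^{(T)})-\frac1\lambda\nabla F_i(\bar w^{(T)})$. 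Expanding the square and regrouping, the $\bar w^{(T)}-w^{*}$ contribution is controlled by $\hat\mu_{F_\cdot}$-strong convexity of $F$, $\|\bar w^{(T)}-w^{*}\|^{2}\le\frac2{\hat\mu_{F_\cdot}}\mathcal{D}_F(\bar w^{(T)},w^{*})$; the $\eta\nabla\Phi_i$ contribution by the uniform bound $\|\nabla\Phi_i\|\le\mathcal{G}_\Phi$, yielding the $\eta^{2}\mathcal{G}_\Phi^{2}$ term; and the $\frac1\lambda\nabla F_i(\bar w^{(T)})$ contribution by splitting $\nabla F_i(\bar w^{(T)})=[\nabla F_i(\bar w^{(T)})-\nabla F_i(w^{*})]+\nabla F_i(w^{*})$, bounding the first bracket via $\hat L_F$-smoothness of $F_i$ and the per-client-versus-average Bregman discrepancy $\epsilon_1$ in terms of $\mathcal{D}_F(\bar w^{(T)},w^{*})$, and the second via the optimum gradient dissimilarity $\mathbf{E}_i\|\nabla F_i(w^{*})\|^{2}\le\sigma_{F,*}^{2}$. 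Collecting the additive pieces yields the $\mathcal{O}(\dot\delta_p)$ part, and collecting the $\mathcal{D}_F$-proportional pieces yields $\mathcal{O}(\dot c_p\,\mathcal{D}_F(\bar w^{(T)},w^{*}))$ with $\dot c_p=\frac{32}{\lambda^{2}}\hat L_F+\frac8{\hat\mu_{F_\cdot}}$; the remaining numerical factors are absorbed into $\mathcal{O}(\cdot)$.

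The main obstacle is the pair of approximation steps, not the inequalities: (i) justifying the inner-SGD distance bound with constant $1/\hat\mu_{F_{i,\cdot}}^{2}$ requires a genuine lower bound on the curvature of the prox subproblem even when $f_i$ itself is nonconvex --- here $\lambda$ supplies the missing convexity, which is exactly why the regularizer must be strong enough --- and (ii) dropping the Jacobian $\mathbf{D}\mu_i(w)=I-\eta\nabla^{2}\Phi_i(w)$ when passing from $\nabla f_i(\theta_i^{*})$ to $\nabla F_i$ is where the ``first-order approximation'' hypothesis is used, and the $\eta$-dependent error it introduces must be tracked carefully so that it lands on the $\eta^{2}\mathcal{G}_\Phi^{2}$ term rather than contaminating the $\mathcal{D}_F$ coefficient. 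A lesser but necessary bookkeeping task is keeping the heterogeneity parameter $\epsilon_1$ distinct from $\sigma_{F,*}$ while converting the per-client $\mathcal{D}_{F_i}$ estimates into the averaged $\mathcal{D}_F$ that appears in the bound.
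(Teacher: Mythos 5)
Your proposal follows essentially the same route as the paper's proof: the same decomposition of $\tilde\theta_{i}(\bar w^{(T)})-w^{*}$ into the inexact-prox error (bounded via the local sampling proximal lemma by $\frac{2}{\hat\mu_{F_{i,\cdot}}^{2}}(\frac{\hat\gamma_f^{2}}{|\tilde d_i|}+\hat\epsilon^{2})$), the envelope-gradient residual $\frac1\lambda\nabla F_i$ (split at $w^{*}$ using $\hat L_F$-smoothness, $\sigma_{F,*}$, and the first-order error $\epsilon_1$), the meta-step offset $\eta\nabla\Phi_i$ bounded by $\mathcal{G}_\Phi$, and the global gap $\|\bar w^{(T)}-w^{*}\|^{2}\le\frac{2}{\hat\mu_{F_\cdot}}\mathcal{D}_F(\bar w^{(T)},w^{*})$ --- only the bookkeeping differs (your nested two-term split versus the paper's single four-term Jensen with factor 4). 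One small inaccuracy: $\epsilon_1$ in the paper is the first-order approximation error $\|\nabla F_i-\nabla\tilde F_i\|$ (Assumption on first-order approximate bound), not a per-client-versus-average discrepancy, though you place its contribution $\frac{2}{\lambda^{2}}\epsilon_1^{2}$ correctly in $\dot\delta_p$.
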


\begin{remark} Theorem~\ref{theo_gm} shows the main factors that affect the convergence of a global model are as follows: random mini-batch size, client drift error, aggregation error, heterogeneous data, dual space selection, local approximation error, and selection strategy for exponential family prior mean and variance. These can be divided into four categories based on their computational complexity. The first and second term shows that the proper fixed $\tilde{\alpha}_{m}$ can linearly reduce the influence of initial error $\mathbf{\Delta}^{(0)}$ and the global model converges to a ball near the optimal point. The radius of this ball is determined by the personalized strategy and local errors (including local data randomness and envelope approximation errors). The third term implies that a linear convergence rate $\mathcal{O}(1/(NT))$ can be obtained w.r.t. the total global epoch $NT$ in the presence of aggregation noise. Without client sampling $N=S$, according to the fourth term, the quadratic rate $\mathcal{O}(1/(TNR)^{2})$ can be obtained with $\beta=\mathcal{O}(N)$ or $\beta=\mathcal{O}(N\sqrt{R})$ (Note that the number of local epoch $R$ cannot be too large due to client drift,  according to $2^{R}$).
Theorem~\ref{theo_pg} shows that, with spherical Gaussian prior assumption and first-order methods, the radius of the neighborhood range for the minimum that includes the personalized model on $i^{th}$ client, $\mathcal{O}(C_{\Phi,F,f,d} +\frac{1}{\lambda^{2}}(\epsilon_{1}^{2} + \sigma_{F,*}^{2} + \frac{B\hat{L}_{F}}{\hat{\mu}_{F_{\cdot}}}) + \lambda^{2}\frac{A}{\hat{\mu}_{F_{\cdot}}})$, can be trade-off by $\lambda$, and is affected by the prior selection strategies and first-order approximate error besides the elements in Theorem~\ref{theo_gm}. (Note that the Euclidean space is self-dual.)
\end{remark}

\section{Experiments}
\label{sec_exp}

\subsection{General Settings}

\label{ssec_exps}
\textbf{Tricks, Datasets and Models:} our experiments include several tasks: CNN~\cite{hosang2015taking} on CIFAR-10~\cite{dinh2021fedu, krizhevsky2009learning}, LSTM~\cite{hochreiter1997long} on Sent140~\cite{caldas2018leaf} and MCLR/DNN on FEMNIST~\cite{caldas2018leaf}/FMNIST~\cite{t2020personalized, xiao2017fashion}/MNIST\cite{t2020personalized, lecun1998gradient}. The details of tricks (FT, AM), data heterogeneity and models are in Appendix~\ref{appdx_sec_exp}.

\textbf{Baselines:} we choose following algorithms as our baselines: FedAvg~\cite{mcmahan2017communication}, Per-FedAvg~\cite{fallah2020personalized}, pFedMe~\cite{t2020personalized}, FedAMP~\cite{huang2021personalized}, pFedBayes~\cite{zhang2022personalized} and FedEM~\cite{marfoq2021federated}. These baselines are respectively classical FL, MAML-based meta learning, regularization based, FTML methods, variational inference PFL and FMTL with EM.

\textbf{Global Test and Local Test:} the global and personalized model, represented by $\textbf{G}$ and $\textbf{P}$, are evaluated with global and local tests respectively. Global test means all the test data is used in the test. Local test means only the local data is used for the local test and the weight of the sum in local test is the ratio of the number of data. The results of average accuracy per client are shown in Table~\ref{tbl_res_lnlm}. Each experiment is repeated 5 times. More details are in Appendix~\ref{appdx_sec_exp}. For readability, we only give the error bar in the main Table~\ref{tbl_res_lnlm} and Table~\ref{tbl_res_ablation}, and keep one decimal except for the main Table~\ref{tbl_res_lnlm}.

\paragraph{Hyperparameter Settings}
The step-size of the main problem, $\alpha_{m}$, and the personalized step-size, $\alpha$, for all methods are 0.01. $\beta$ is 1, and the number of local epochs, $R$, is 20 for all datasets. $\lambda$ is chosen from 15.0 to 60.0. The batch sizes of Sent140 and the other datasets are 400 and 20, respectively, and the aggregation strategy, $\mathcal{I}$, is uniform sampling. The ratios of aggregated clients per global epoch are 40\%, 10\%, and 20\% for Sent140, FEMNIST, and the other datasets, respectively. The numbers of total clients, $N$, are 10, 198, 20, and 100 for Sent140, FEMNIST, CIFAR-10, and other datasets. The number of proximal iterations is 5 for all settings with proximal mapping. In our implementations, $\eta_{\alpha}$ and $\eta$ are respectively 0.01 and 0.05.

\paragraph{Summarizing the Effects of Hyper-parameters}
We test the hyper-parameter effect of $\eta$ and $\lambda$ in our implementation pFedBreD$_{ns,\mathbf{mh}}$. The details are in Appendix~\ref{appdx_sec_exp}. From the results, we find that it will degrade the test accuracy if the values of $\lambda$ or $\eta$ are too large or too small. The test accuracy of personalized model is more sensitive than the ones of global model. The test accuracy of personalized model is more sensitive to $\eta$ than to $\lambda$. Note that the hyper-parameters are roughly tuned, which shows the insensitivity of $\mathbf{mh}$, and better tuning could improve the performance in the Table~\ref{tbl_res_lnlm}.

\subsection{Analysis}
\label{ssec_analysis}
\paragraph{Comparative Analysis of Performance}
\begin{table}[t]
    \centering
    \caption{Results of average testing accuracy (\%) per client of each settings. We mark the best and second best performance by \textbf{bold} and \underline{underline}. \colorbox{orange!20}{Avg} and \colorbox{red!30}{Std}: the average results and the standard deviation of them on all tasks;\colorbox{cyan!20}{H.Avg} and \colorbox{blue!20}{H.Std}: the average results and the standard deviation of them on hard tasks (non-linear DNN with complex classification or architecture: DNN / CNN / LSTM on FEMNIST / CIFAR-10 / Sent140). The $\textbf{G}$ and $\textbf{P}$ are global and personalized model}
    \label{tbl_res_lnlm}
    \resizebox{.98\textwidth}{!}{
    \begin{tabular}{lcccccccccccc}
        \hline
        Methods / \textbf{Datasets} & \multicolumn{2}{c}{FEMNIST} & \multicolumn{2}{c}{FMNIST} & \multicolumn{2}{c}{MNIST} & CIFAR-10 & Sent140 &  \multicolumn{4}{c}{Statistics}\\
        Names - \textbf{Models} & MCLR & \cellcolor{cyan!20}DNN & MCLR & DNN & MCLR & DNN & \cellcolor{cyan!20}CNN & \cellcolor{cyan!20}LSTM & \cellcolor{orange!20}Avg & \cellcolor{red!30}Std & \cellcolor{cyan!20}H.Avg & \cellcolor{blue!20}H.Std \\      
        \hline
        FedAvg~\cite{mcmahan2017communication} -\textbf{G} & 53.38$_{\pm 0.26}$ & 57.04$_{\pm 0.08}$ & 82.75$_{\pm 0.04}$ & 80.09$_{\pm 0.06}$ & 86.59$_{\pm 0.03}$ & 88.26$_{\pm 0.05}$ & 57.51$_{\pm 0.07}$ & 70.86$_{\pm 0.01}$ & \cellcolor{orange!20}72.06 & \cellcolor{red!30}\underline{14.34} & \cellcolor{cyan!20}61.80 & \cellcolor{blue!20}7.85 \\
        FedAvg+AM  -\textbf{G} & 55.34$_{\pm 0.05}$ & 59.03$_{\pm 0.10}$ & 82.58$_{\pm 0.03}$ & 81.03$_{\pm 0.12}$ & 86.74$_{\pm 0.03}$ & 89.31$_{\pm 0.05}$ & 57.07$_{\pm 0.12}$ & 71.27$_{\pm 0.01}$ & \cellcolor{orange!20}72.80 & \cellcolor{red!30}14.01 & \cellcolor{cyan!20}62.46 & \cellcolor{blue!20}7.70 \\
        \hline
        FedEM~\cite{marfoq2021federated}  -\textbf{G} & 40.75$_{\pm 0.32}$ & 45.47$_{\pm 0.04}$ & 95.78$_{\pm 0.03}$ & 96.42$_{\pm 0.03}$ & 85.75$_{\pm 0.01}$ & 86.49$_{\pm 0.02}$ & 57.67$_{\pm 0.16}$ & 66.72$_{\pm 0.03}$ & \cellcolor{orange!20}71.88 & \cellcolor{red!30}22.28 & \cellcolor{cyan!20}56.62 & \cellcolor{blue!20}10.66 \\
        pFedBayes~\cite{zhang2022personalized} -\textbf{P} & 49.66$_{\pm 0.46}$ & - & 98.46$_{\pm 0.05}$ & 98.67$_{\pm 0.05}$ & 89.64$_{\pm 0.06}$ & 90.48$_{\pm 0.12}$ & - & - & \cellcolor{orange!20}- & \cellcolor{red!30}- & \cellcolor{cyan!20}- & \cellcolor{blue!20}- \\
        FedAMP~\cite{huang2021personalized}  -\textbf{P} & \underline{60.04}$_{\pm 0.08}$ & 66.79$_{\pm 0.04}$ & \textbf{98.63}$_{\pm 0.02}$ & 98.72$_{\pm 0.01}$ & \textbf{90.81}$_{\pm 0.02}$ & 92.21$_{\pm 0.02}$ & 77.40$_{\pm 0.04}$ & 69.83$_{\pm 0.05}$ & \cellcolor{orange!20}81.80 & \cellcolor{red!30}15.21 & \cellcolor{cyan!20}71.34 & \cellcolor{blue!20}5.46 \\
        \hline
        pFedMe~\cite{t2020personalized}  -\textbf{P} & 50.74$_{\pm 0.10}$ & 53.56$_{\pm 0.12}$ & 97.60$_{\pm 0.03}$ & 98.63$_{\pm 0.01}$ & 88.20$_{\pm 0.05}$ & 90.51$_{\pm 0.01}$ & 72.24$_{\pm 0.05}$ & 69.36$_{\pm 0.02}$ & \cellcolor{orange!20}77.61 & \cellcolor{red!30}18.96 & \cellcolor{cyan!20}65.05 & \cellcolor{blue!20}10.06 \\
        pFedMe+FT  -\textbf{P} & 58.04$_{\pm 0.11}$ & 62.93$_{\pm 0.10}$ & 97.63$_{\pm 0.01}$ & 98.39$_{\pm 0.02}$ & 88.36$_{\pm 0.02}$ & 91.71$_{\pm 0.01}$ & 68.17$_{\pm 0.11}$ & 67.82$_{\pm 0.03}$ & \cellcolor{orange!20}79.13 & \cellcolor{red!30}16.53 & \cellcolor{cyan!20}66.31 & \cellcolor{blue!20}\textbf{2.93} \\
        pFedMe+AM  -\textbf{P} & 55.56$_{\pm 0.09}$ & 60.08$_{\pm 0.05}$ & 97.57$_{\pm 0.02}$ & 98.67$_{\pm 0.00}$ & 88.46$_{\pm 0.02}$ & 91.22$_{\pm 0.00}$ & 73.35$_{\pm 0.09}$ & 70.93$_{\pm 0.05}$ & \cellcolor{orange!20}79.48 & \cellcolor{red!30}16.79 & \cellcolor{cyan!20}68.12 & \cellcolor{blue!20}7.07 \\
        \hline
        Per-FedAvg~\cite{fallah2020personalized}   -\textbf{P} & 54.34$_{\pm 0.14}$ & 62.72$_{\pm 0.03}$ & 94.28$_{\pm 0.05}$ & 97.46$_{\pm 0.04}$ & 87.09$_{\pm 0.01}$ & 90.96$_{\pm 0.02}$ & 78.87$_{\pm 0.05}$ & 70.05$_{\pm 0.03}$ & \cellcolor{orange!20}79.47 & \cellcolor{red!30}15.74 & \cellcolor{cyan!20}70.54 & \cellcolor{blue!20}8.09 \\
        Per-FedAvg+FT -\textbf{P} & 55.34$_{\pm 0.15}$ & 63.34$_{\pm 0.01}$ & 95.76$_{\pm 0.07}$ & 98.10$_{\pm 0.01}$ & 87.56$_{\pm 0.03}$ & 89.58$_{\pm 0.01}$ & 79.68$_{\pm 0.04}$ & 70.20$_{\pm 0.01}$ & \cellcolor{orange!20}79.95 & \cellcolor{red!30}15.61 & \cellcolor{cyan!20}71.07 & \cellcolor{blue!20}8.20 \\
        Per-FedAvg+AM  -\textbf{P} & 56.66$_{\pm 0.09}$ & 65.74$_{\pm 0.02}$ & 92.08$_{\pm 0.10}$ & 98.24$_{\pm 0.02}$ & 86.91$_{\pm 0.04}$ & 90.85$_{\pm 0.02}$ & 78.97$_{\pm 0.03}$ & 70.73$_{\pm 0.05}$ & \cellcolor{orange!20}80.02 &\cellcolor{red!30}14.54 &  \cellcolor{cyan!20}71.81 & \cellcolor{blue!20}6.68 \\
        \hline
        $\mathbf{mh}$ (ours) -\textbf{P} & 56.34$_{\pm 0.09}$ & 64.93$_{\pm 0.03}$ & 98.44$_{\pm 0.01}$ & 98.73$_{\pm 0.01}$ & 89.83$_{\pm 0.02}$ & 92.04$_{\pm 0.01}$ & \underline{79.44}$_{\pm 0.02}$ & \underline{72.04}$_{\pm 0.01}$ & \cellcolor{orange!20}81.47 & \cellcolor{red!30}15.88 & \cellcolor{cyan!20}72.14 & \cellcolor{blue!20}7.26 \\
        $\mathbf{mh}$ (ours)+FT -\textbf{P} & 59.81$_{\pm 0.07}$ & \underline{67.53}$_{\pm 0.02}$ & \underline{98.51}$_{\pm 0.02}$ & \textbf{98.98}$_{\pm 0.03}$ & \underline{90.10}$_{\pm 0.03}$ & \textbf{92.96}$_{\pm 0.05}$ & 79.16$_{\pm 0.03}$ & 71.87$_{\pm 0.01}$ & \cellcolor{orange!20}\underline{82.37} & \cellcolor{red!30}14.92 & \cellcolor{cyan!20}\underline{72.85} & \cellcolor{blue!20}5.88 \\
        $\mathbf{mh}$ (ours)+AM -\textbf{P} & \textbf{60.64}$_{\pm 0.02}$ & \textbf{70.34}$_{\pm 0.01}$ & 98.48$_{\pm 0.01}$ & \underline{98.75}$_{\pm 0.01}$ & 89.88$_{\pm 0.01}$ & \underline{92.32}$_{\pm 0.02}$ & \textbf{80.60}$_{\pm 0.01}$ & \textbf{73.68}$_{\pm 0.01}$ & \cellcolor{orange!20}\textbf{83.09} & \cellcolor{red!30}\textbf{14.01} & \cellcolor{cyan!20}\textbf{74.87} & \cellcolor{blue!20}\underline{5.23} \\
        \hline
    \end{tabular}
    }
\end{table}

We compare our methods and the baselines from different perspectives, including convex or non-convex problems, easy or hard tasks, and text tasks. Additionally, we briefly discuss the absence of BNN on hard tasks.

\textbf{Convex or non-convex:} on non-convex problems, especially in hard tasks, our method significantly outperforms other methods by at least 3.06\% employing some simple tricks. On convex problem, FedAMP outperforms our method somewhat on convex problems with simple data sets. One explanation is that the learning lanscape is simple in shape for these problems and FedAMP converges faster for this case. One possible reason for this is that since FedAMP uses the distance between models as a similarity in the penalty point selection, giving greater weight to the model that is most similar to the local one. In the later stages of training, since there is only one global optimum, this penalty point tends not to change, and thus the method degenerates into a non-dynamic regular term. Compounding intuition, this method will not be as advantageous for non-convex problems and harder convex problems, as penalty point tends to fall into the local optimum and lead to degradation of the dynamic regular term.

\textbf{From easy to difficult task:} from the difference between the statistics of \colorbox{orange!20}{Avg} and \colorbox{cyan!20}{H.Avg} in Table~\ref{tbl_res_lnlm}, it can be observed that meta-step methods perform most consistently, with all other methods dropping at least 10\%. This is due to the simple and effective local loss design of MAML, with its learning-to-learning design philosophy that enables the method to be more stable in complex situations~\cite{fallah2020convergence,fallah2020personalized}.

\textbf{Personalized prior on text:} text tasks, as opposed to image tasks, generally have relatively rugged learning landscape.~\cite{mikolov_distributed_2013, 5206848, collobert_natural_nodate} This understanding is manifested in specific ways, such as parameter sensitivity, slow convergence, and struggling during the process. Thus, the overlooked prior information seems to be more important, which means that each local iteration not only obtains local knowledge from the data, but also the prior itself already contains some local knowledge. Therefore, there is no need to re-obtain this knowledge from scratch solely from the data during training.

\textbf{Absence of BNN on hard tasks:} complex BNN is not in Table~\ref{tbl_res_lnlm}, such as LSTM in pFedBayes, because it is difficult to conduct comparative experiments by fixing elements, \textit{e.g.}, inferential models, tricks and optimization methods. In pFedBayes, training often crashes on hard tasks and large datasets, as mentioned in~\cite{zhang2022personalized}. Our one-step-further research shows that it may be caused by the reparameterization tricks and vanilla Gaussian sampling. If we add tricks on it, the implementation will be very different from the original pFedBayes, and it is beyond this analysis.

\paragraph{Ablation Analysis of Personalized Prior}

\begin{table}[t]
    \centering
    \caption{Average local test accuracy of personalized model (\%) in ablation experiments.(\textcolor{blue!70}{$\uparrow$}/\textcolor{red!80}{$\downarrow$}:~average accuracy is increased/reduced; AC4PP: Additional cost for personalized prior; Grad. and Add.: cost about calculate gradient and addition; Other notations are the same in Table~\ref{tbl_res_lnlm}.)}
    \label{tbl_res_ablation}
    \resizebox{.98\textwidth}{!}{
    \begin{tabular}{lccccccccccc}
        \hline
         \textbf{Methods} & \multicolumn{2}{c}{FEMNIST} & \multicolumn{2}{c}{FMNIST} & \multicolumn{2}{c}{MNIST} & CIFAR-10 & Sent140 & \multicolumn{2}{c}{Statistics} & AC4PP\\
        & MCLR & \cellcolor{cyan!20}DNN & MCLR & DNN & MCLR & DNN & \cellcolor{cyan!20}CNN & \cellcolor{cyan!20}LSTM & \cellcolor{orange!20}Avg & \cellcolor{cyan!20}H.Avg \\
       \hline
        Non-PP
       & 50.7$_{\pm 0.10}$ & 53.6$_{\pm 0.12}$ & 97.6$_{\pm 0.03}$ & 98.6$_{\pm 0.01}$ & 88.2$_{\pm 0.05}$ & 90.5$_{\pm 0.01}$ & 72.2$_{\pm 0.05}$ & 69.4$_{\pm 0.02}$ & \cellcolor{orange!20}77.6 & \cellcolor{cyan!20}65.1 & None\\
         \hline  
         $\mathbf{lg}$ (ours) & 50.8$^{\text{\textcolor{blue!70}{$\uparrow$}}}_{\pm 0.05}$ & 49.1$^{\text{\textcolor{red!80}{$\downarrow$}}}_{\pm 0.53}$ & 98.3$^{\text{\textcolor{blue!70}{$\uparrow$}}}_{\pm 0.02}$& 98.4$^{\text{\textcolor{red!80}{$\downarrow$}}}_{\pm 0.02}$ & 88.4$^{\text{\textcolor{blue!70}{$\uparrow$}}}_{\pm 0.01}$ & 91.0$^{\text{\textcolor{blue!70}{$\uparrow$}}}_{\pm 0.00}$ & 65.7$^{\text{\textcolor{red!80}{$\downarrow$}}}_{\pm 0.46}$ & 60.7$^{\text{\textcolor{red!80}{$\downarrow$}}}_{\pm 0.41}$ & \cellcolor{orange!20}75.3\textcolor{red!80}{$\downarrow$} & \cellcolor{cyan!20}58.5\textcolor{red!80}{$\downarrow$} & Grad. $\times$ R \\
         $\mathbf{meg}$ (ours) & 50.3$^{\text{\textcolor{red!80}{$\downarrow$}}}_{\pm 0.07}$ & 53.9$^{\text{\textcolor{blue!70}{$\uparrow$}}}_{\pm 0.06}$ & 97.8$^{\text{\textcolor{blue!70}{$\uparrow$}}}_{\pm 0.00}$ & 98.6$^{\text{\textcolor{red!80}{$\downarrow$}}}_{\pm 0.01}$ & 88.4$^{\text{\textcolor{blue!70}{$\uparrow$}}}_{\pm 0.01}$ & 90.6$^{\text{\textcolor{blue!70}{$\uparrow$}}}_{\pm 0.01}$ & 73.8$^{\text{\textcolor{blue!70}{$\uparrow$}}}_{\pm 0.06}$ & 69.4$^{\text{\textcolor{blue!70}{$\uparrow$}}}_{\pm 0.02}$ & \cellcolor{orange!20}\underline{77.9}\textcolor{blue!70}{$\uparrow$} & \cellcolor{cyan!20}\underline{65.7}\textcolor{blue!70}{$\uparrow$} & Add. $\times$ R \\
        $\mathbf{mh}$ (ours) & \textbf{56.3}$^{\text{\textcolor{blue!70}{$\uparrow$}}}_{\pm 0.09}$ & \textbf{64.9}$^{\text{\textcolor{blue!70}{$\uparrow$}}}_{\pm 0.03}$ & \textbf{98.4}$^{\text{\textcolor{blue!70}{$\uparrow$}}}_{\pm 0.01}$ & \textbf{98.7}$^{\text{\textcolor{blue!70}{$\uparrow$}}}_{\pm 0.01}$ & \textbf{89.8}$^{\text{\textcolor{blue!70}{$\uparrow$}}}_{\pm 0.02}$ & \textbf{92.0}$^{\text{\textcolor{blue!70}{$\uparrow$}}}_{\pm 0.01}$ & \textbf{79.4}$^{\text{\textcolor{blue!70}{$\uparrow$}}}_{\pm 0.02}$ & \textbf{72.0}$^{\text{\textcolor{blue!70}{$\uparrow$}}}_{\pm 0.01}$ & \cellcolor{orange!20}\textbf{81.5}\textcolor{blue!70}{$\uparrow$} & \cellcolor{cyan!20}\textbf{72.1}\textcolor{blue!70}{$\uparrow$} & Both above \\
        \hline
    \end{tabular}
    }
\end{table}

We conduct ablation experiments by dropping the gradient of the Bregman-Moreau envelope, the local loss function, or both, from the personalized strategy $\mathbf{mh}$ as shown in Table~\ref{tbl_res_ablation}.
The relationship among the three strategies mentioned in Eq.~(\ref{equ_def_spmmdl}) is that $\mathbf{mh}$ consists of $\mathbf{lg}$ and $\mathbf{meg}$. Moreover, pFedMe can be regarded in our framework as the one which takes the spherical Gaussian as prior and uses vanilla prior selection strategy $\mu_{i}=I$ without personalization. Thus, pFedMe and the three implementations of pFedBreD are compared.
The results reveal the instability of our implementation $\mathbf{lg}$ and the introduction of $\mathbf{meg}$ on difficult tasks is about the same as not introducing it. However, introducing both $\mathbf{lg}$ and $\mathbf{meg}$ (i.e., $\mathbf{mh}$) together shows remarkable performance. This indicates that $\mathbf{lg}$ and $\mathbf{meg}$ complement each other.
\textbf{To explain these results}, by observing the error bars, in most of the settings, $\mathbf{meg}$ is significantly more stable compared to methods that do not use personalized priors, while $\mathbf{lg}$ is relatively less stable. Based on this observation, we have reason to believe that $\mathbf{meg}$ weakens the influence of potential noise, while $\mathbf{lg}$ introduces new noise. Therefore, we can infer that while the mean parameters are steadily biased towards the personalized model, the introduction of new noise finds a path that is more likely to escape from local optima or saddle points, based on implicit regularization~\cite{pmlr-v139-razin21a, neyshabur2017geometry, neyshabur2017implicit}.

\paragraph{Generalized Coherence Analysis of Information Injection and Extraction}

\begin{figure}[t]
    \centering
    \includegraphics[width=0.40\textwidth, height=0.20\textwidth]{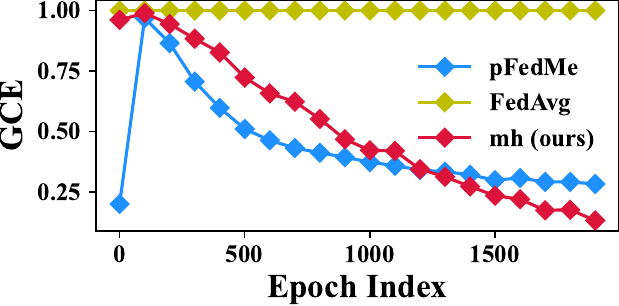}
    \includegraphics[width=0.40\textwidth, height=0.20\textwidth]{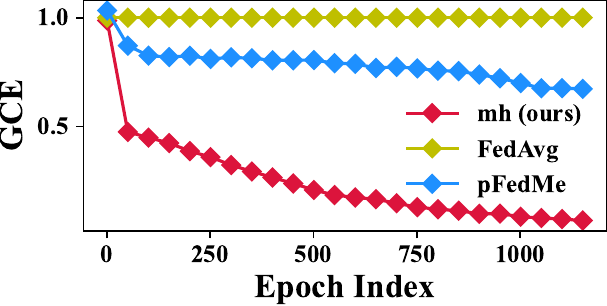}
    \begin{center}
        \footnotesize 
        CIFAR-10 with CNN \qquad\qquad\qquad\qquad\qquad\qquad MNIST with DNN
    \end{center}
    \caption{The results of GCE(\{$\nabla F_{i}(w_{i}^{(t)})$\}) at each global epoch $t$ after Savitzky-Golay filtering~\cite{savitzky_smoothing_1964}.}
    \label{fig_gce}
\end{figure}

The generalized coherence estimate (GCE)~\cite{197218} of vectors from personalized to local model (\textit{i.e.}, the envelope gradients in pFedMe and ours) among clients on each global epoch are shown in Figure~\ref{fig_gce}.
The smaller the GCE, the less coherent the envelope gradient between individual nodes and the greater the diversity of information in the global model update.
As shown in Figure~\ref{fig_gce}, we can observe that during the convergence phase, using a personalized prior method has significantly greater information diversity than not using a personalized prior method, which proves the success of injecting personalized prior knowledge into the global model and extracting local knowledge from the local training.

\paragraph{Variable-Control Analysis of Robustness}

\begin{table}[t]
\begin{minipage}[t]{0.41\textwidth}
    \centering
    \caption{The global test accuracy (\%) of the global model with different numbers of clients for aggregation $S\in\{10,20,50,100\}$.($\spadesuit$:FEMNIST, $\diamondsuit$:FMNIST)}
    \resizebox{.99\textwidth}{!}{
    \begin{tabular}{lccccc}
        \hline
        \textbf{Numbers}   & small & \multicolumn{2}{c}{$\longrightarrow$}& large & \cellcolor{red!30}Std\\
        \hline
        $\spadesuit$-DNN & 59.0 & 60.1 & 60.1 & 59.8 & \cellcolor{red!30}0.5 \\
        $\spadesuit$-MCLR & 54.4 & 55.4 & 55.4 & 55.5 & \cellcolor{red!30}0.5 \\
        $\diamondsuit$-DNN & 75.1 & 79.6 & 79.4 & 79.3 & \cellcolor{red!30}2.2 \\
        $\diamondsuit$-MCLR & 80.0 & 82.6 & 81.8 & 82.7 & \cellcolor{red!30}1.3 \\
        \hline
    \end{tabular}
    }
    \label{tbl_aggregation}
\end{minipage}
\,
\begin{minipage}[t]{0.55\textwidth}
    \centering
    \caption{The local test accuracy (\%) of the personalized model on FMNIST-DNN setting with different data heterogeneity (Non-IID) settings $\alpha \in \{0.01,0.1,1,10,100,1000\}$($\alpha \downarrow$, Non-IID$\uparrow$)~\cite{hsu_measuring_2019}. The \textbf{Bolded} means the best.}
    \resizebox{.99\textwidth}{!}{
    \begin{tabular}{lcccccccc}
        \hline
        Non-IID  & small && \multicolumn{2}{c}{$\longrightarrow$} & & large & \cellcolor{orange!20}Avg \\
        \hline
        FedAvg-\textbf{G} & 18.2 & 14.8 & 14.5 & 11.9 & 11.3  & 11.2 & \cellcolor{orange!20}13.7 \\
        pFedMe-\textbf{P} & 89.5 & 58.2 & 24.2 & 12.3 & 11.8 & 10.6 & \cellcolor{orange!20}34.4 \\
        pFedMe-\textbf{G} & 17.0 & 14.3 & 14.1 & 12.3 & 10.8 & 10.9 & \cellcolor{orange!20}13.2 \\
        $\mathbf{mh}$(ours)-\textbf{P} & \textbf{89.6} & \textbf{58.7} & \textbf{25.2} & \textbf{13.1} & 11.1 & 11.0 & \cellcolor{orange!20}\textbf{34.8} \\
        $\mathbf{mh}$(ours)-\textbf{G} & 17.1 & 14.6 & 14.6 & 12.4 & \textbf{11.9} & \textbf{11.9} & \cellcolor{orange!20}13.8 \\
        \hline
    \end{tabular}
    }
    \label{tbl_noniid}
\end{minipage}
\end{table}

We analyze the impact of aggregation noise and data heterogeneity~\cite{hsu_measuring_2019} on our method, mainly $\mathbf{mh}$, by controlling variables. Results are in Table~\ref{tbl_aggregation} and Table~\ref{tbl_noniid}. (Details are in Appendix~\ref{appdx_ssec_instability}.)
We test the performance of global model on different aggregation ratios, where all hyper-parameters except for the aggregation ratios are fixed. Meanwhile, we test the performance of both global and the personalized model on different data heterogeneity settings, where full aggregation (sample client equals total number of clients, $S=N$) and one-step local update (local epoch $R=1$) are employed to get rid of the effects of aggregation noise and client drift.
The experiments demonstrate the instability of the global model in $\mathbf{mh}$ at small aggregation ratios, which most of the other PFL methods have, by comparing their performances on different aggregation numbers. Comparing to the baselines, the experiments also demonstrate the relative robustness of our method to extreme data heterogeneity.

\paragraph{Deviation Analysis of Personalization}
\label{ssec_personalization}
\begin{figure}[t]
    \centering
    \includegraphics[width=0.24\textwidth, height=0.213\textwidth]{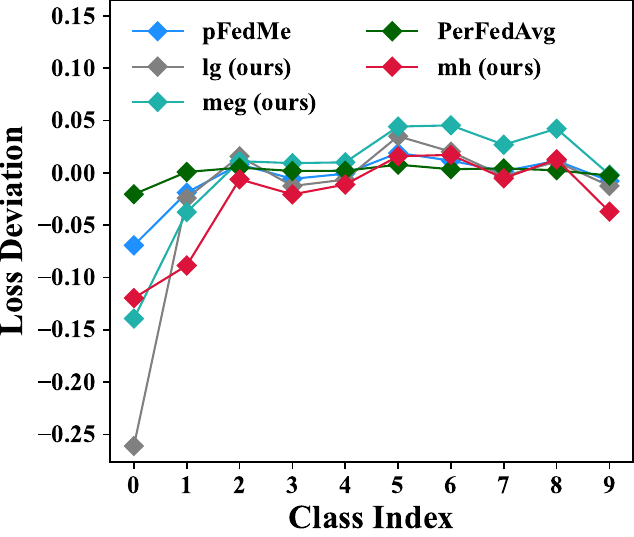}
    \includegraphics[width=0.24\textwidth, height=0.213\textwidth]{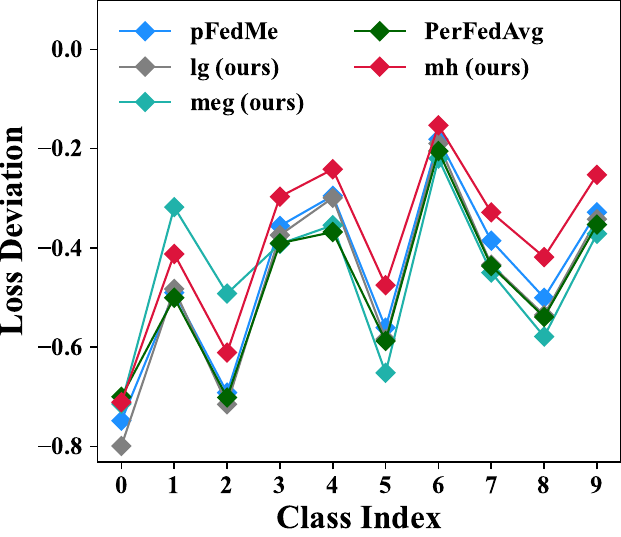}
    \includegraphics[width=0.24\textwidth, height=0.213\textwidth]{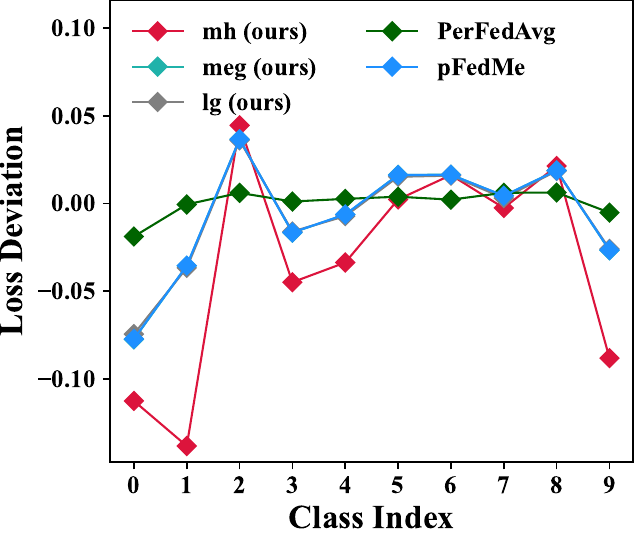}
    \includegraphics[width=0.24\textwidth, height=0.213\textwidth]{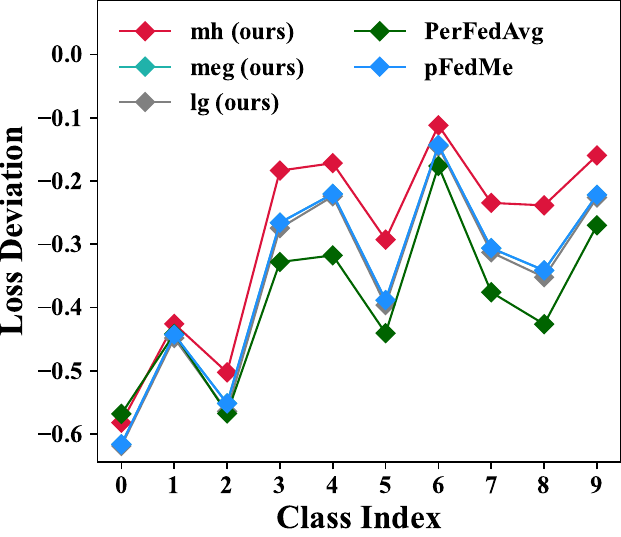}
    \begin{center}
        \footnotesize 
        MNIST-MCLR-G \quad\qquad MNIST-MCLR-L \quad\qquad
        MNIST-DNN-G \quad\qquad MNIST-DNN-L
    \end{center}
    \caption{The loss deviation of experiments in Section~\ref{sec_exp} on the first client, whose major data are on $0^{th}$ classes. The lower deviation of the available class on global tests and the higher deviation of the unavailable class on local tests demonstrate the superior personalization ability of our methods.}
    \label{fig_deviation}
\end{figure}

Deviation represents the difference between an individual and the mean value. We use the deviation of the loss function to reflect the personalization.
\textbf{On global test}, the lower the deviation, the better the personalized model performance on the corresponding local data.
\textbf{On local test}, the model is only tested on its own dataset, and because of multiple local iterations, the local test deviation converges to almost the same value, as shown in MNIST-MCLR-L and MNIST-DNN-L in Figure~\ref{fig_deviation}. Furthermore, since the local test has a loss of 0 on missing classes, a higher deviation on missing classes reflects a lower mean on these classes. Thus, the lower loss in local testing and better performance are reflected from both of the almost equal deviation in local testing and the higher deviation on missing class.
\textbf{Summary:} based on Figure~\ref{fig_deviation}, we can see that our method has higher deviation on missing classes in local testing and lower deviation in global testing. This means that our method has better personalized performance.

\section{Conclusion and Discussion}
\label{sec_cncls}

\paragraph{Conclusion}
To address the issue of neglecting client-sampling information while providing prior knowledge to local training via direct use of a global model, we propose a general concept: the personalized prior. In this paper, we propose a general framework, pFedBreD, for exploring PFL strategies under the SX-family prior assumption and computation, the RMD to explicitly extract the prior information, and three optional meta-step strategies to personalize the prior. We analyze our proposal both theoretically and empirically. Our strategy $\mathbf{mh}$ shows remarkable improvement in personalization and robustness to data heterogeneity on non-i.i.d. datasets
and the LEAF benchmark
~\cite{caldas2018leaf} with MCLR / DNN / CNN / LSTM as inferential model, which conduct convex / non-convex problems, and image / language benchmarks.

\paragraph{Limitations and Future Work}
Although $\mathbf{mh}$ shows remarkable performance and robustness, there is still instability in the global model with aggregation noise. 
Furthermore,
it should be noted that the superficial reason for the improvement of $\mathbf{mh}$ seems to be that $\eta_{\alpha}$ and $\eta$ and (which are similar to each other) are used simultaneously,  resulting in a magnitude in $\mathbf{mh}$ that is twice as large as the ones in the other two implementations and leading to better performance. 
However, empirically, simply doubling $\eta_{\alpha}$ in $\mathbf{lg}$ or $\eta$ in $\mathbf{meg}$ does not improve performance, and using one more $\mathbf{meg}$ step used in $\mathbf{lg}$ significant improvement. Our theoretical analysis cannot explain this phenomenon, and more detailed modeling is needed.

\newpage

\section*{Acknowledge}
This work is supported by the Key Program of National Science Foundation of China (Grant No. 61836006) from College of Computer Science (Sichuan University) and Engineering Research Center of Machine Learning and Industry Intelligence (Ministry of Education), Chengdu 610065, P. R. China.
This research is also supported by the National Research Foundation, Singapore under its AI Singapore Programme (AISG Award No: AISG2-PhD-2021-08-008).

\bibliography{pFedBreD.bib}
\bibliographystyle{plain}

\newpage
\appendix
\onecolumn

\section{Glossary, Some Basic Knowledge and Details about Implementations}

\subsection{Glossary}
\label{appdx_glsry}

The main notations in this paper are shown in Table~\ref{appdx_tbl_glsry}.

\begin{table}[ht]
    \centering
    \caption{The glossary of notations mentioned in this paper}
    \resizebox{.96\textwidth}{!}{
    \begin{tabular}{cc}
        \hline
        Notation & Implication \\
        \hline
        $\cdot_{i}$ & $\cdot$ on $i^{th}$ client \\
        $f_{i}$ & local loss function \\
        $F_{i}$ & local objective function \\
        $F$ & global objective function \\
        $\mathbf{E}_{\cdot}$ & expectation on $\cdot$ \\
        $\mathbf{H}$ & entropy \\
        $\mathbf{P}$ & probability measure \\
        $\mathbf{P}_{ef}$ & probability in exponential family\\
        $\mathbf{P}_{sef}$ & probability in scaled exponential family \\
        $\Omega_{\cdot}$ & complete set of $\cdot$ \\
        $\mathcal{F}$ & generic $\sigma$-algebra\\
        $\sigma(\cdot)$ & $\sigma$-algebra derived from $\cdot$ \\
        $\{\Omega,\mathcal{F}\}$ & measurable space \\
        $\{\Omega,\sigma(\Omega),\mathbf{P}\}$ & probability measurable space \\
        $\hat{\mathbf{P}}$ & estimated probability \\
        $x_{i}$,$y_{i}, d_{i}$ & input data, label data, the pairs of them \\
        $w$ & global model parameter \\
        $\Theta_{i}$ & local information \\
        $\theta_{i}$ & personalized parameters\\
        $w_{init},\theta_{init}$ & function to initialize parameters \\
        $\mu_{i}$ & the function to generate mean parameter \\
        $s_{i}$ & the function to generate natural parameter \\
        $x$,$y$,$\hat{w}$,$\theta$,$\mu$,$s$ & generic point notations \\
        T,N,R,S & number of total global epochs, clients, local epochs, number of sampling clients \\
        t,r & global epochs, local epochs \\
        $\beta$,$\eta$,$\eta_{\alpha}$,$\lambda$,$\hat{\lambda}$ & scalar notations \\
        $g,h,h_{\lambda}$ & generic function notations \\
        $\mathcal{D}_{g}$ & Bregman divergence derived from g\\
        $\mathcal{D}\mathbf{prox}$ & Bregman divergence proximal mapping \\
        $\mathcal{D}\mathbf{env}$ & Bregman Moreau envelope \\
        $\nabla, \mathbf{D}, \nabla^{2}$ & gradient, Jocobian and Hessian operator \\
        $\Delta$ & deviation from mean \\
        $\cdot^{*}$ & the Fenchel conjugate of $\cdot$ \\
        $\mathcal{L}$ & averaged local test loss \\
        $\mathcal{G}$ & averaged global test loss \\
        $\bar{\cdot}$ & mean of $\cdot$ over clients \\   
        $I$,$I_{m}$ & identity mapping, identity matrix\\
        \hline
    \end{tabular}}
    \label{appdx_tbl_glsry}
\end{table}

\subsection{Bregman Divergence}

Bregman divergence is a general distance satisfying that its first-order moment estimation is the point that minimizes the expectation of the distance to all points for all measurable functions on $\mathbf{R}^{d}$. In other words, the given distance $\mathcal{D}$ satisfies Condition (~\ref{bd_prpt}):
\begin{equation}
\label{bd_prpt}
 \forall X \in \{\mathbf{R}^{d}, \mathcal{F}, \mathbf{P}\}, \mathbf{E}[X]={\arg\min}_{y}\mathbf{E}[\mathcal{D}(X,y)   ]
\end{equation}

Eq.~(\ref{def_bd}) is the definition of Bregman divergence:
\begin{equation}
\label{def_bd}
\begin{aligned}
    \mathcal{D}_{g}(x,y)&:=g(x)-g(y)-\langle\nabla g(y),x-y\rangle \\ 
    &=\int_{y}^{x} \nabla g(t)-\nabla g(y) dt
\end{aligned}
\end{equation}
where $g$ is a convex function. For convenience, in this paper, $g$ is assumed to be strictly convex, proper and differentiable, so that the equation above Eq.~(\ref{def_bd}) are well-defined. In the perspective of Taylor expansion, Bregman divergence is the first-order residual of $g$ expanded at point $y$ valued at point $x$, which is the natural connection between Bregman divergence and Legendre transformation. The Bregman divergence does not satisfy the distance axiom, but it provides some of the properties we need, such as non-negative distance. Hence, the selected function $g$ should be convex. Furthermore, if one wants the distance to have a good property that $x=y\leftrightarrow\mathcal{D}_{g}(x,y)=0$, one needs $g$ to be strictly convex.

\subsection{Non-Maximum Entropy}
Besides, the non-maximum entropy rule approach is also worth considering, but we focus on maximum entropy prior in this section. See~\cite{seidenfeld1979not, genest1986combining, kass1996selection, gelman2013philosophy} for additional information of non-maximum entropy assumptions.

\subsection{Future PFL}
Besides the FTML, Bayesian learning, EM, and transfer learning mentioned in the main paper, neural-collapse-motivated methods and life-long learning are also promising methods to handle PFL problem~\cite{li2023fear, huang2023neural, zhang2022spatialtemporal}.

PFL could also fucos on personalizing other characteristics about FL system, e.g., communications, resource-constrained device. For example, this paper ~\cite{zhou2023communication} gives a data distillation (compression)~\cite{zhang2022expanding, yang2023does} method to reduce communication cost, and the compressed data itself contains personalize posterior information.

\subsection{Personalized Prior and MAML}

Based on previous derivations, to obtain a deployable algorithm, our remaining task is to determine $\Phi$. In this section, inspired by MAML, we briefly introduce a meta-step-based implementation method.
The mean parameter is used to represent the prior under SX-family prior assumption given any $\lambda$ and $g$ in this paper. The mean of the SX-family prior in Eq.~(\ref{equ_gfl_sembme}) is used in regular term, which can be personalized in each client $i$ as $\mu_{i}$, corresponding to $\mu_{\Phi}$ in Eq.~(\ref{equ_gfl_mdlb}), as shown in Figure~\ref{fig_intuition}. Motivated by this, we use MAML to learn the personalized regularization (or personalized prior in Bayesian learning) in Section~\ref{sec_alg}. For example, $\mathbf{meg}$ in Eq.~(\ref{equ_def_spmmdl}) uses MAML on the Bregman-Moreau envelope $\mathcal{D}\mathbf{env}_{g^{*},\lambda^{-1}}f_{i}$ by substituting it into $J$ in Section~\ref{sec_rltw} and $\Phi$ in Eq.~(\ref{equ_gfl_mdlb}).

\subsection{Sampling Method in Bayesian Learning}

Bayesian methods are a elegant solution to the complex issue of heterogeneous data, as they operate on a principle whereby the model allocates increasing attention to local data as available, and derives insight from prior information when local information is scarce. Furthermore, Bayesian modeling brings fresh probabilistic insights to PFL regularization techniques, while simultaneously providing a flexible framework for exploring novel strategies. Bayesian modeling, as well as the expectation maximuzatioin and maximum a posteriori estimate (EM-MAP)~\cite{dempster1977maximum}, provide our personalized prior approach with straightforward theoretical support, as well as more general perspectives for analysis. Meanwhile, it addresses the cost of additional sampling in the classic and approximate Bayesian learning paradigm with MAP, the regularization method.

In Bayesian modeling, the EP global loss provides more information that we want to use for local training due to its zero-avoiding property.~\cite{Minka2001ExpectationPF}

The sampling methods used to calculate the solution of Bayesian Model mentioned in this paper can be importance sampling, MCMC or others. In this work, we use the approximation Bayesian methods. See more details in~\cite{andrieu2003introduction}. The local training process based on regular terms differs from Bayesian learning based on sampling, \textit{i.e.}, each time a model needs to be obtained by sampling the model distribution under the current parameters. We choose to use Bayesian MAP as a point estimation as our estimation method, thus eliminating steps such as sampling and reparameterization to improve inference efficiency. The personalized model sampled from local training can be seen as the results from random data sampling using SGD or the mean parameter directly.

\subsection{First-order Methods}
\label{appdx_ssec_foMtd}
There are three parts in Eq.~(\ref{equ_gdmp}) we need to deal with, and the first-order methods are as shown below:

\textbf{Jacobian Matrix of Mean:} specifically, utilizing the prior selection strategy discussed in Section~\ref{sec_mth}, we have $\mathbf{D}\mu_{i}(w)=I-\eta\nabla^{2} \Phi(w)$. Using different $\Phi$  functions yields varying results. For instance, with first-order methods and the last term removed, we get the approximation $\mathbf{D}\mu_{i}\leftarrow I$.

\textbf{Hessian Matrix:} with first-order methods, we let $\nabla^{2}g^{*}(\cdot)=I_{m}$. It happens when assuming $\theta_{i}$ obeys the spherical Gaussian by letting $g=\frac{1}{2}||\cdot||^{2}$. Moreover, we can assume $\theta_{i}$ obeys the general multivariable Gaussian by letting $g=\langle\cdot,\Sigma^{-1}\cdot\rangle$ and $\nabla^{2}g(\cdot) = \Sigma^{-1} \succeq 0$.

\textbf{Proximal Mapping:} given $\mu_{i}(w)$,  the proximal mapping part $\mathcal{D}\mathbf{prox}_{g^{*},\lambda^{-1}}f_{i}(\mu_{i}(w))$ can be approximately solved with numerical methods, \textit{e.g.}, gradient descent methods. In other words, we can alternately calculate $\mu_{i}(w)$ on each client and then fix $\mu_{i}(w)$ in each local epoch with EM.

\subsection{Complexity}

Since the general process of our implementations, FedAMP and pFedMe are the same as shown in pFedBreD framework, these methods share the same complexity of memory/calculation, $O(N)/O(NTRK)$ as shown in Table~\ref{tbl_complexity}.
The complexities of both FedAvg and Per-FedAvg are $O(N)/O(NTR)$ since the original methods of them do not need a approximate proximal mapping solution, and therefore are free on $K$, the number of iterations to calculate the solution.
The complexity of FedEM is $O(NM)/O(NTRM)$, where $M$ is the components of the distributions we assume, due to the calculation of $M$ components in each global epoch.

\begin{table}[t]
    \centering
    \caption{Complexity Comparison}
    \label{tbl_complexity}
    \resizebox{.96\textwidth}{!}{
    \begin{tabular}{lcccccc}
    \hline
    Complexity/Methods &	FedEM &	FedAvg	& pFedMe &	Per-FedAvg &	FedAMP &	pFedBreD (ours) \\
    \hline
Sys. Memory & $\mathbf{O}(NM)$ & $\mathbf{O}(N)$	& $\mathbf{O}(N)$	& $\mathbf{O}(N)$	& $\mathbf{O}(N)$	& $\mathbf{O}(N)$	\\
Sys. Time	& $\mathbf{O}(NTRM)$	& $\mathbf{O}(NTR)$	& $\mathbf{O}
(NTRK)$	& $\mathbf{O}(NTR)$	& $\mathbf{O}(NTRK)$	& $\mathbf{O}(NTRK)$ \\
\hline
    \end{tabular}
    }
\end{table}

\subsection{Broader Impacts}

In recent years, PFL has found use not only in predictive tasks like mobile device input methods but also in areas where privacy is paramount, such as healthcare and finance. However, before its widespread deployment, several critical factors must be taken into consideration.

One of the primary concerns regarding PFL is its deployment cost. It involves significant computational resources, making it a costly affair. Additionally, client transparency is an important issue that needs attention. Clients have the right to know what data is being collected and how it is used.

Another factor that complicates PFL's deployment is the differences in user behavior and hardware and software configurations between clients. These differences can affect the performance of the algorithm and require bespoke solutions for each client.

In addition, PFL's robustness is another essential aspect to consider. Real-world environments are often unpredictable and can interfere with the algorithm's performance, leading to erroneous results. Therefore, it is necessary to ensure that the algorithm is sufficiently robust before deploying it.

Lastly, even though PFL offers significant benefits, potential drawbacks should not be overlooked. All stakeholders involved in its deployment need to approach this technology with caution and forethought. By considering these factors, we can harness the power of PFL while minimizing its limitations and risks.

\section{Details of Equations}
\label{appdx_sec_doe}
\subsection{Hidden Information}
\label{appdx_sec_hi}

From the definition of KL divergence, we have 
\begin{equation}
\begin{aligned}
&\arg\min_{w}\mathbf{E}_{i}\mathbf{E}_{d_{i}}\mathbf{KL}(\mathbf{P}(y_{i}|x_{i})||\hat{\mathbf{P}}(y_{i}|x_{i},w)) \\
=&\arg\min_{w}\mathbf{E}\log \mathbf{P}(y_{i}|x_{i}) - \log \hat{\mathbf{P}}(y_{i}|x_{i},w)) \\
=&\arg\min_{w}\mathbf{E} - \log \hat{\mathbf{P}}(y_{i}|x_{i},w)) \\
=&\arg\max_{w}\mathbf{E} \log \hat{\mathbf{P}}(y_{i}|x_{i},w)) \\
\end{aligned}
\end{equation}
This is used in Eq.~(\ref{equ_gfl}) in the main paper.

\subsection{Bregman Divergence and X-Family}
\label{appdx_ssec_bdxf}
We use the SX-family due to its computational advantages. While other families of distributions may be able to handle special cases, they may not be as computationally efficient.

If proper and strictly convex function $g$ is differentiable, with $g^{*}$ the Fenchel conjugate function of $g$, $\mathcal{D}_{g}(x,y)$ the Bregman divergence, $\mu$ dual point of $s$, we have:
\begin{equation}
    \begin{aligned}
        \mathcal{D}_{g^{*}}(\mathcal{V},\mu)=g^{*}(\mathcal{V})+g(s)- \langle \mathcal{V}, s \rangle=\mathcal{D}_{g}[s, \nabla g^{*}(\mathcal{V})]
    \end{aligned}
\end{equation}

From the definition of Bregman divergence , $\nabla g(s) = \mu$ and definition of $g^{*}$ Fenchel conjugate on convex function $g$ ,we have:
\begin{equation}
    \begin{aligned}
        \mathcal{D}_{g^{*}} (\mathcal{V}, \mu)
        &= g^{*}(\mathcal{V}) - g^{*}(\mu) - \langle \nabla g^{*}(\mu), \mathcal{V} - \mu \rangle \\
        &= g^{*}(\mathcal{V}) - g^{*}(\mu) - \langle s, \mathcal{V} - \mu \rangle \\
        &= g^{*}(\mathcal{V}) - \langle s, \mathcal{V} \rangle - g^{*}(\mu) + \langle \mu, s \rangle \\
        &= g^{*}(\mathcal{V}) - \langle s, \mathcal{V} \rangle + g(s) \\
    \end{aligned}
\end{equation}
Similarly, we have $\mathcal{D}_{g}[s, \nabla g^{*}(\mathcal{V})] = g
^{*}(\mathcal{V}) - \langle s, \mathcal{V} \rangle + g(s)$. This property is used in Eq.~(\ref{equ_def_ef}) and (~\ref{equ_def_sef}) in the main paper.

\begin{table}[ht]
    \centering
    \caption{Bregman divergence and exponential family. (note $\xi=\langle\cdot ,\ln\cdot\rangle$)}
    \resizebox{.96\textwidth}{!}{
    \begin{tabular}{ccccc}
        \toprule
         Name & Gaussian & Bernoulli & Possion & Exponential \\
         \midrule
          Domain& $\mathbf{R}^{d}$ & $\{0,1\}$ & $\mathbf{N}$ & $\mathbf{R}_{++}$\\
          $g(y)$ & $\frac{1}{2}||y||_{\Sigma^{-1}}^{2}$ & $\ln(1+e^{y})$ & $e^{y}$ & $-\ln(-y)$\\
          $\nabla g(y)$ & $y$ & $\frac{\exp\{y\}}{1 + \exp\{y\}}$ & $e^{y}$ & $-y^{-1}$\\
          $g^{*}(x)$ & $\frac{1}{2}||x||_{\Sigma^{-1}}^{2}$ & $\xi(x)+\xi(1-x)$ & $x\ln(x)-x$ & $-\ln(x)-1$ \\
          $\nabla g^{*}(x)$ & $x$ & $\ln(\frac{x}{1-x})$ & $\ln(x)$ & $-x^{-1}$ \\
          $\mathcal{D}_{g^{*}}(x,y)$ & $\frac{1}{2}||x-y||_{\Sigma^{-1}}^{2}$& $\ln(1+e^{(1-2x)y})$ & $e^{y}+\xi(x)-x(y+1)$ & $\frac{x}{y}-\ln\frac{x}{y}-1$\\
         \bottomrule
    \end{tabular}
    }
    \label{tbl_bd_ef}
\end{table}

Table~\ref{tbl_bd_ef} shows parts of the relationship between specific $g$ and related member in exponential family. See~\cite{banerjee2005clustering} for more about the relationships between $g$ that derives Bregman divergence $\mathcal{D}_{g}$ and related derived divergence (\textit{e.g.}, $\cdot\Sigma^{-1}\cdot$ \& Mahalanobis distance, $\sum_{\cdot} \cdot \log \cdot$ \& KL divergence / generalized I-divergence and etc.).

\subsection{Expectation Maximization}
\label{appdx_ssec_em}
The details of Eq.~(\ref{equ_gfl_eml}) in the main paper is shown in Eq.~(\ref{appdx_equ_gfl_eml}).
\begin{equation}
\label{appdx_equ_gfl_eml}
\begin{aligned}
& \sum_{i}\log\mathbf{P}(y_{i}|x_{i},w) = \sum_{i}\log\int\mathbf{P}(y_{i},\Theta_{i}|x_{i},w)d\Theta_{i}
= \sum_{i}\int\mathbf{Q}(\Theta_{i})\log\frac{\mathbf{P}(y_{i},\Theta_{i}|x_{i},w)}{\mathbf{Q}(\Theta_{i})}d\Theta_{i} \\
\ge& \sum_{i}\int\log\mathbf{Q}(\Theta_{i})\frac{\mathbf{P}(y_{i},\Theta_{i}|x_{i},w)}{\mathbf{Q}(\Theta_{i})}d\Theta_{i}
= \sum_{i}\mathbf{E}_{\mathbf{Q}(\Theta_{i})}\log\frac{\mathbf{P}(y_{i},\Theta_{i}|x_{i},w)}{\mathbf{Q}(\Theta_{i})} \\
=& \sum_{i}\mathbf{E}_{\mathbf{Q}(\Theta_{i})}\log\mathbf{P}(y_{i},\Theta_{i}|x_{i},w)-\log\mathbf{Q}(\Theta_{i}) \\
\ge& \sum_{i}\mathbf{E}_{\mathbf{Q}(\Theta_{i})}\log\mathbf{P}(y_{i},\Theta_{i}|x_{i},w)
= \sum_{i}\mathbf{E}_{\mathbf{Q}(\Theta_{i})}[\log\hat{\mathbf{P}}(y_{i}|x_{i},\Theta_{i},w)+\log\mathbf{P}(\Theta_{i}|x_{i},w)
] \\
=& \sum_{i}\mathbf{E}_{\mathbf{Q}(\Theta_{i})}[\log\hat{\mathbf{P}}(y_{i}|x_{i},\Theta_{i},w)+\log\int_{y_{i}}\mathbf{P}(\Theta_{i}|d_{i},w)\mathbf{P}(y_{i}|x_{i},w)
] \\
\ge& \sum_{i}\mathbf{E}_{\mathbf{Q}(\Theta_{i})}[\log\hat{\mathbf{P}}(y_{i}|x_{i},\Theta_{i},w)+\mathbf{E}_{y_{i}|x_{i},w}\log\mathbf{P}(\Theta_{i}|d_{i},w)
]
\end{aligned}
\end{equation}
In Eq.~(\ref{appdx_equ_gfl_eml}), we use the concavity of logarithmic function for the first inequality and entropy $\mathbf{H}(\mathbf{Q}(\Theta_{i}))=\mathbf{E}_{\mathbf{Q}(\Theta_{i})}-\log\mathbf{Q}(\Theta_{i}) \ge 0$ the for the second. (probability $\mathbf{Q}(\Theta_{i})\in[0,1]$; The first equal sign holds, when $\mathbf{Q}(\Theta_{i})=\mathbf{P}(\Theta_{i}|d_{i},w)$.) The last inequality is derived from the concavity of the logarithmic function.

\textbf{Why is a-posteriori distribution a prior in this modeling and problem formulation? What about $\hat{\lambda}$?} We assume $\Theta_{i}|d_{i},w \sim \hat{\mathbf{P}}_{sef}(\Theta_{i};\lambda,s_{i}(w;d_{i}),g)$, and have:
\begin{equation}
    \begin{aligned}
        &\mathbf{E}_{\mathbf{Q}(\Theta_{i})}[\log\hat{\mathbf{P}}(y_{i}|x_{i},\Theta_{i},w)+\mathbf{E}_{y_{i}|x_{i},w}\log \mathbf{P}(\Theta_{i}|d_{i},w)] \\
        =& \mathbf{E}_{\mathbf{Q}(\Theta_{i})}\log\hat{\mathbf{P}}(y_{i}|x_{i},\Theta_{i},w) \\ 
        & +\mathbf{E}_{\mathbf{Q} (\Theta_{i})}\mathbf{E}_{y_{i}|x_{i},w}[\log \mathbf{P}(\Theta_{i}|x_{i},w)+\log \mathbf{P}(y_{i}|\Theta_{i},x_{i},w)-\log \mathbf{P}(y_{i}|x_{i},w)] \\
    \end{aligned}
\end{equation}
Optimization local problem taken on both side in any $\mathbf{Q}$ sampling, we have:
\begin{equation}
    \begin{aligned}
    &\arg\min_{\Theta_{i}}\{\log\hat{\mathbf{P}}(y_{i}|x_{i},\Theta_{i},w)+\mathbf{E}_{y_{i}|x_{i},w}\log \mathbf{P}(\Theta_{i}|d_{i},w) \}\\
        =& \arg\min_{\Theta_{i}}\{\log\hat{\mathbf{P}}(y_{i}|x_{i},\Theta_{i},w) \\ 
        & +\mathbf{E}_{y_{i}|x_{i},w}[\log \mathbf{P}(\Theta_{i}|x_{i},w)+\log \mathbf{P}(y_{i}|\Theta_{i},x_{i},w)-\log \mathbf{P}(y_{i}|x_{i},w)]\} \\
        =& \arg\min_{\Theta_{i}}\underbrace{\{\log\hat{\mathbf{P}}(y_{i}|x_{i},\Theta_{i},w)}_{\textbf{Predicted Likelihood}} +\mathbf{E}_{y_{i}|x_{i},w}[\underbrace{\log \mathbf{P}(\Theta_{i}|x_{i},w)}_{\textbf{Prior Distribution}}+\underbrace{\log \mathbf{P}(y_{i}|\Theta_{i},x_{i},w)}_{\textbf{Assumed Likelihood}}]\}\\
    \end{aligned}
\end{equation}
Thus, we do maximum a-posteriori estimation alongside added predicted likelihood, which is virtually doing assumptions on prior distribution and take mixed likelihood. Moreover, taking assumption on a-posteriori distribution leads calculation efficiency. Note that the hyperparameters should be carefully discussed.

\textbf{Bi-level optimization trick:}
\begin{equation}
\label{appdx_max_prpty}
\begin{aligned}
    \max_{x,y} f(x,y) &\ge \max_{x}\max_{y} f(x,y)\\
    \sum_{i}a_{i}\max f(x,y_{i}) &= \max\sum_{i}a_{i} f(x,y_{i})
\end{aligned}
\end{equation}

In Eq.~(\ref{equ_gfl_sembme}), we use the two properties of $\max$ shown in Eq.~(\ref{appdx_max_prpty}). Moreover, these properties are also used to build the upper bound of Eq.~(\ref{equ_gfl_md}) as Eq.~(\ref{equ_gfl_mdlb}).

\subsection{Notations of Deviations}
\label{appdx_ssec_dev}
The notations are shown as follows:

$\mathcal{L}_{i,c}$: The averaged local test loss of the $i^{th}$ personalized model over its own local test with label $c$. The value equals zero on the clients without $c$-labeled data.

$\bar{\mathcal{L}}_{c}$: The mean of the averaged local test loss over all personalized models. Each $\mathcal{L}_{i,c}$ is weighted by the ratio of the number of own test data with label $c$.

$\mathcal{G}_{i,c}$: The averaged global test loss of the $i^{th}$ personalized model over the global test with label $c$.

$\bar{\mathcal{G}}_{c}$: The mean of the averaged global test loss over all personalized models.

The deviations of the averaged global and local test loss of the $i^{th}$ personalized model on class $c$: $\Delta \mathcal{G}_{i,c} = \mathcal{G}_{i,c} - \bar{\mathcal{G}_{c}}$ and $\Delta \mathcal{L}_{i,c} = \mathcal{L}_{i,c} - \bar{\mathcal{L}_{c}}$.

\section{More About Experiments}
\label{appdx_sec_exp}

The access of all data and code is available
\footnote{\url{https://github.com/BDeMo/pFedBreD_public}}
.

\subsection{More about implementations}
\label{appdx_ssec_maI}
The three implementations of $\mu_{i}$, \textit{i.e.} $\mathbf{lg}$, $\mathbf{meg}$ and $\mathbf{mh}$, represent \textit{loss gradient}, \textit{memorized envelope gradient} and \textit{memorized hybrid} respectively. \textit{Memorized} means that we choose the gradient of Bregman-Moreau envelope  $\nabla F_{i}(w_{i,r-1}^{(t)})$ as $\eta[w_{i,R}^{(t-1)} - \theta_{i,r-1}^{(t)}]$, where $\eta \ge 0$ is a step-size-like hyper-parameter. Each local client memorizes their own local part of the latest global model $w^{(t)}$ at the last global epochs $w_{i,R}^{(t-1)}$, instead of $w_{i,r-1}^{(t)}$ in practice.

\subsection{Variant}
\label{appdx_ssec_v}

Based on the facts, the results in Table~\ref{tbl_res_lnlm} shows the instability of our personalized models. Here we propose a variant of $\mathbf{mh}$, shown in Eq.~(\ref{appdx_equ_v}), trying to improve the robustness of personalized model on the original $\mathbf{mh}$, which use $\Phi_{i}\leftarrow f_{i}+F_{i}$.
\begin{equation}
\label{appdx_equ_v}
\begin{aligned}
    \Phi_{i} & \leftarrow \tilde{F}_{i,\tilde{\eta}_{\alpha},\tilde{\eta}}:=\tilde{\eta}_{\alpha}f_{i}\circ(\cdot-\tilde{\eta}\nabla f_{i})+F_{i} \\
    \mu_{i,r} & \leftarrow w_{i,r-1}^{(t)} - \eta \nabla \tilde{F}_{i,\tilde{\eta}_{\alpha},\tilde{\eta}}(w_{i,r-1}^{(t)}) \\
    & = w_{i,r-1}^{(t)} - \eta\{ \tilde{\eta}_{\alpha} \nabla f_{i}[w_{i,r-1}^{(t)} - \tilde{\eta} \nabla f_{i}(w_{i,r-1}^{(t)})]\} - \eta \{w_{i,R}^{(t-1)}-\theta_{i,r-1}^{(t)}\}\\
\end{aligned}
\end{equation}

This method in Eq.~(\ref{appdx_equ_v}) performance almost the same as the orginal $\mathbf{mh}$ when $\eta_{\alpha}$ is small, but it provides flexibility to tune the hyper-parameter and decide whether to focus more on the current gradient step or the meta-gradient step by tuning $\tilde{\eta}_{\alpha}$ and $\tilde{\eta}$. $\tilde{\eta}_{\alpha}\leftarrow \eta_{\alpha}/\eta$ and $\tilde{\eta}\leftarrow \eta_{\alpha}$ are used in practice.

\subsection{Implementations of Per-FedAvg}

We implement Per-FedAvg with the first-order method~\cite{fallah2020personalized} and fine-tune the personalized model twice, with each learning step of the global and personalized step sizes.

\subsection{Details of Tricks, Datasets and Models}

Tricks are shown as follows:

\textbf{FT:} fine-tuning single personalized model one more step for local test.

\textbf{AM:} aggregate momentum, the same trick used in $12^{th}$ line of Algorithm~\ref{alg_pFedBreD}.(To compare more fairly between methods with single global model; $\beta=2$ for methods and employing \text{AM})

Datasets settings are shown as follows:

\textbf{CIFAR-10:} the whole dataset is separated into 20 clients, and each client has data of 3 classes of label.~\cite{dinh2021fedu, krizhevsky2009learning}

\textbf{FEMNIST:} we use non-i.i.d. FEMNIST from LEAF benchmark with fraction of data to sample of 5\% and fraction of training data of 90\%.~\cite{caldas2018leaf}

\textbf{FMNIST:} the whole fashion-MNIST dataset is separated into 100 clients, and each client has data of 2 classes of label.~\cite{t2020personalized, xiao2017fashion}

\textbf{MNIST:} the whole MNIST dataset is separated into 100 clients, and each client has data of 3 classes of label.~\cite{t2020personalized, lecun1998gradient}

\textbf{Sent140:} we use non-i.i.d text dataset Sent140 from LEAF benchmark with fraction of data to sample of 5\%, fraction of training data of 90\% and minimum number of samples per user of 3. Then we re-separate Sent140 into 10 clients with at least 10 samples.~\cite{caldas2018leaf}

Model settings are shown as follows:

\textbf{CNN:} for the image data, we use convolutional neural network of CifarNet~\cite{hosang2015taking}.

\textbf{DNN:} the non-linear model is 2 layers deep neural network with 100-dimension hidden layer and activation of leaky ReLU~\cite{maas2013rectifier} and output of softmax.

\textbf{MCLR:} the linear model, multi-class linear regression, is 1 layer of linear mapping with bias, and then output with softmax.

\textbf{LSTM:} text data model consists of 2 LSTM layers~\cite{hochreiter1997long} as feature extraction layer of 50-dimension embeding and hidden layer and 2 layers deep neural network as classifier with 100-dimension of hidden layer.

\subsection{Non-I.I.D Distribution}

Figure~\ref{appdx_fig_niid} shows the non-i.i.d. distribution of MNIST, CIFAR-10, FMNIST, FEMNIST and Sent140. Sent140 is a bi-level classification so each client has two class of label data and we directly use the LEAF benchmark~\cite{caldas2018leaf} and Dirichlet distribution of $\alpha=0.5$ to separate users into 10 groups (See the code for more details). 

\begin{figure*}[ht]%
    \centering
    \includegraphics[width=0.24\textwidth, height=0.16\textwidth]{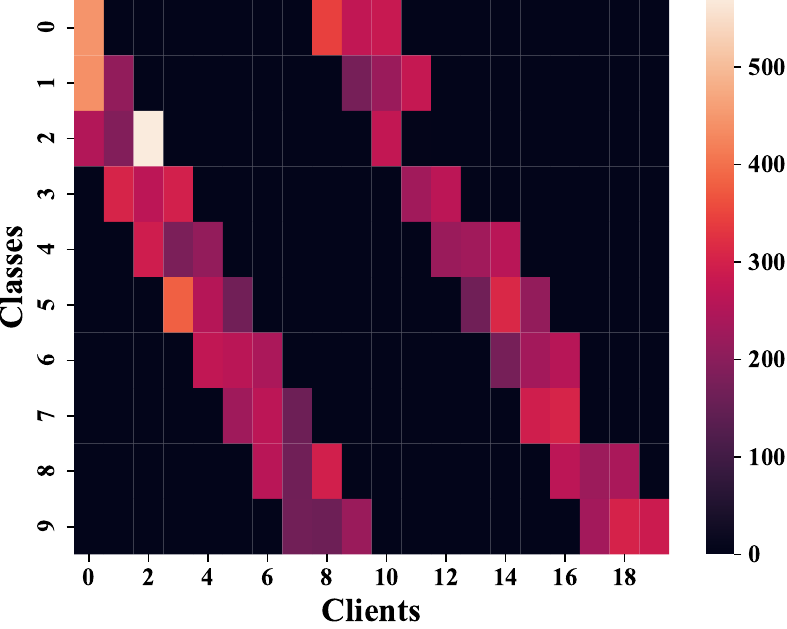}
    \includegraphics[width=0.24\textwidth, height=0.16\textwidth]{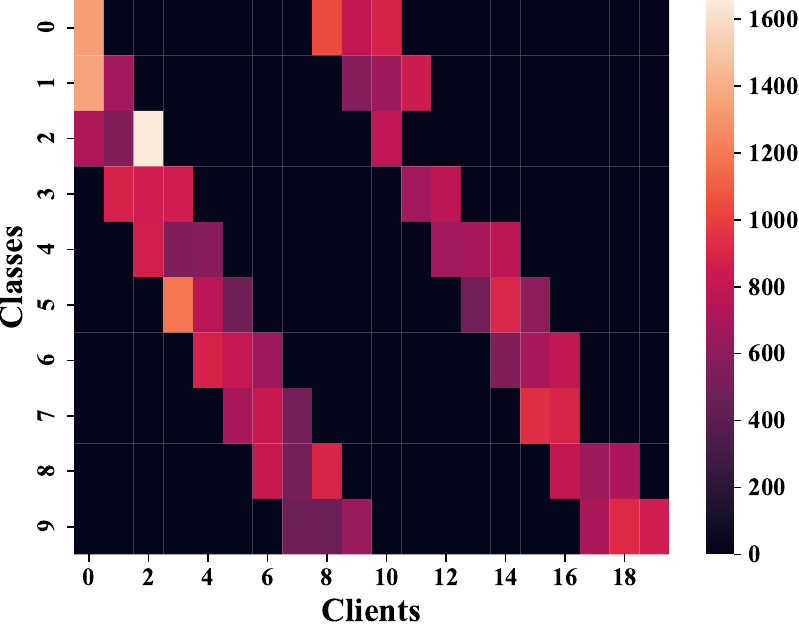}  
    \includegraphics[width=0.24\textwidth, height=0.16\textwidth]{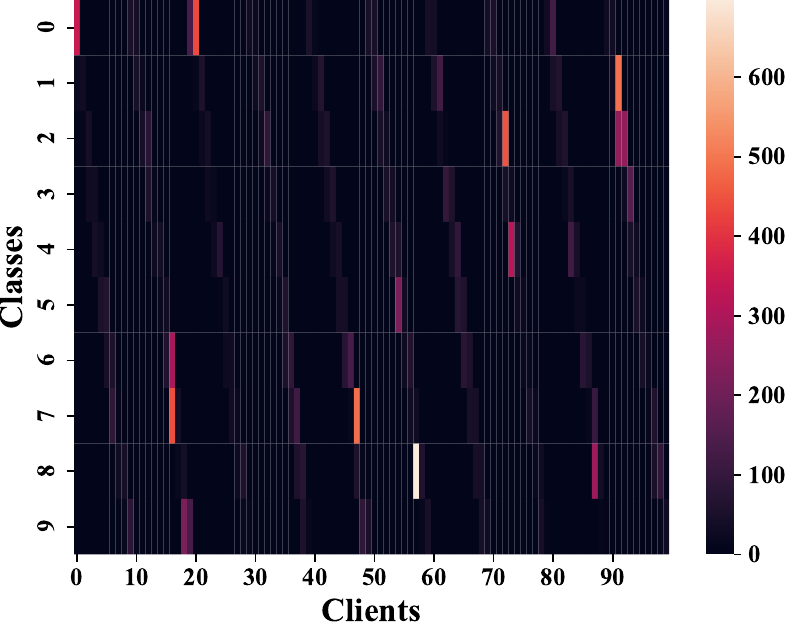} 
    \includegraphics[width=0.24\textwidth, height=0.16\textwidth]{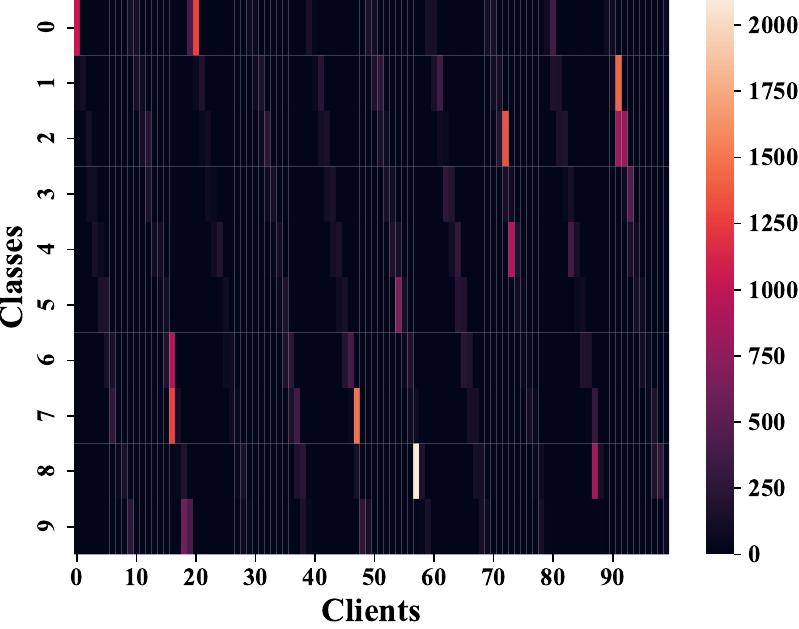}
    \begin{center}
        \footnotesize 
        CIFAR-10-Test/Train \qquad\qquad\qquad\qquad
        FMNIST-Test/Train
    \end{center}
    \includegraphics[width=0.24\textwidth, height=0.16\textwidth]{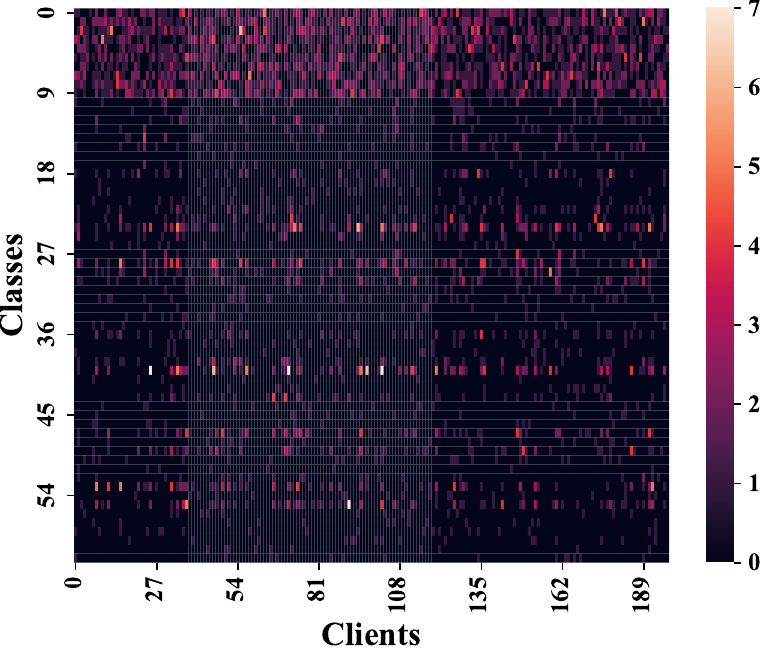}    
    \includegraphics[width=0.24\textwidth, height=0.16\textwidth]{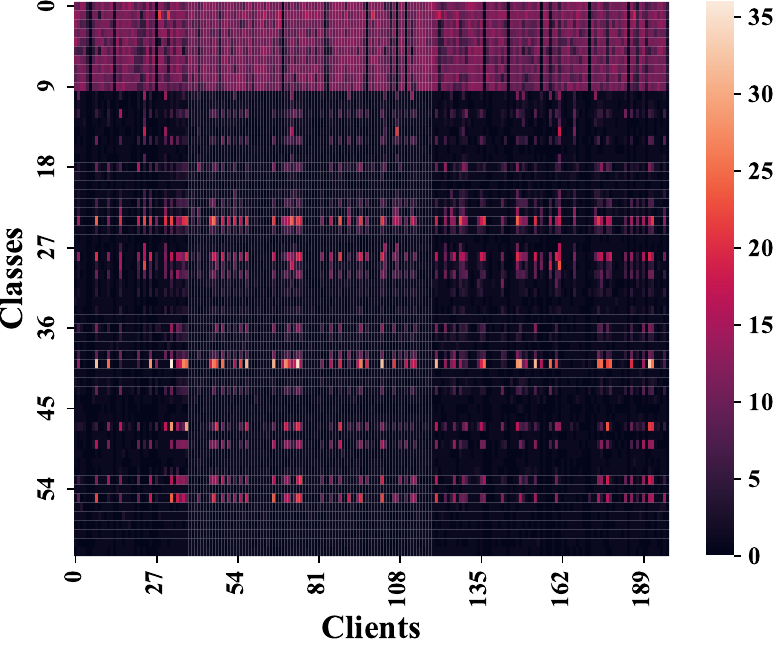}
    \includegraphics[width=0.24\textwidth, height=0.16\textwidth]{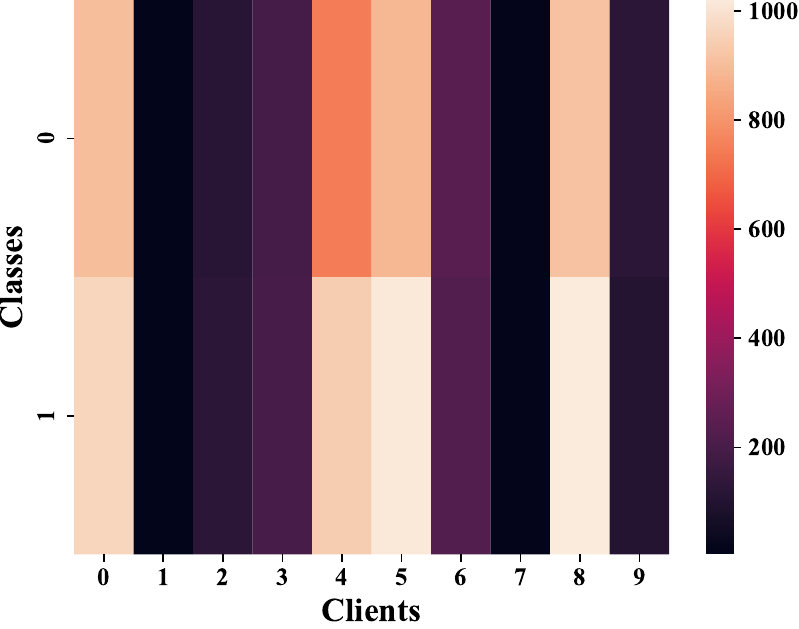}
    \includegraphics[width=0.24\textwidth, height=0.16\textwidth]{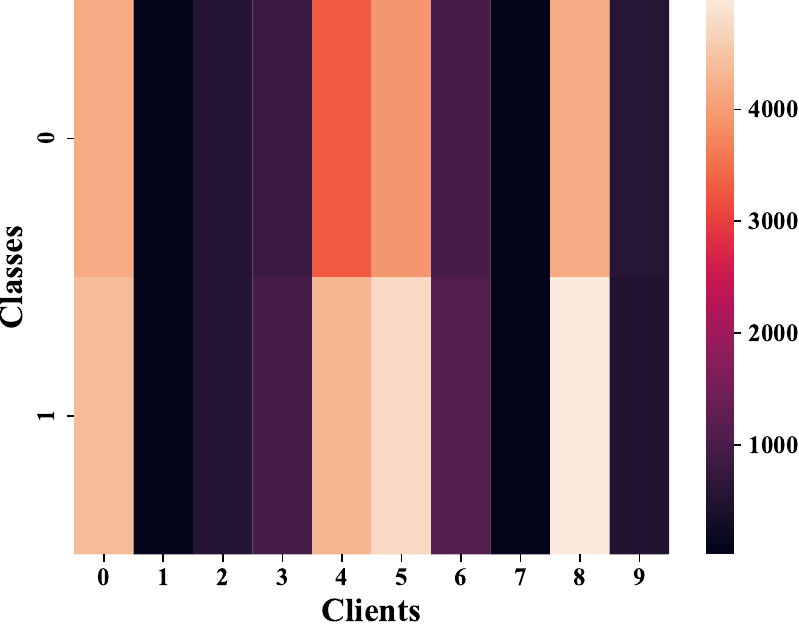}  
    \begin{center}
        \footnotesize
        FEMNIST-Test/Train \qquad\qquad\qquad\qquad
        Sent140-10-Test/Train
    \end{center}

    \includegraphics[width=0.24\textwidth, height=0.16\textwidth]{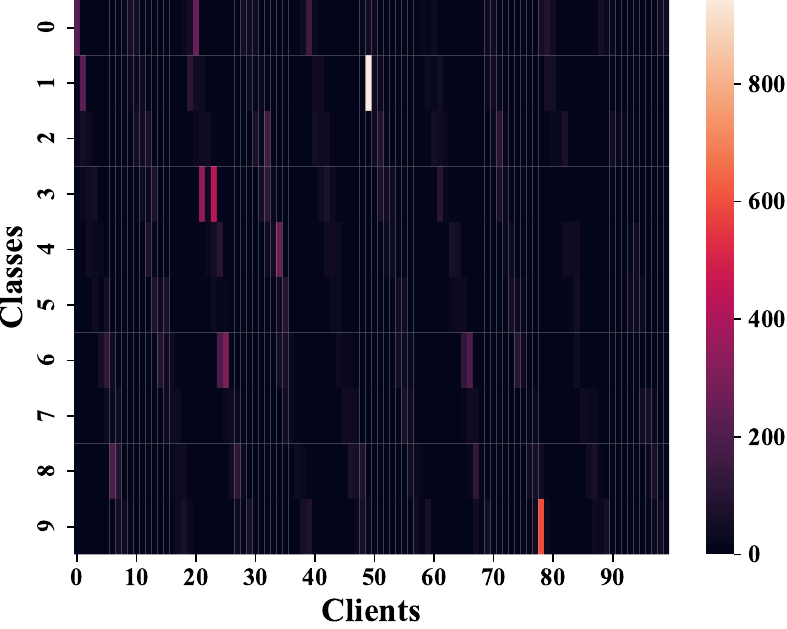}
    \includegraphics[width=0.24\textwidth, height=0.16\textwidth]{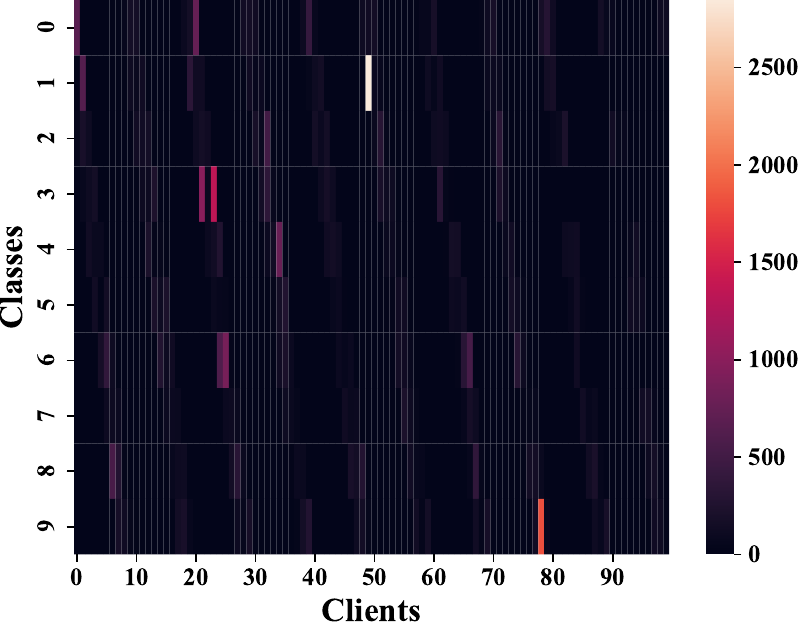}  
    \begin{center}
        \footnotesize 
        MNIST-Test/Train
    \end{center}
    
    \caption{The visualization of the non-i.i.d. data distributions of MNIST, CIFAR-10, FMNIST, FEMNIST and Sent140.}
    \label{appdx_fig_niid}
\end{figure*}

\subsection{More About Hyper-Parameter Effect}

We post the hyper-parameter effects of $\eta$ and $\lambda$ on FEMNIST, FMNIST, MNIST and Sent140 and of $\eta$ on CIFAR-10 in Figure~\ref{appdx_fig_hpe_femnist}-~\ref{appdx_fig_hpe_sent140_cifar-10}. We haven't put the effects of $\lambda$ on CIFAR-10 for better visualization of the effects of more sensitive eta, as well as our equipment limitations, and the fact that other non-linear models for image classification are already demonstrated on FEMNIST, FMNIST and MNIST. The results of these figures are in the same hyper-parameter settings as mentioned in Section~\ref{ssec_exps} except the varying hyper-parameters.

\begin{figure*}[ht]
    \centering
    \includegraphics[width=0.24\textwidth, height=0.16\textwidth]
    {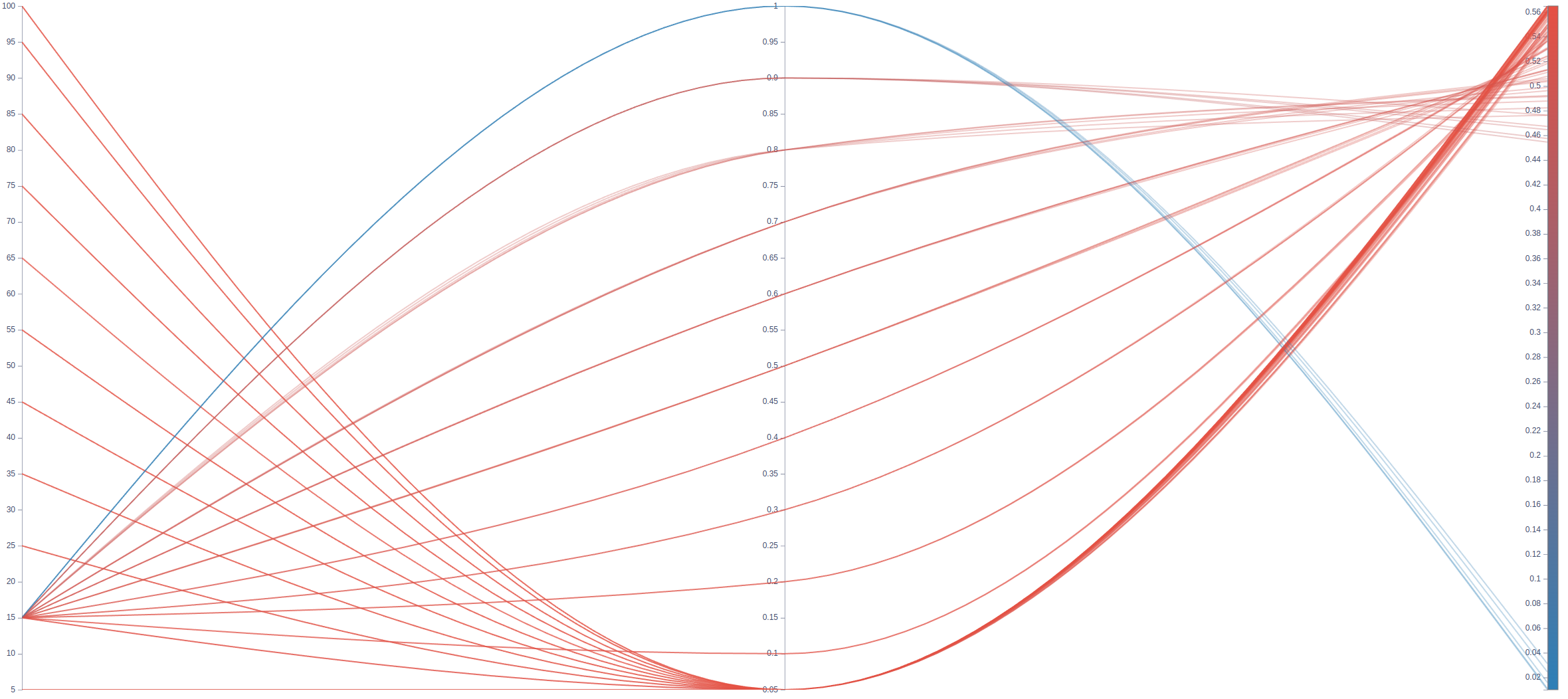}
    \includegraphics[width=0.24\textwidth, height=0.16\textwidth]
    {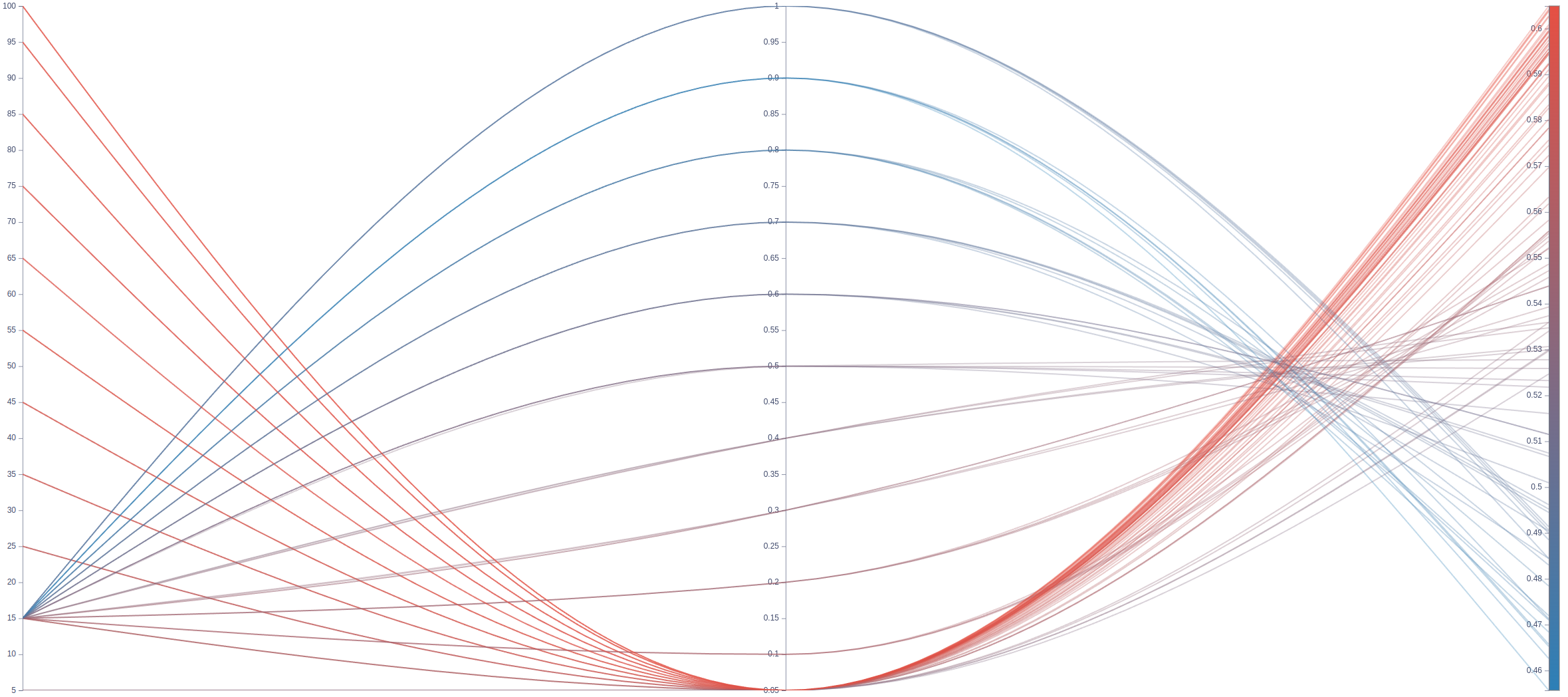}
    \includegraphics[width=0.24\textwidth, height=0.16\textwidth]{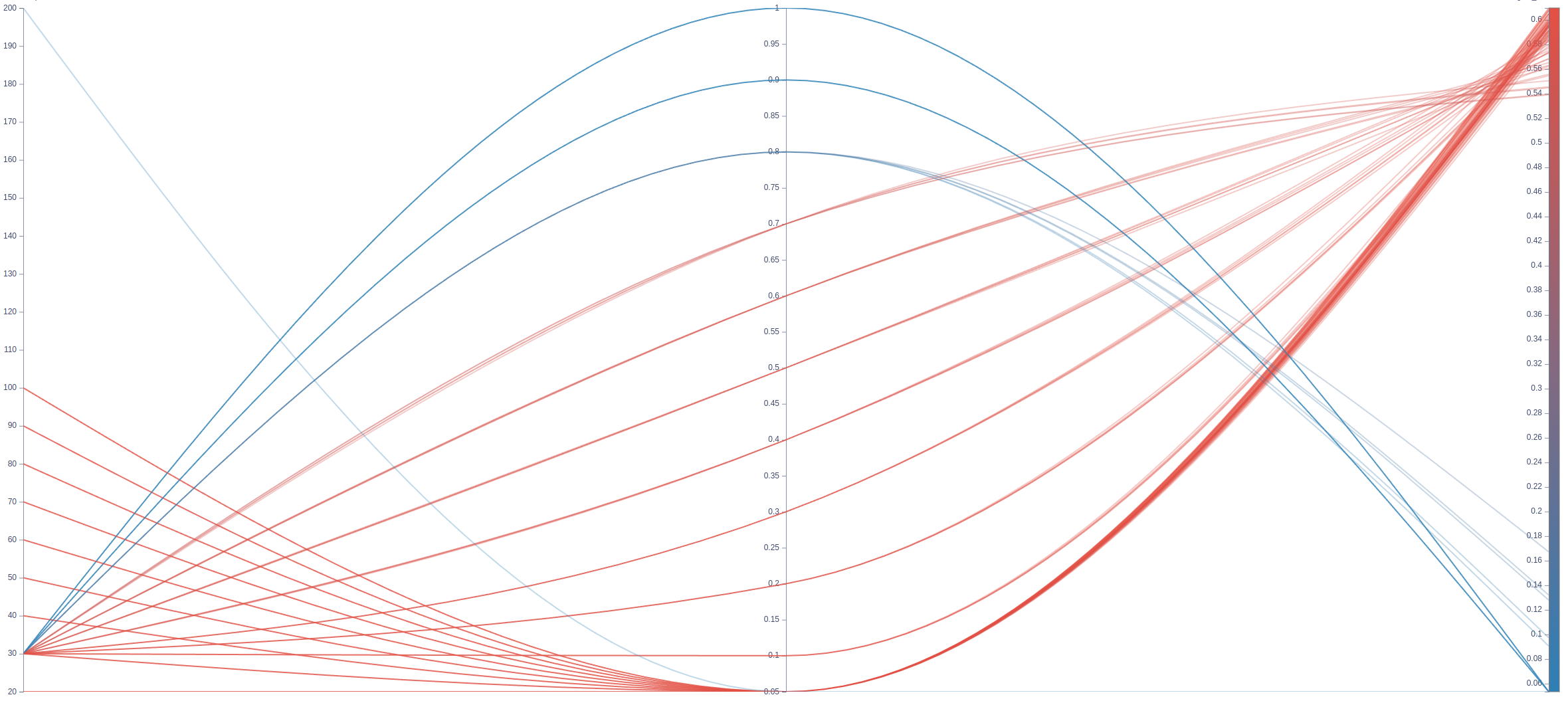}
    \includegraphics[width=0.24\textwidth, height=0.16\textwidth]{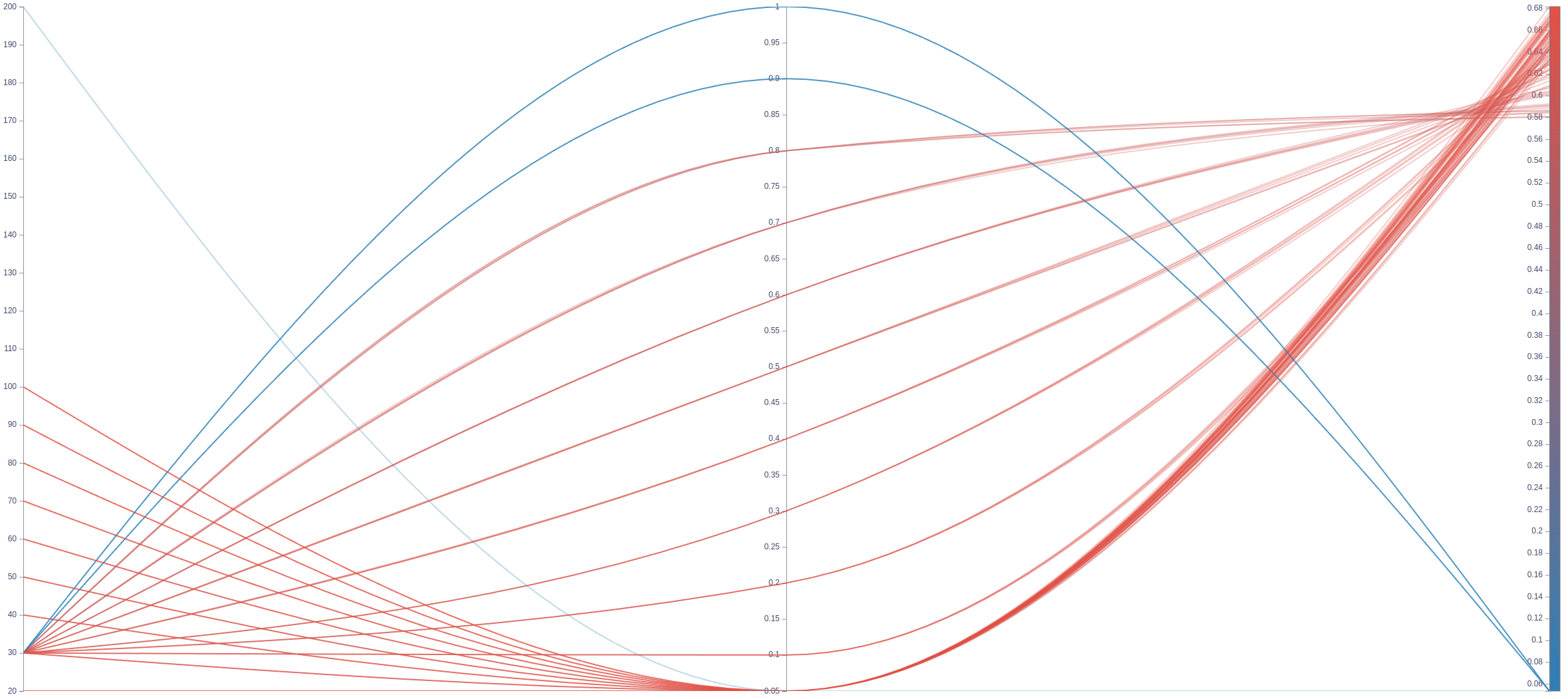}
    \begin{center}
        \footnotesize 
        FEMNIST-MCLR-G \qquad\qquad FEMNIST-MCLR-P
        \qquad\qquad
        FEMNIST-DNN-G \qquad\qquad FEMNIST-DNN-P
    \end{center}
    
    \caption{Hyper-parameter effect: The left, middle and right bars in each figure respectively represent $\lambda$, $\eta$ and test accuracy, ranges of which are respectively [0,100], [0,1] and [0,1] increasing from bottom to top (color from blue to red refers to the accuracy from 0 to 1).}
    \label{appdx_fig_hpe_femnist}
\end{figure*}

\begin{figure*}[ht]%
    \centering
    \includegraphics[width=0.24\textwidth, height=0.16\textwidth]{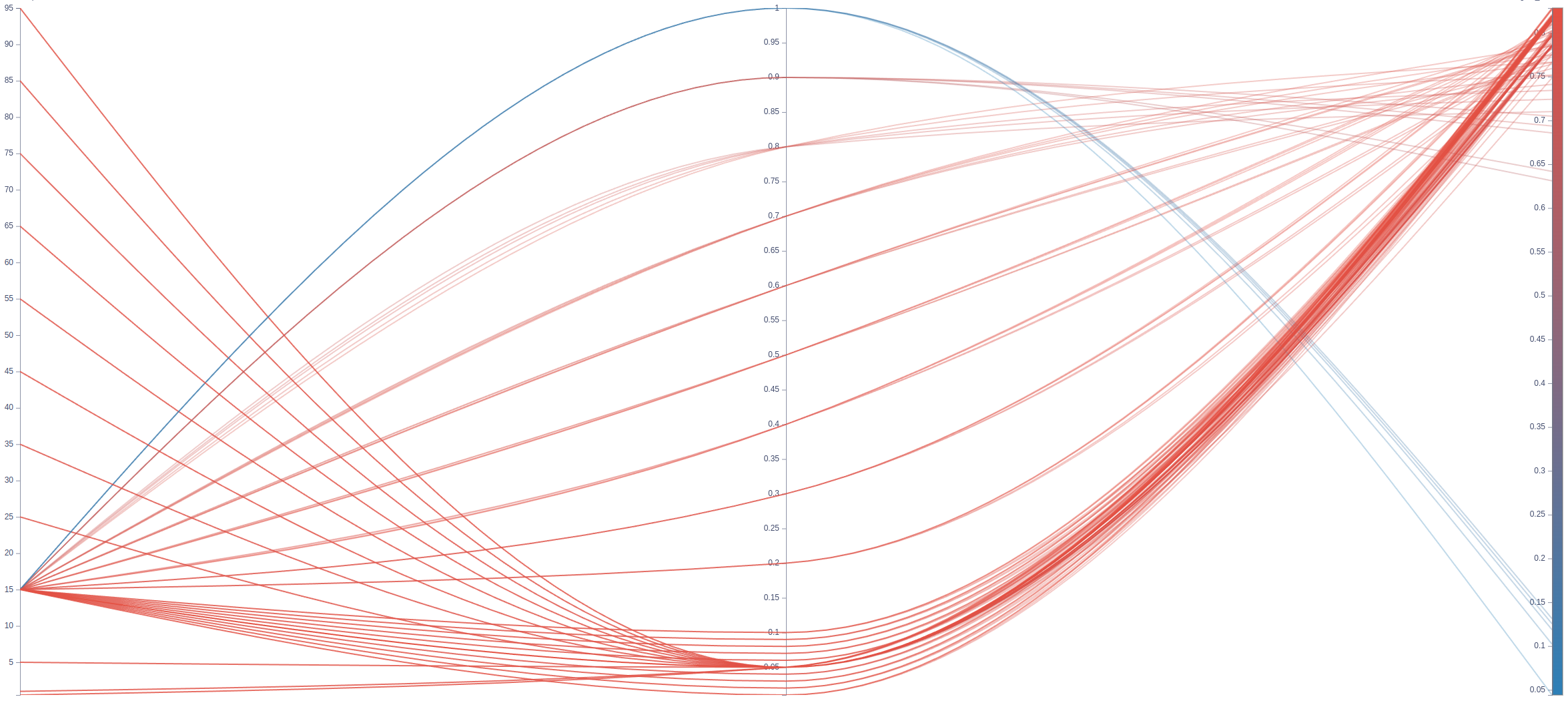}
    \includegraphics[width=0.24\textwidth, height=0.16\textwidth]{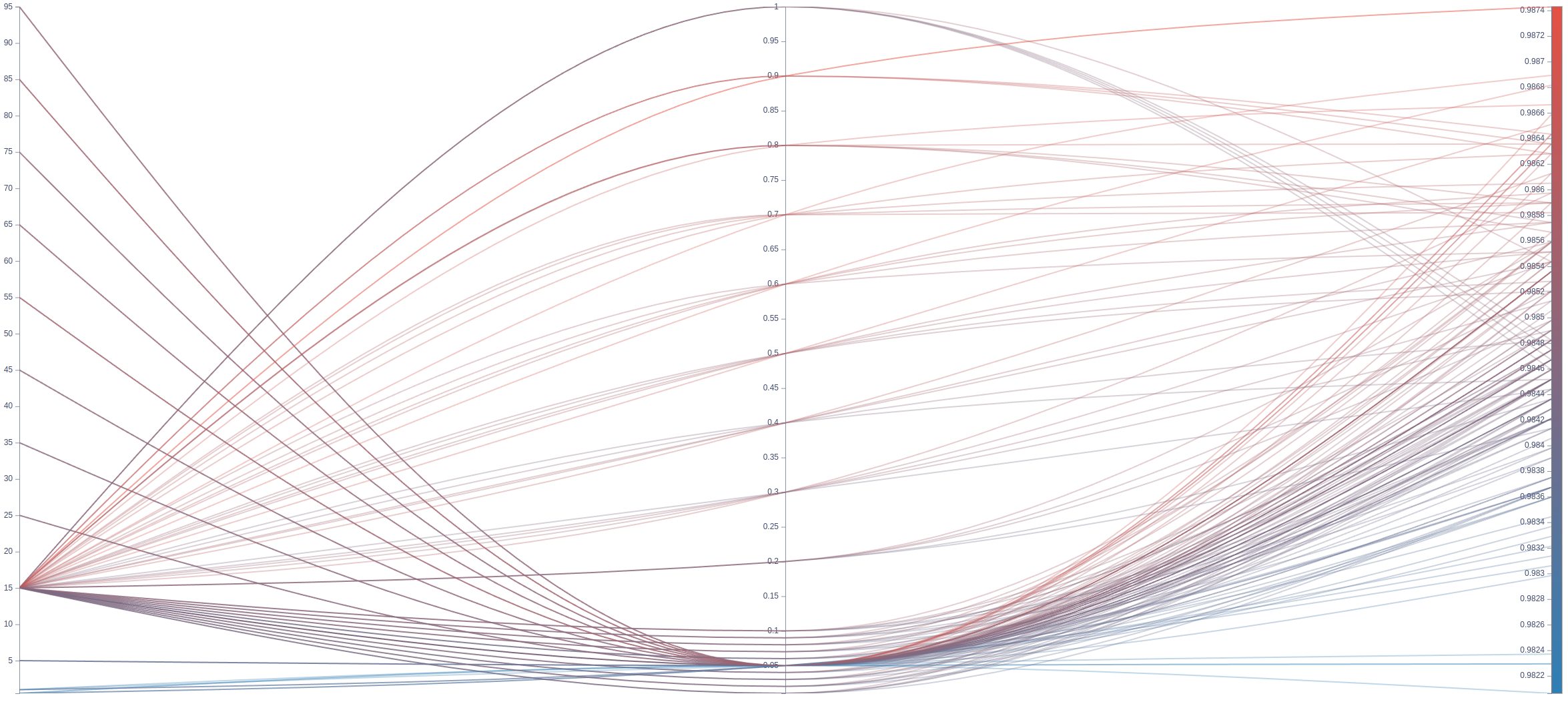}
    \includegraphics[width=0.24\textwidth, height=0.16\textwidth]{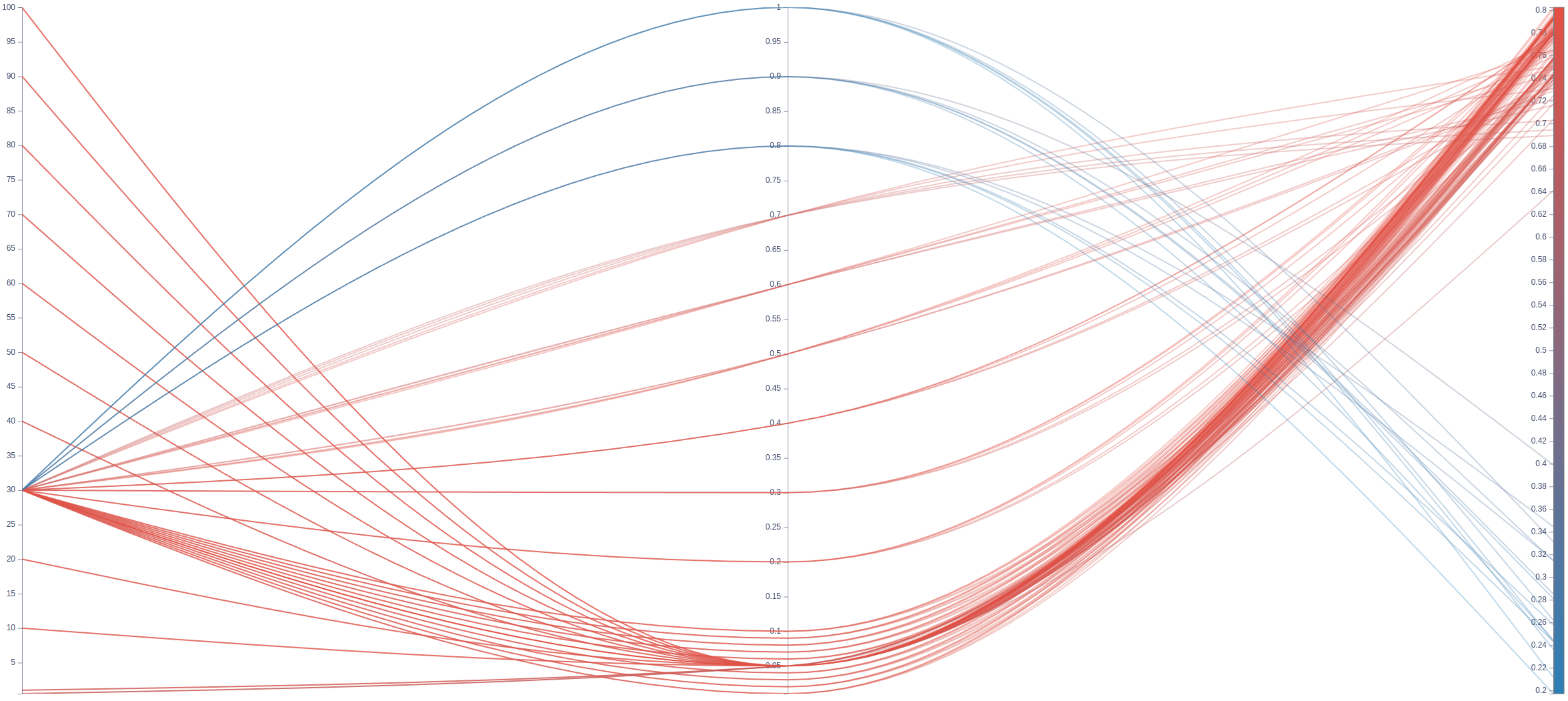}
    \includegraphics[width=0.24\textwidth, height=0.16\textwidth]{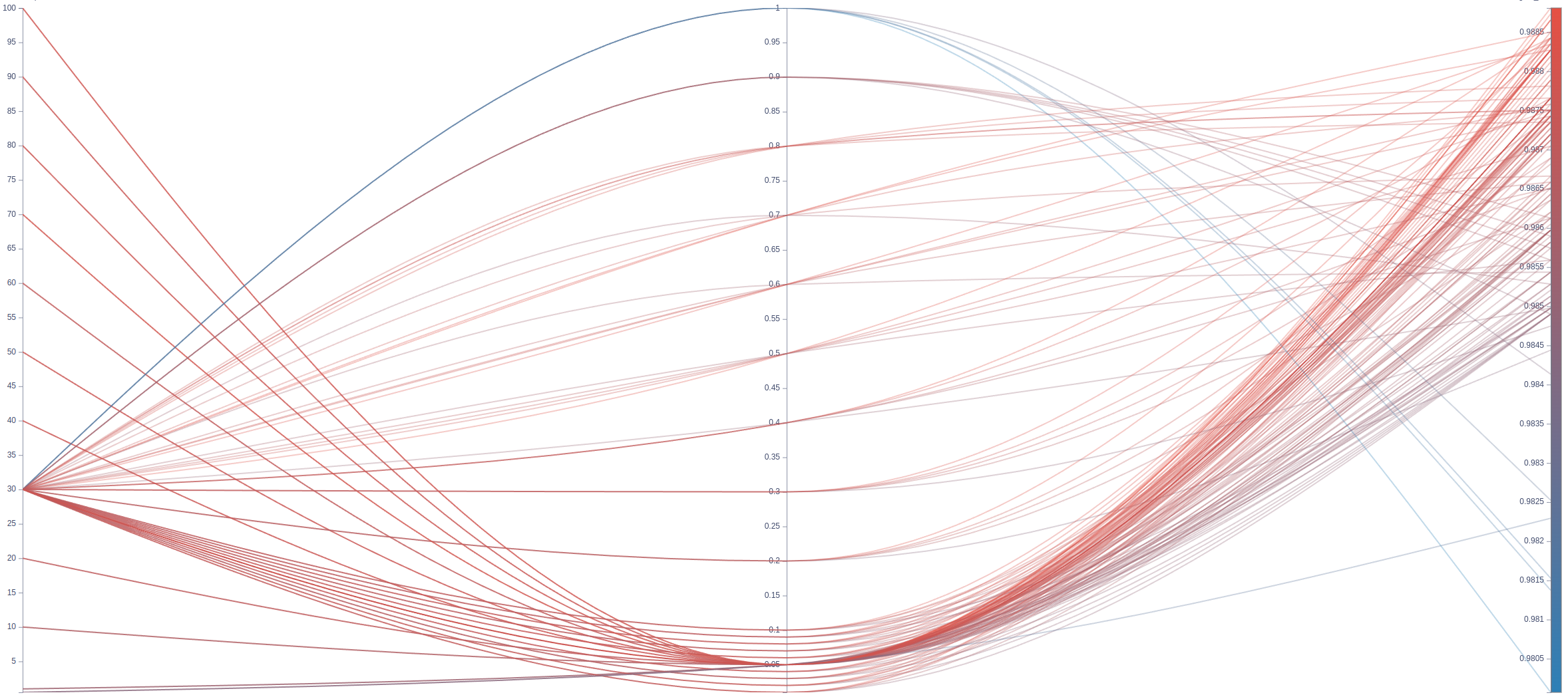}
    \begin{center}
        \footnotesize 
        FMNIST-MCLR-G \qquad\qquad FMNIST-MCLR-P
        \qquad\qquad
        FMNIST-DNN-G \qquad\qquad FMNIST-DNN-P
    \end{center}
    \caption{The left, middle and right bars in each figure respectively represent $\lambda$, $\eta$ and test accuracy, ranges of which are respectively [0,100], [0,1] and [0,1] increasing from bottom to top (color from blue to red).}
    \label{appdx_fig_hpe_fmnist}
\end{figure*}

\begin{figure*}[ht]%
    \centering
    \includegraphics[width=0.24\textwidth, height=0.16\textwidth]{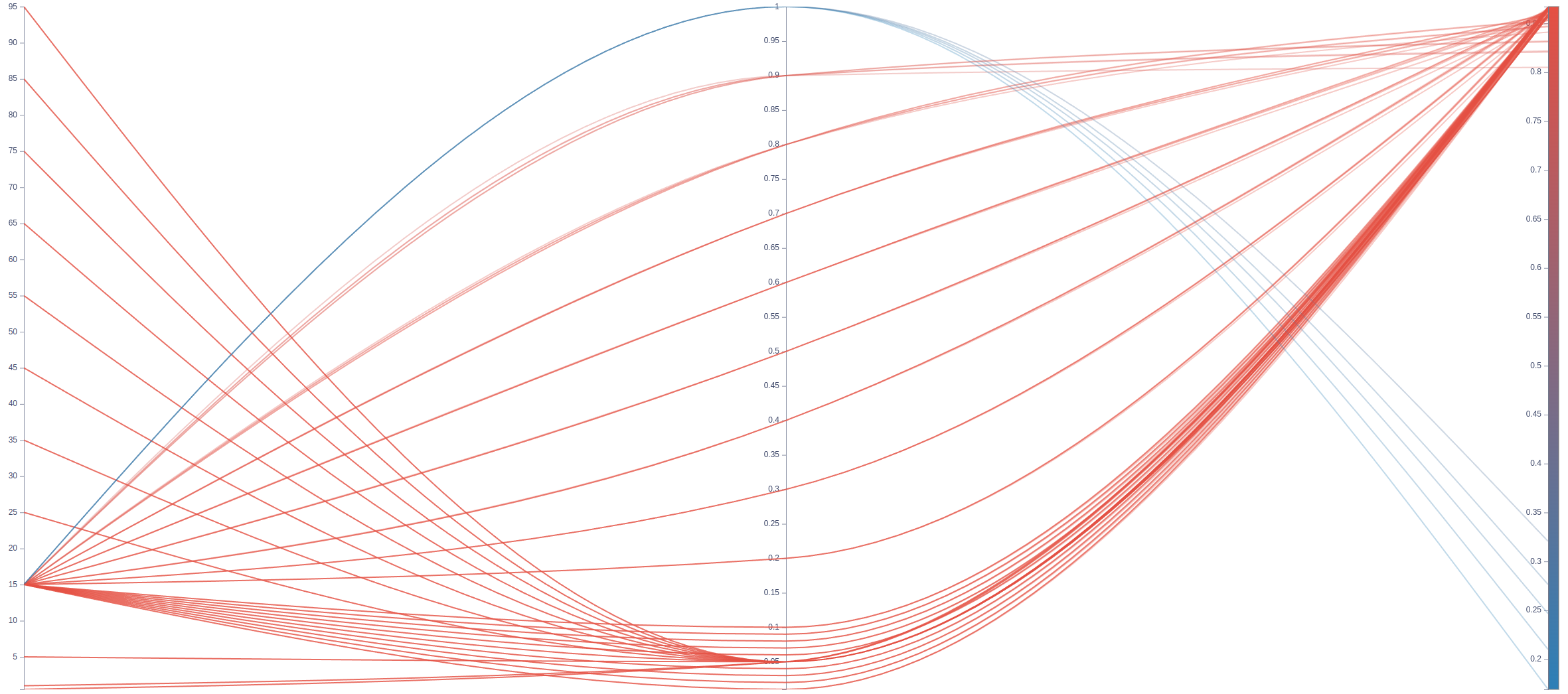}
    \includegraphics[width=0.24\textwidth, height=0.16\textwidth]{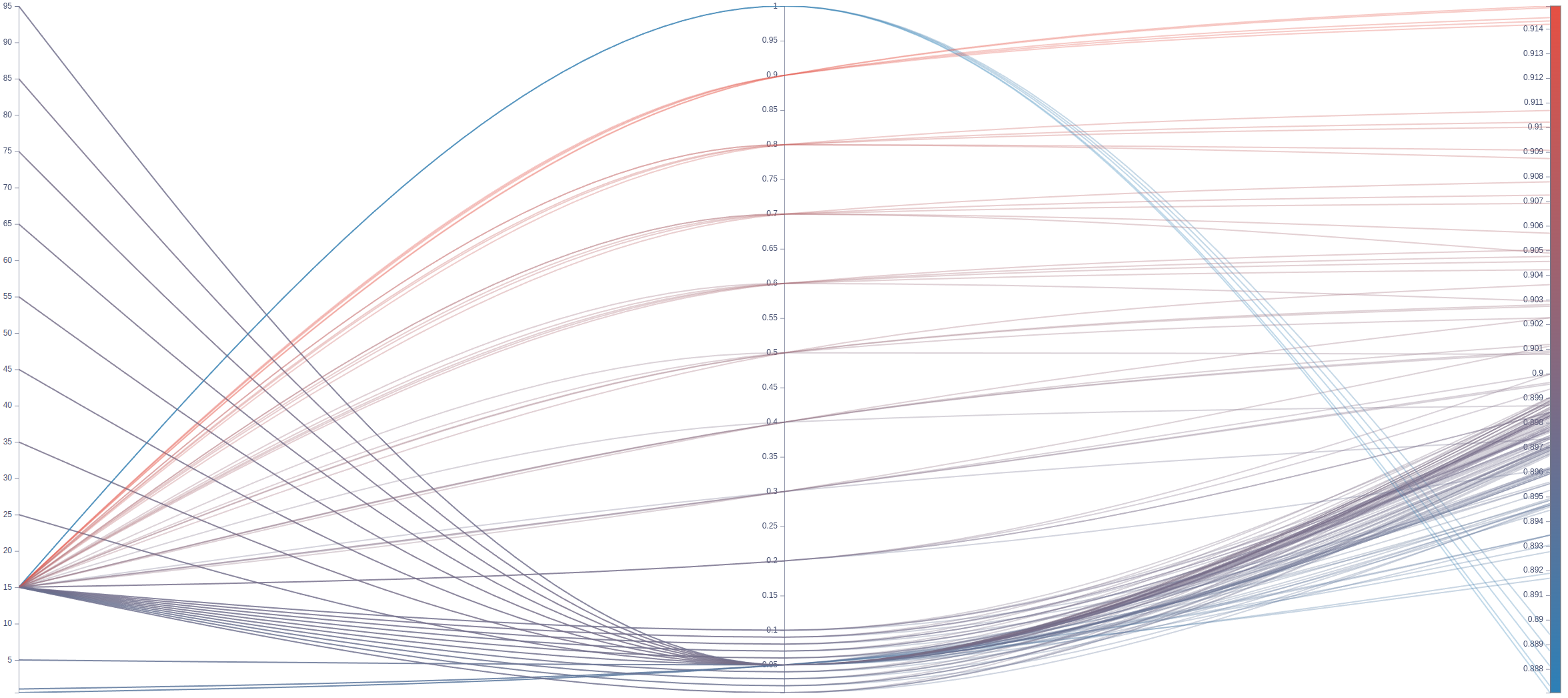}
    \includegraphics[width=0.24\textwidth, height=0.16\textwidth]{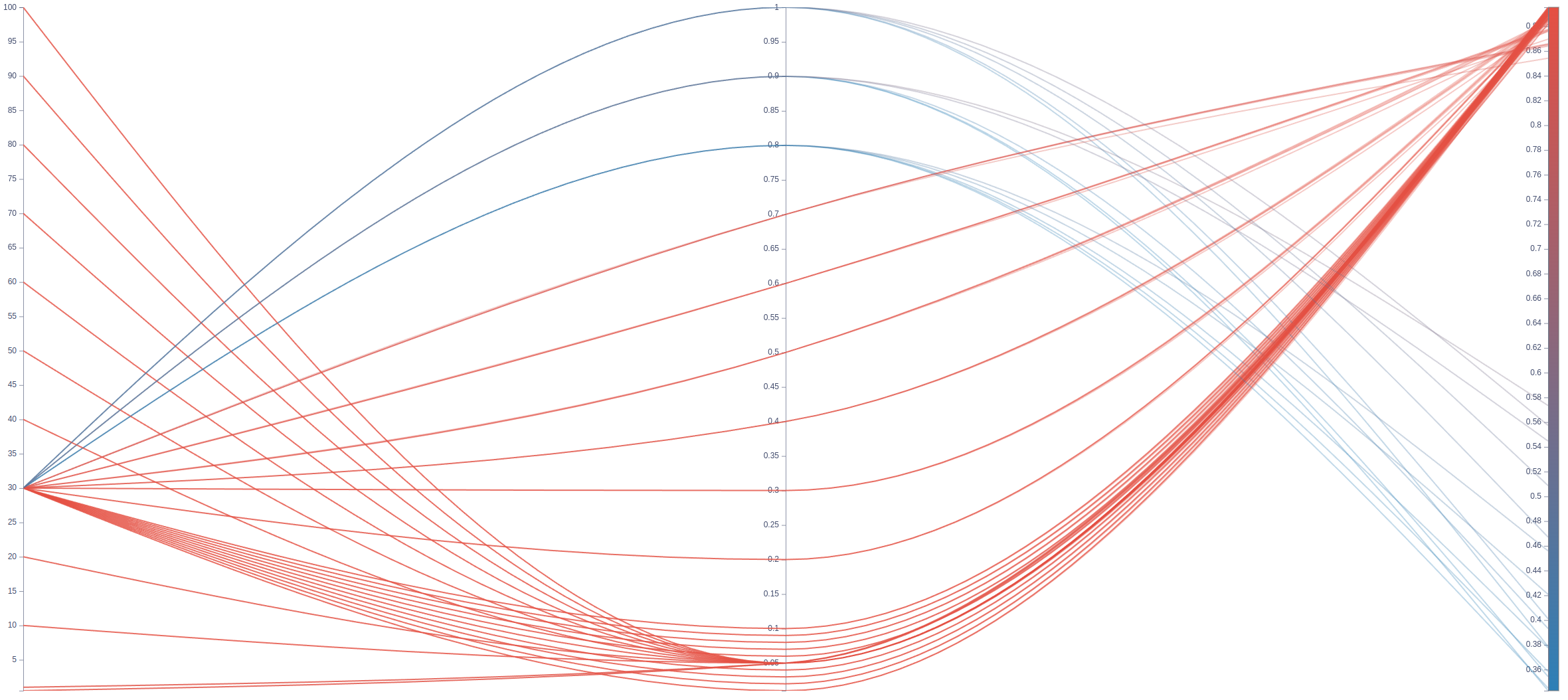}
    \includegraphics[width=0.24\textwidth, height=0.16\textwidth]{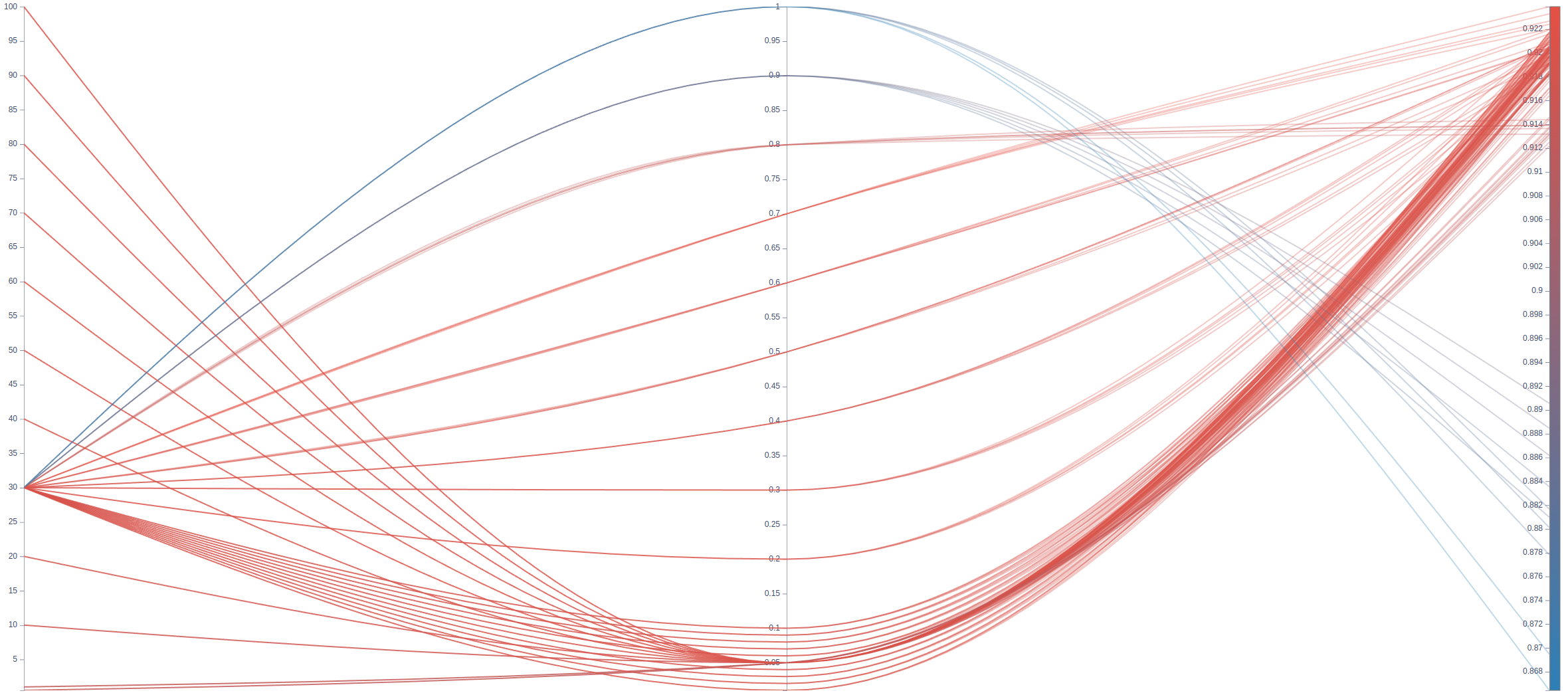}
    \begin{center}
        \footnotesize 
        MNIST-MCLR-G \qquad\qquad MNIST-MCLR-P
        \qquad\qquad
        MNIST-DNN-G \qquad\qquad MNIST-DNN-P
    \end{center}
    \caption{The left, middle and right bars in each figure respectively represent $\lambda$, $\eta$ and test accuracy, ranges of which are respectively [0,100], [0,1] and [0,1] increasing from bottom to top (color from blue to red).}
    \label{appdx_fig_hpe_mnist}
\end{figure*}
\begin{figure*}[ht]%
    \centering
    \includegraphics[width=0.24\textwidth, height=0.16\textwidth]{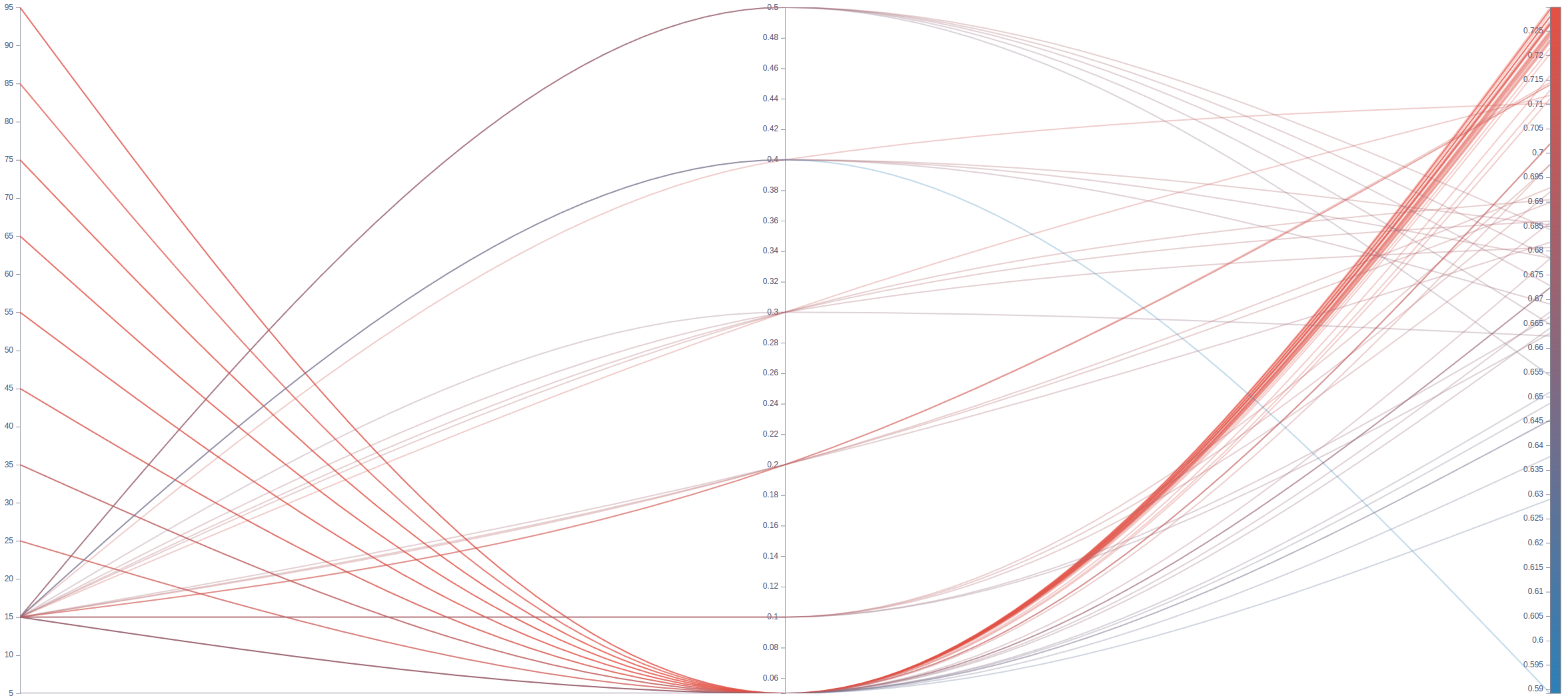}
    \includegraphics[width=0.24\textwidth, height=0.16\textwidth]{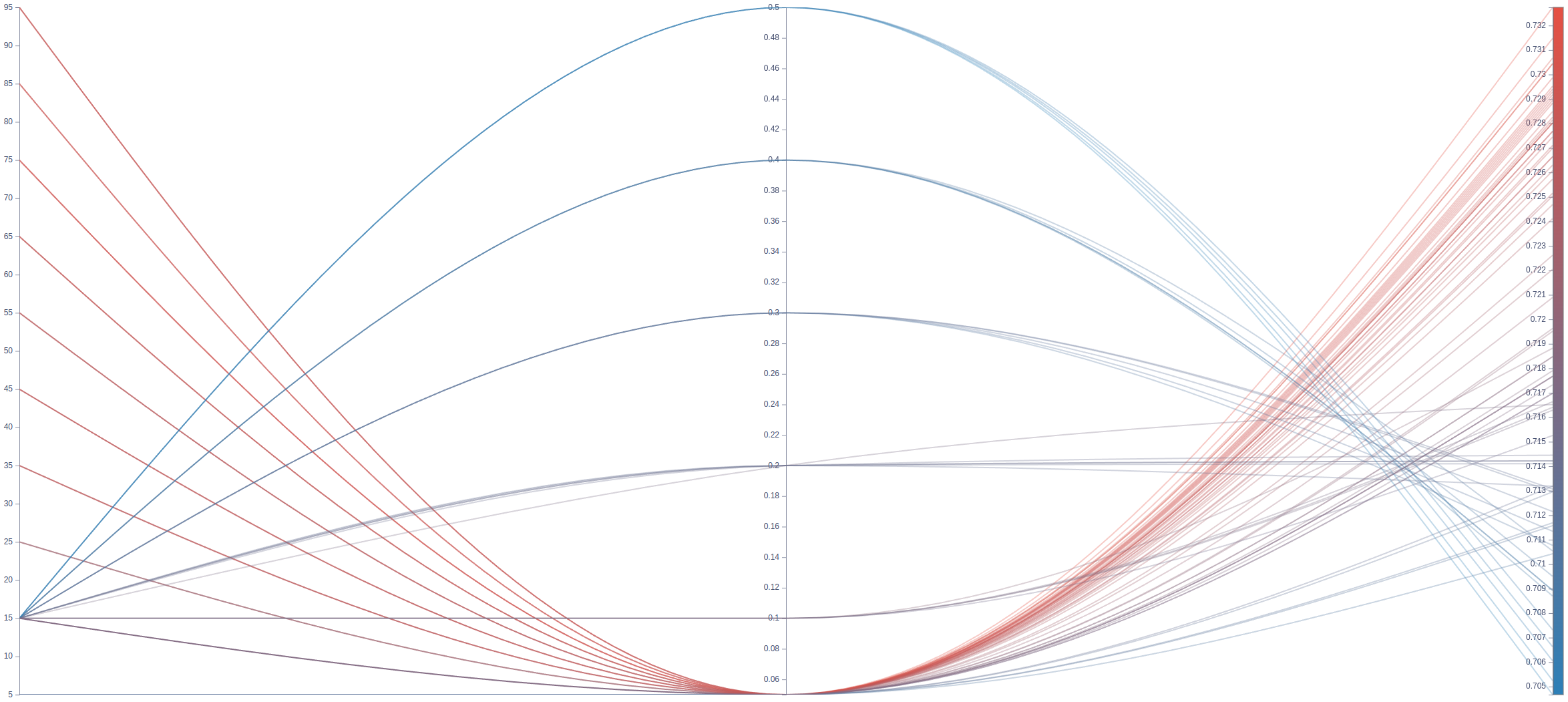}
    \includegraphics[width=0.24\textwidth, height=0.16\textwidth]{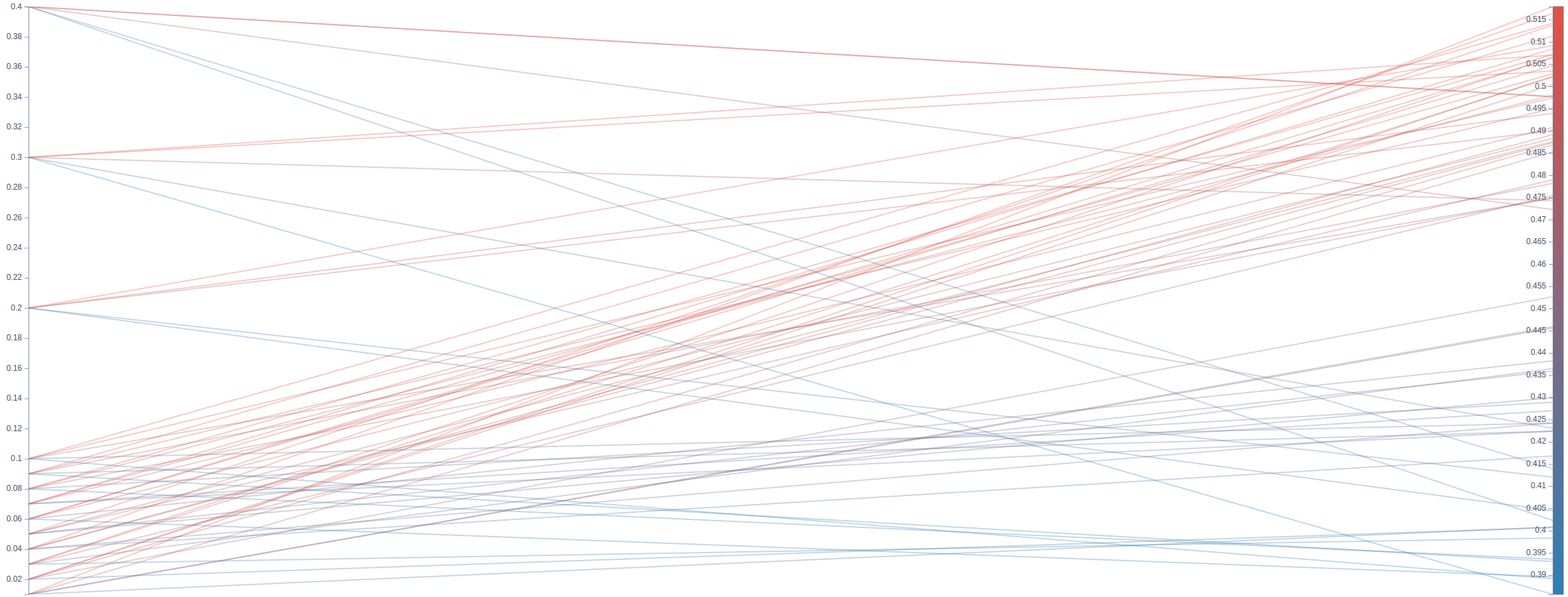}
    \includegraphics[width=0.24\textwidth, height=0.16\textwidth]{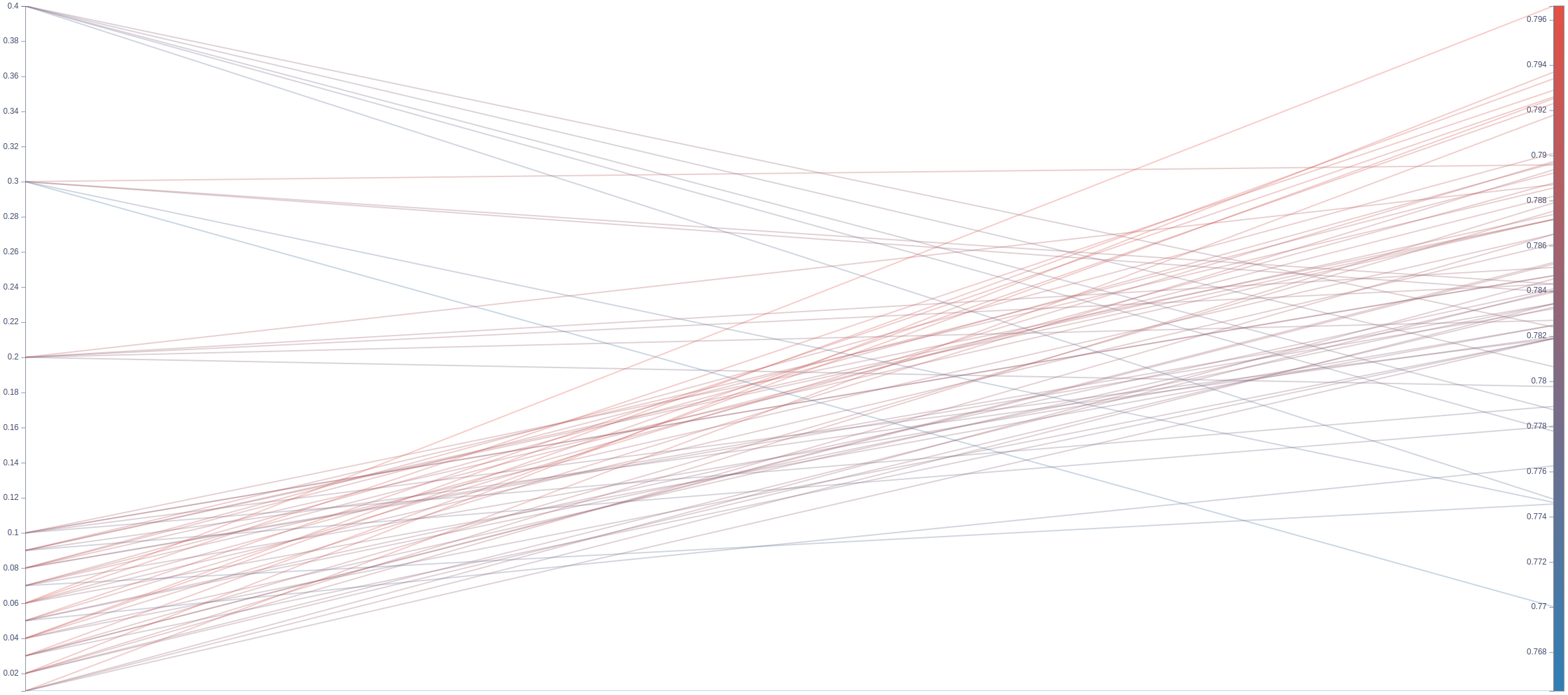}
    \begin{center}
        \footnotesize 
        Sent140-LSTM-G \qquad\qquad Sent140-LSTM-P
        \qquad\qquad
        CIFAR-10-CNN-G \qquad\qquad CIFAR-10-CNN-G
    \end{center}
    \caption{The left, middle and right bars in each figure respectively represent $\lambda$ and test accuracy, ranges of which are respectively [0,100] and [0,1] increasing from bottom to top (color from blue to red). The ranges of $\eta$ are respectively [0,0.5] and [0,0.4] in settings of CIFAR-10-CNN and Sent140.}
    \label{appdx_fig_hpe_sent140_cifar-10}
\end{figure*}

\subsection{More about Deviation Analysis}
\label{appdx_ssec_mDevAnalysis}

The deviations of the global and local test on each settings are shown in Figure~\ref{appdx_fig_psnlz_com} mentioned in Section~\ref{ssec_personalization} in the main paper.

\begin{figure*}[ht]
    \centering
    \includegraphics[width=0.24\textwidth, height=0.213\textwidth]{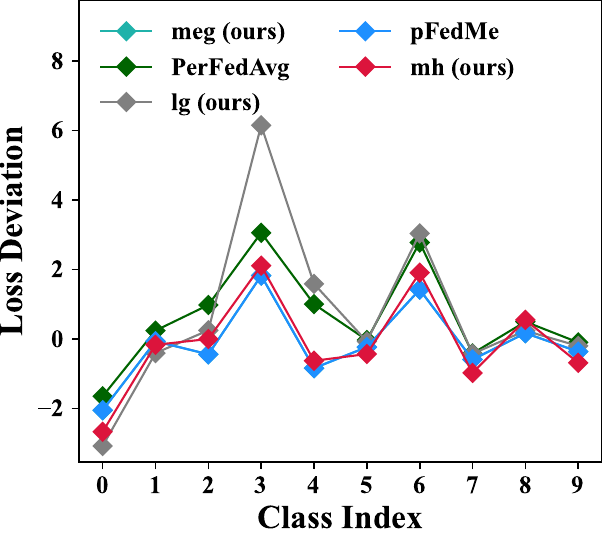}
    \includegraphics[width=0.24\textwidth, height=0.213\textwidth]{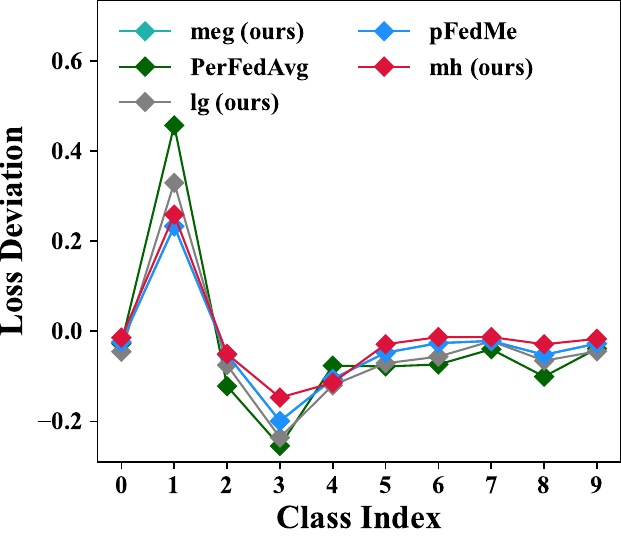}
    \includegraphics[width=0.24\textwidth, height=0.213\textwidth]{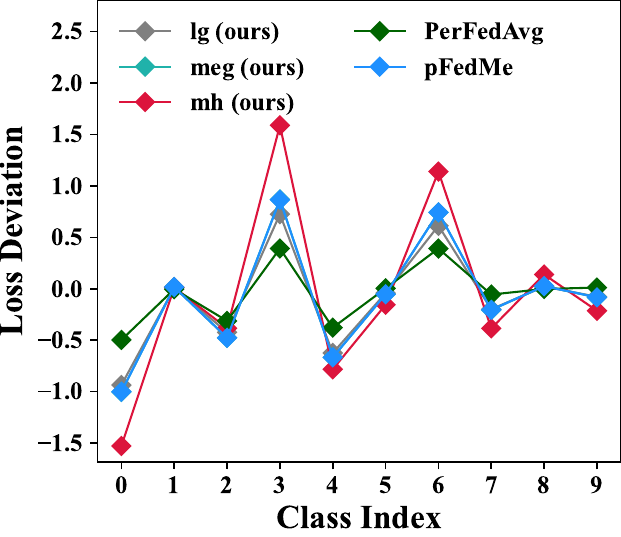}
    \includegraphics[width=0.24\textwidth, height=0.213\textwidth]{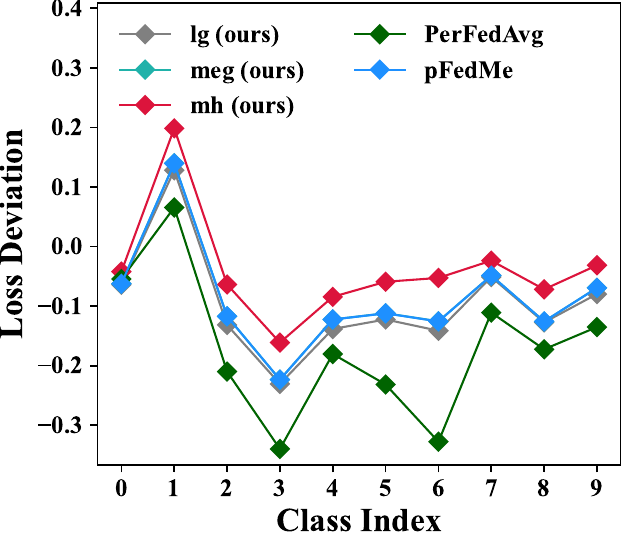}
    \begin{center}
        \footnotesize 
        FMNIST-DNN-G \quad\qquad FMNIST-DNN-L
        \qquad\qquad
        FMNIST-MCLR-G \qquad\qquad FMNIST-MCLR-L
    \end{center}
    \includegraphics[width=0.24\textwidth, height=0.213\textwidth]{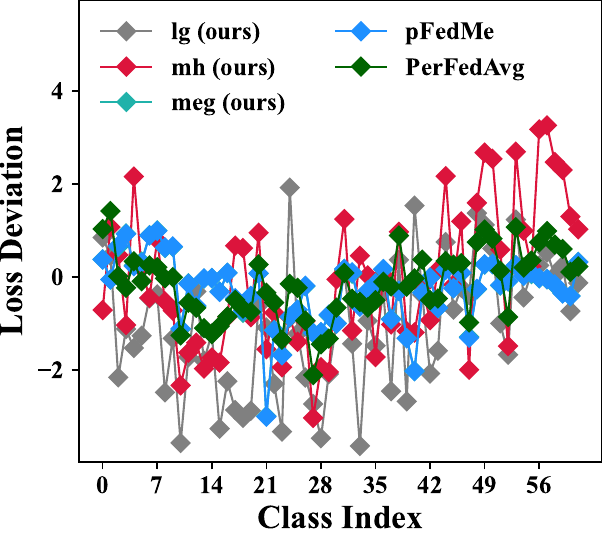}
    \includegraphics[width=0.24\textwidth, height=0.213\textwidth]{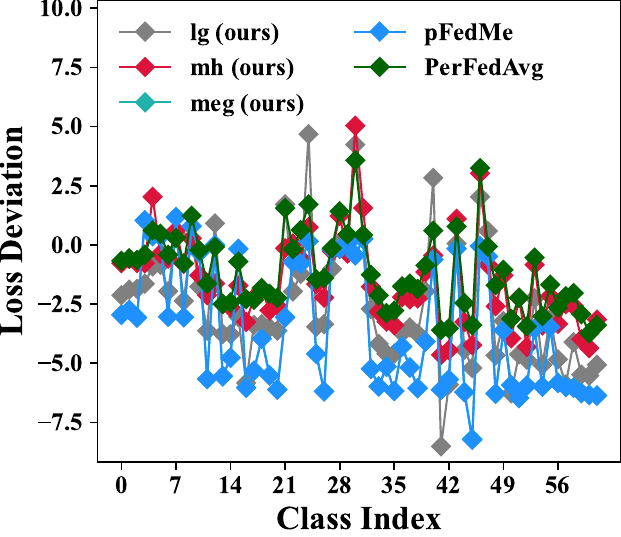}
    \includegraphics[width=0.24\textwidth, height=0.213\textwidth]{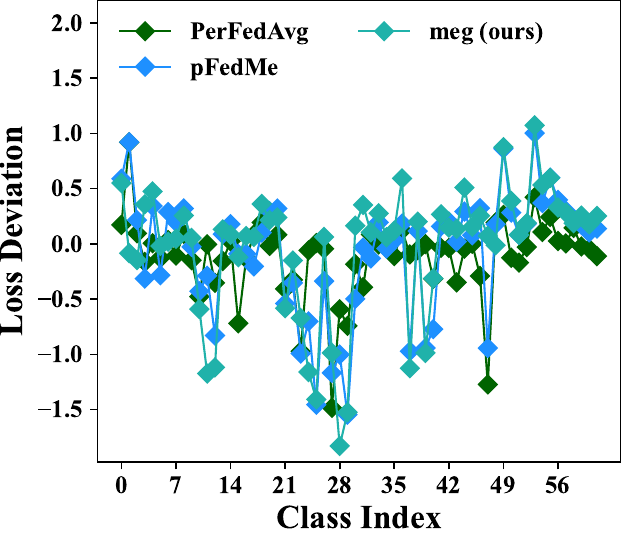}
    \includegraphics[width=0.24\textwidth, height=0.213\textwidth]{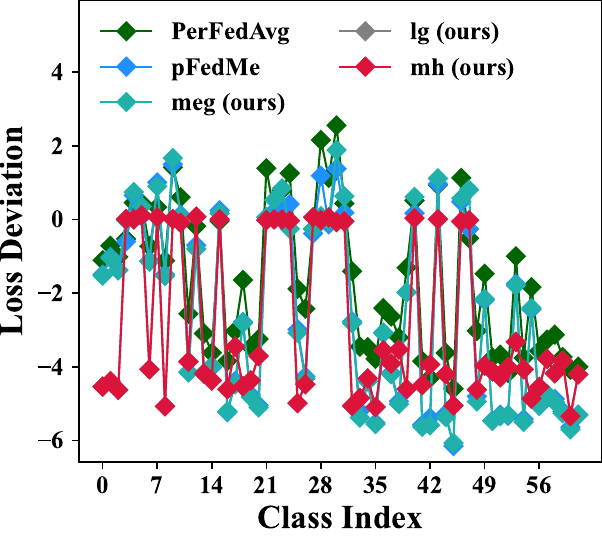}
    \begin{center}
        \footnotesize 
        FEMNIST-DNN-G \quad\qquad FEMNIST-DNN-L
        \qquad
        FEMNIST-MCLR-G \qquad\qquad FEMNIST-MCLR-L
    \end{center}
    \includegraphics[width=0.24\textwidth, height=0.213\textwidth]{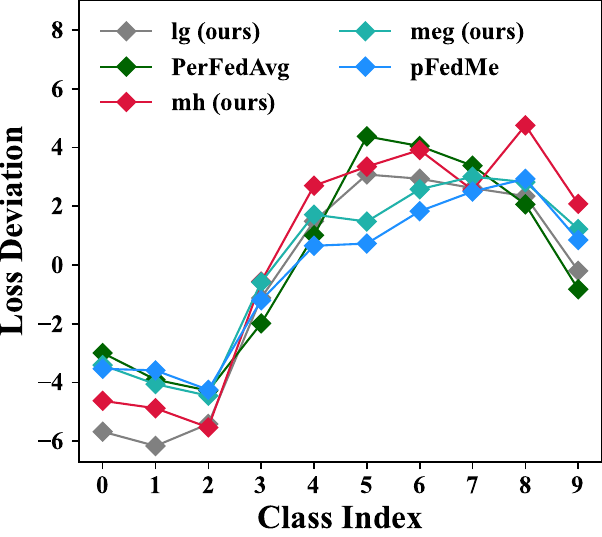}
    \includegraphics[width=0.24\textwidth, height=0.213\textwidth]{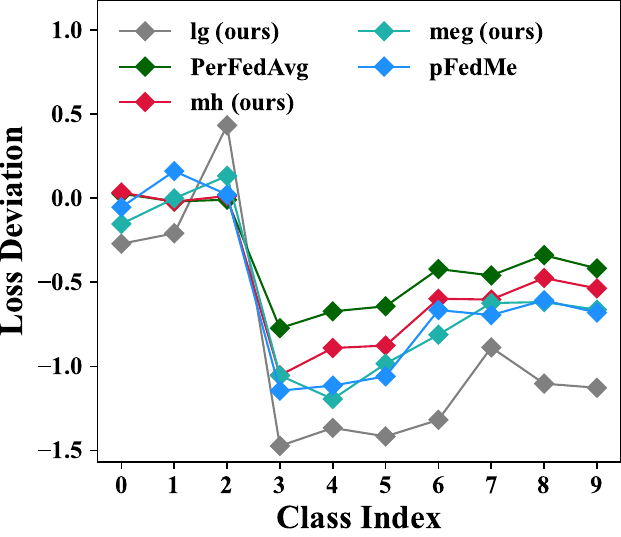}
    \includegraphics[width=0.24\textwidth, height=0.213\textwidth]{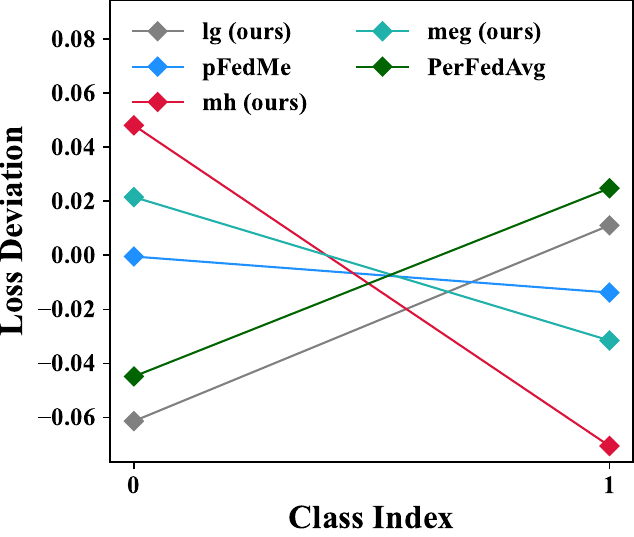}
    \includegraphics[width=0.24\textwidth, height=0.213\textwidth]{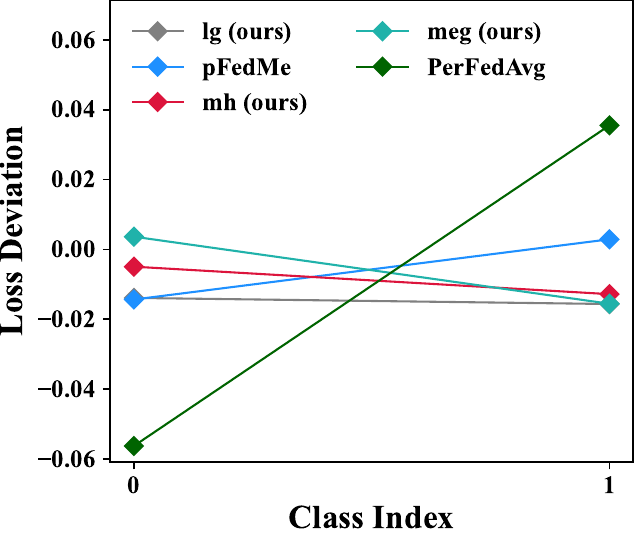}
    \begin{center}
        \footnotesize 
        CIFAR-10-CNN-G \quad\qquad CIFAR-10-CNN-L
        \qquad\qquad
        Sent140-LSTM-G \qquad\qquad Sent140-LSTM-L
    \end{center}
    \caption{The loss deviation of our experiments in Section~\ref{sec_exp} on the first client on settings: FEMNIST-DNN/MCLR, FMNIST-DNN/MCLR, CIFAR-10-CNN and Sent140-LSTM.}
    \label{appdx_fig_psnlz_com}
\end{figure*}

\subsection{Experiments about Instability and Robustness on Aggregation Noise and Data Heterogeneity}
\label{appdx_ssec_instability}

In this section, we experimentally demonstrate the instability of the global model in $\mathbf{mh}$ at small aggregation ratios by comparing the performances of clients with different aggregation numbers. Additionally, we also conduct experiments on different data heterogeneity settings.

The experimental settings in Section~\ref{ssec_exps} of the main paper have been utilized, with the exception of the client count for aggregation at the culmination of each global epoch. To ensure clarity, we present Table~\ref{tbl_aggregation} without well-tuning hyper-parameters (which are random selected in a narrow range with Gaussian variance of 0.01). Notably, supplementary experiments have been repeated 5 times to enhance the robustness of our analysis.

The results of experiments about different Non-IID settings are shown in Table~\ref{tbl_noniid}. The FMNIST in these experiments are equal number of total local data with different local data distribution the distribution are shown in Figure~\ref{appdx_fmnist_noniid}. All experiments employ full aggregation of 40 clients and only 1 local epoch to get rid of the effects from aggregation noise and client drift caused by multiple local update.

An interesting example is that if the local classes are only two classes in the case of an extremely unbalanced heterogeneous distribution, the underlying local test accuracy for a personalized model will be at least the probability of the maximum probability class being sampled, say 90\% of the first class and 10\% of the second class, then a learned knowledge model is at least 90\% accurate.

\begin{figure}
    \centering
    \includegraphics[width=0.16\textwidth]{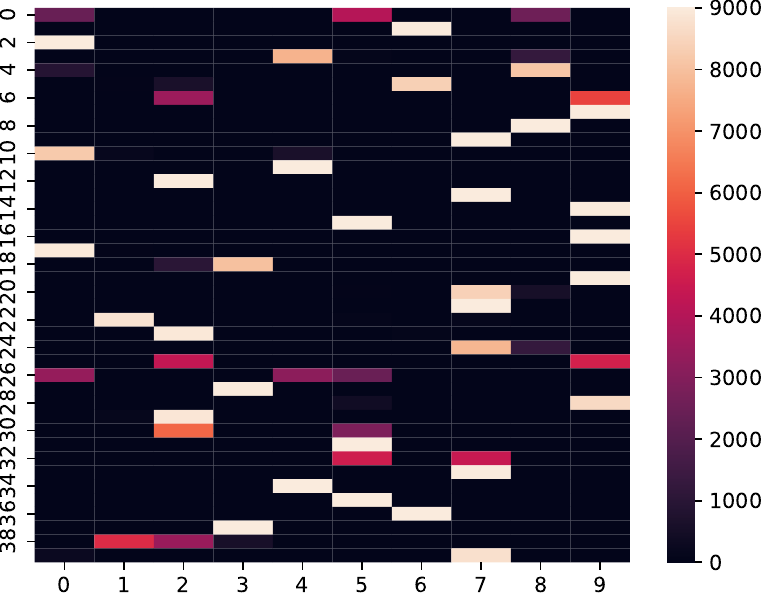}
    \includegraphics[width=0.16\textwidth]{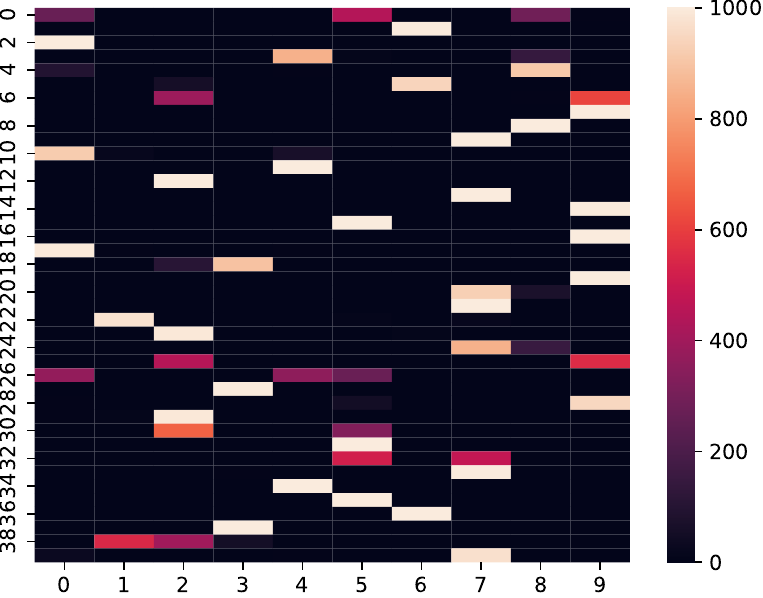}
    \includegraphics[width=0.16\textwidth]{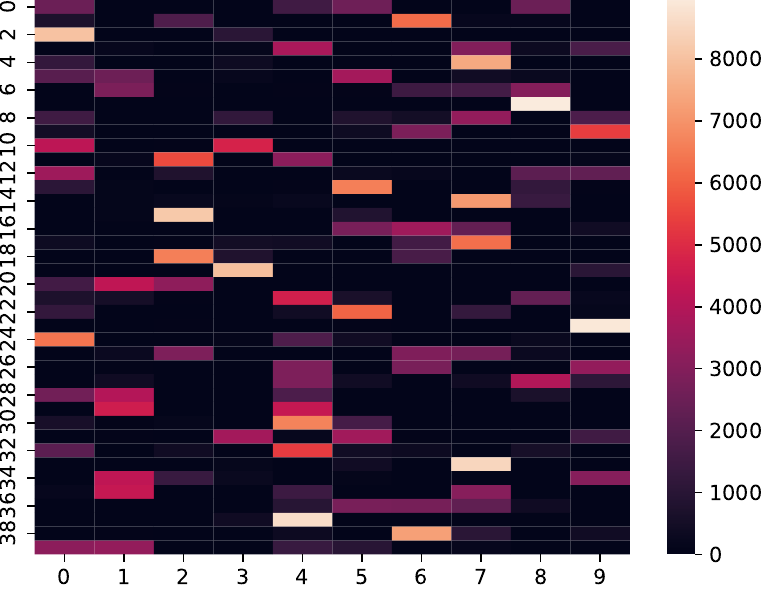}
    \includegraphics[width=0.16\textwidth]{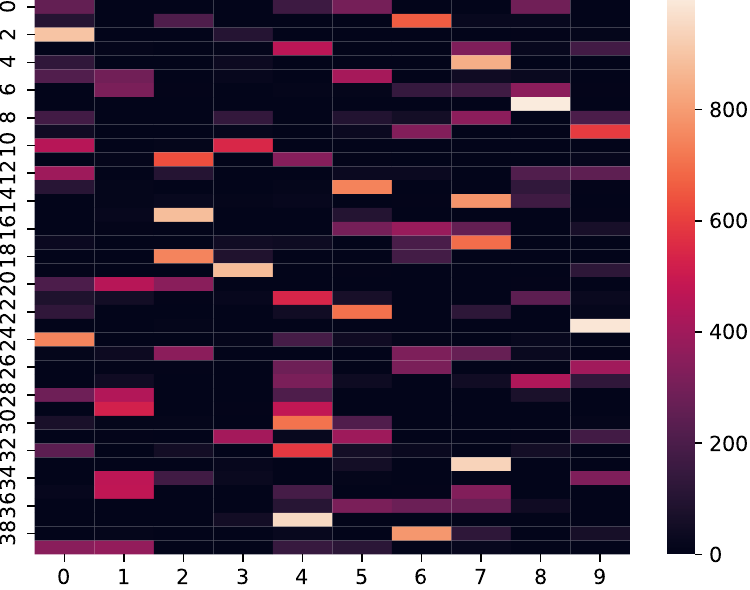}
    \includegraphics[width=0.16\textwidth]{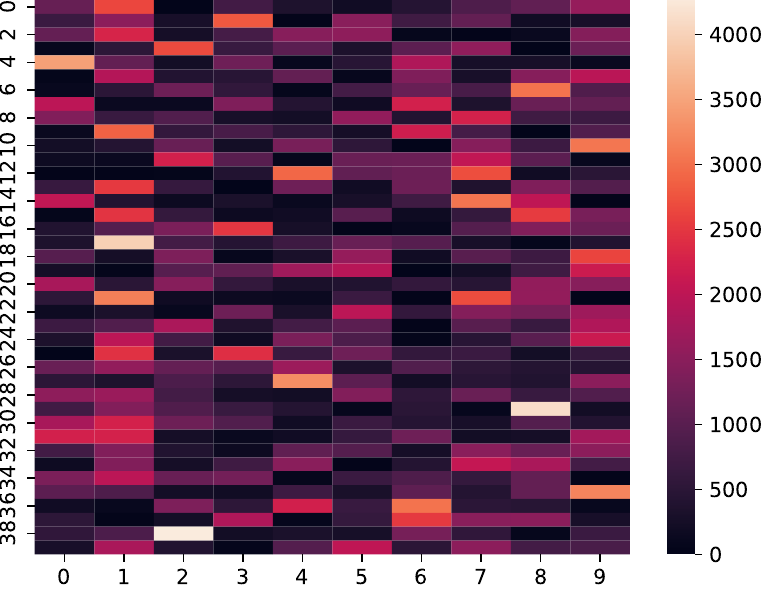}
    \includegraphics[width=0.16\textwidth]{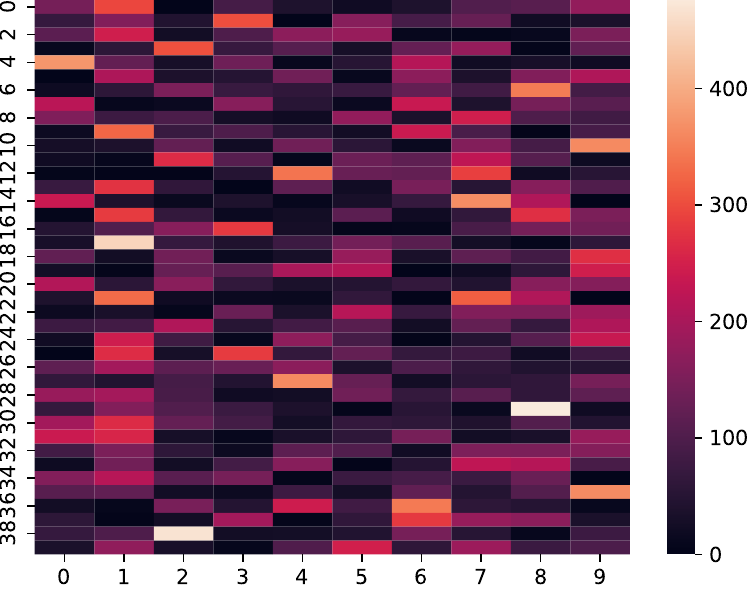}
    \begin{center}
        \footnotesize
        0.01-Test/Train \qquad\qquad\qquad
        0.1-Test/Train \qquad\qquad\qquad\quad
        1-Test/Train
    \end{center}
    \includegraphics[width=0.16\textwidth]{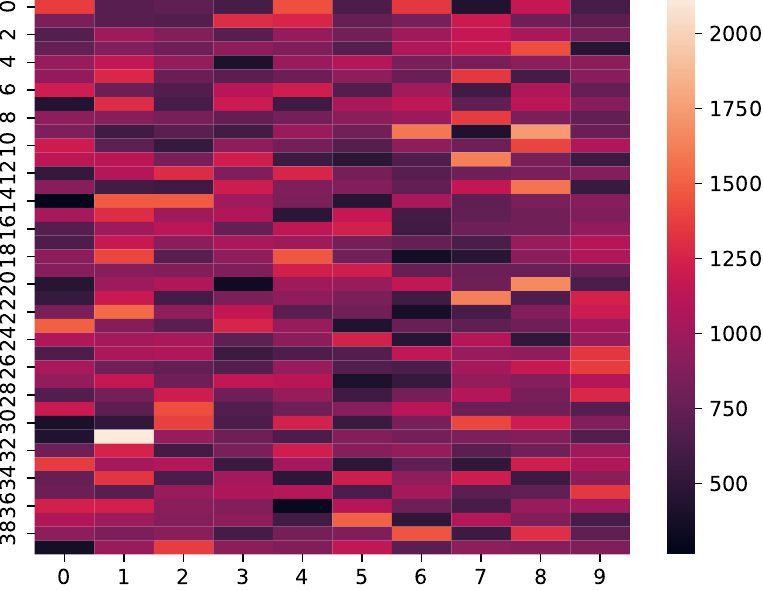}
    \includegraphics[width=0.16\textwidth]{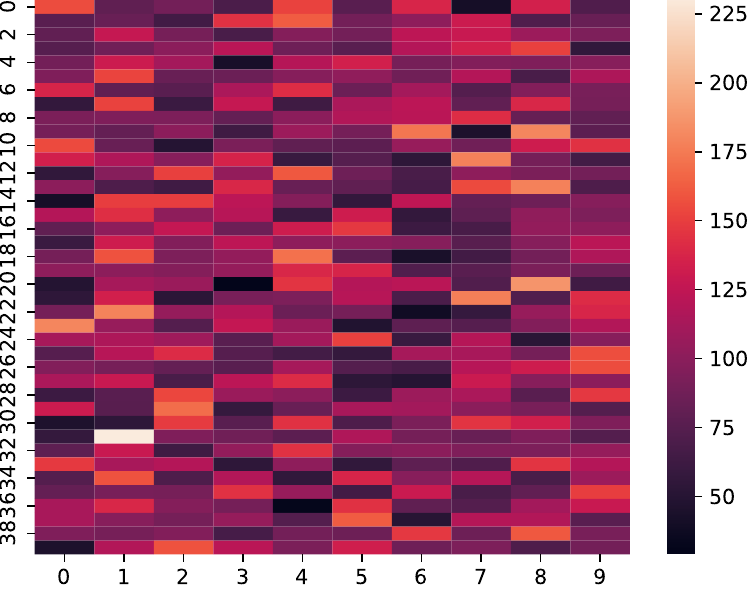}
    \includegraphics[width=0.16\textwidth]{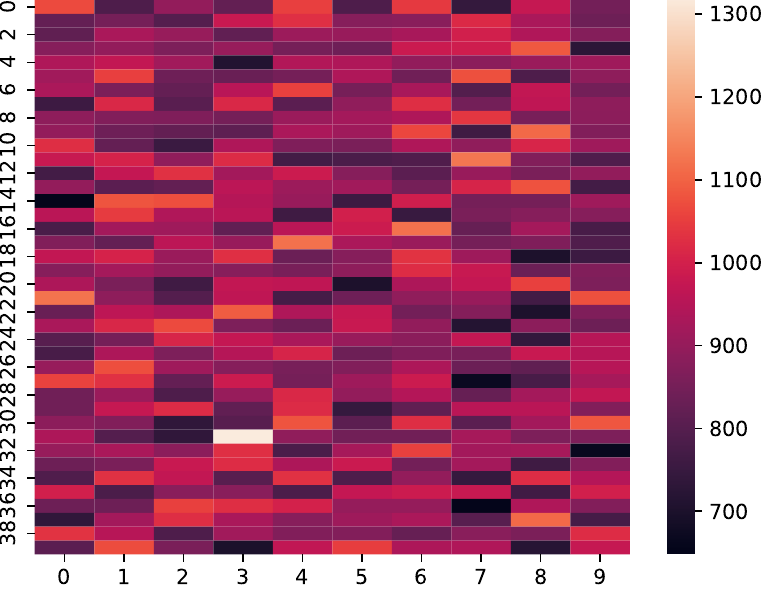}
    \includegraphics[width=0.16\textwidth]{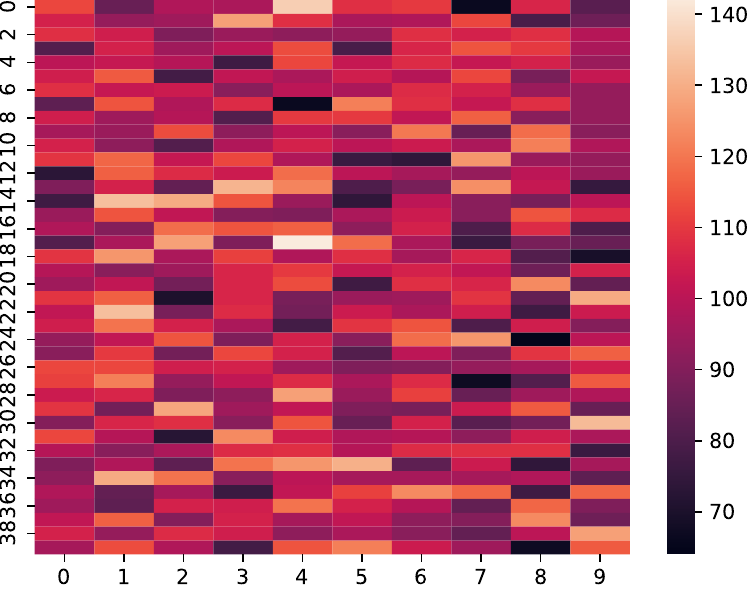}
    \includegraphics[width=0.16\textwidth]{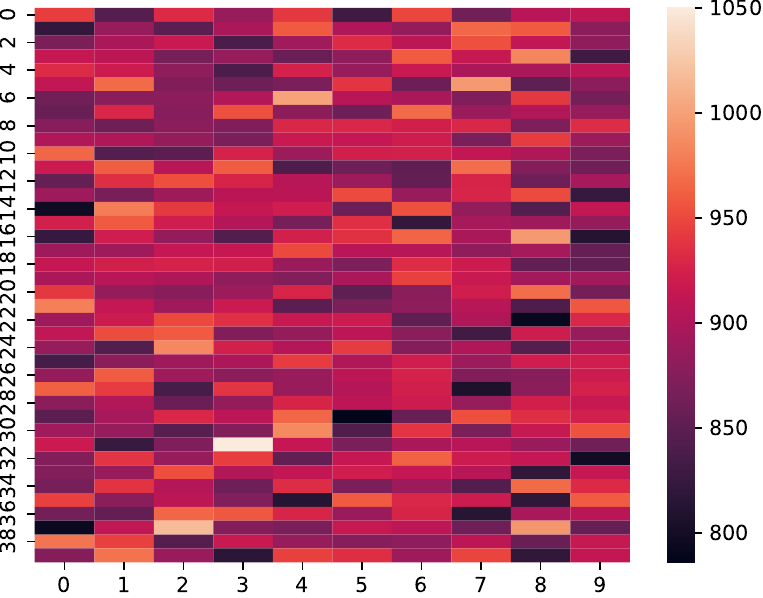}
    \includegraphics[width=0.16\textwidth]{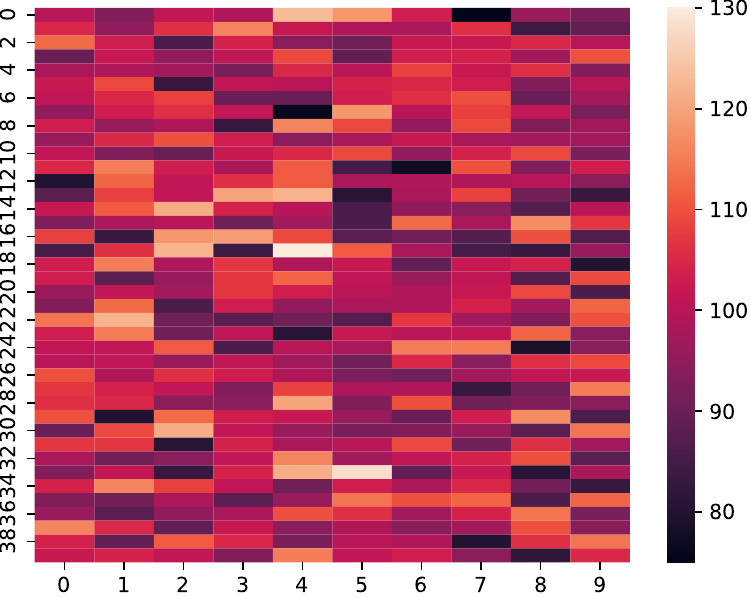}
    \begin{center}
        \footnotesize
        10-Test/Train \qquad\qquad\qquad
        100-Test/Train \qquad\qquad\qquad
        1000-Test/Train
    \end{center}
    \caption{Different heterogeneous distributions of FMNIST. The horizontal and vertical axes represent the different classes and clients respectively.}
    \label{appdx_fmnist_noniid}
\end{figure}

\subsection{Additional Experiments}
The additional experiments with more baselines are shown in Table~\ref{appdx_add_exps} with the same settings mentioned in the Table~\ref{tbl_res_lnlm}.
\begin{table}[t]
    \centering
    \caption{Additional experiments with new baselines. (accuracy)}
    \label{appdx_add_exps}
    \resizebox{.98\textwidth}{!}{
    \begin{tabular}{lccccccc}
        \hline
         Methods / Datasets \& Models & \multicolumn{2}{c}{FEMNIST / DNN} & \multicolumn{2}{c}{CIFAR-10 / CNN} & \multicolumn{2}{c}{Sent140 / LSTM} & Average Decrease by Noise \\
         Aggregation Ratio & \multicolumn{2}{c}{$10\% \rightarrow 5\%$}
         & \multicolumn{2}{c}{$20\% \rightarrow 10\%$}
         & \multicolumn{2}{c}{$40\% \rightarrow 20\%$} & -\\
         \hline
         FedPAC~\cite{xu2022personalized} & \multicolumn{2}{c}{$62.2\% \rightarrow 60.7\%$} & \multicolumn{2}{c}{$78.9\% \rightarrow 77.3\%$} & \multicolumn{2}{c}{$68.1\% \rightarrow 66.8\%$} & 1.5\% \\       FedHN~\cite{shamsian2021personalized}  & \multicolumn{2}{c}{$61.1 \% \rightarrow 59.6\%$} & \multicolumn{2}{c}{$77.5\% \rightarrow 76.9\%$} & \multicolumn{2}{c}{$71.2\% \rightarrow 70.1\%$} & 1.1\% \\
         Fedfomo~\cite{zhang2020personalized} & \multicolumn{2}{c}{$60.1 \% \rightarrow 58.9\%$} & \multicolumn{2}{c}{$71.4\% \rightarrow 70.6\%$} & \multicolumn{2}{c}{$70.1\% \rightarrow 68.9\%$} & 1.1\% \\
         Ditto~\cite{li2021ditto} & \multicolumn{2}{c}{$52.9 \% \rightarrow 52.2\%$} & \multicolumn{2}{c}{$72.4\% \rightarrow 72.1\%$} & \multicolumn{2}{c}{$71.0\% \rightarrow 70.3\%$} & 0.6\% \\
        mh(ours) & \multicolumn{2}{c}{$\textbf{64.9} \% \rightarrow \textbf{64.3}\%$} & \multicolumn{2}{c}{$\textbf{79.4}\% \rightarrow \textbf{79.1}\%$} & \multicolumn{2}{c}{$\textbf{72.0}\% \rightarrow \textbf{71.8}\%$} & \textbf{0.4}\% \\
        \hline
    \end{tabular}
    }
\end{table}

\section{Details of Theorems}
\label{appdx_sec_theo}

\subsection{Proof Sketch}
We prove the theorems primarily through two supporting lemmas. The first lemma provides the upper bound of the global iterative error, while the second lemma restricts the upper bound of the error between the actual local update and theoretical expectation.

\subsection{Related Notations}

$\cdot_{i,r}^{(t)}$ represents the $\cdot$ on $i^{th}$ client at $r^{th}$ local epoch of $t^{th}$ global epoch.

The Local Sampled Data $\tilde{d}_{i}\in d_{i}$

The Approximated Personalized Model $\tilde{\theta}_{i,r}^{(t)} := \tilde{\theta}(\mu_{i,r}^{(t)})$.

The Uniform Local Data Sampling Expectation $\mathbf{E}_{\tilde{d}_{i}}:=\frac{1}{|d_{i}|}\sum_{\tilde{d}_{i}\in d_{i}}$

The Unbiased Empirical First Moment $\mathbf{E}_{\tilde{d}_{i}}\nabla \tilde{f}_{i}(\theta;\tilde{d}_{i})=\nabla f_{i}(\theta)$

The Global Minimizer $w^{*}$.

The Local Minimizer $\theta_{i,r}^{*(t)}:=\mathcal{D}\mathbf{prox}_{g^{*},\lambda^{-1}}f_{i}(\mu_{i,r}^{(t)})$.

The Local Approximate Error $\Delta_{i,r}^{(t)}:=\tilde{\theta}_{i,r}^{(t)} -\theta_{i,r}^{*(t)}$.

The Global Approximate Squared Error $\mathbf{\Delta}^{(t)} := \mathbf{E}||w^{(t)}-w^{*}||^{2}$

The Approximated Global Gradient $\mathbf{g}_{i,r}^{(t)}=\lambda  \mathbf{D}\mu_{i,r}^{(t)} \nabla^{2}g^{*}(\mu_{i,r}^{(t)})[ \mu_{i,r}^{(t)}-\tilde{\theta}_{i,r}^{(t)}]$

The (first-order) Approximated Envelope Gradient: $\nabla \tilde{F}_{i}(w)$.

$||\cdot||_{m}$ is any matrix norm, with $||I||_{m} = \hat{u}_{m}$.

$\mathbb{I}_{E}$, indicator function on event $E$ .

The Virtual Global Gradient: $\mathbf{g}^{(t)}=\frac{1}{SR}\sum_{i\in\mathcal{S}^{(t)}}\sum_{r=1}^{R}\mathbf{g}_{i,r}^{(t)}$.

The Virtual Global Step-size: $\tilde{\alpha}_{m} = \alpha_{m} \beta R$.

The Expected Smooth~\cite{gower_sgd_2019} Coefficient of $F$ and $F_{i}$: $L_{F_{\cdot}}$ $L_{F_{i}}$.

Bounded Deviation Ratio of Strategy Disturbance Coefficient $\sigma_{\Phi}$.

\subsection{Basic Propositions}

\begin{proposition}[$\mu$-strongly convex]
\label{appdx_prop_sconv}
If $f$ is $\mu$-strongly convex, we have:
$$
\begin{aligned}
    \langle \nabla f(x) - \nabla f(y), x-y\rangle \ge \mu||x-y||^{2} \\
    ||\nabla f(x) - \nabla f(y)|| \ge \mu||x-y|| \\
\end{aligned}
$$
\end{proposition}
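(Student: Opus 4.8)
The plan is to derive both inequalities directly from the first-order characterization of $\mu$-strong convexity, namely that $f(v) \ge f(u) + \langle \nabla f(u), v-u\rangle + \tfrac{\mu}{2}\|v-u\|^2$ for all $u,v$ in the (convex) domain. If one instead takes as the definition that $f - \tfrac{\mu}{2}\|\cdot\|^2$ is convex, this first-order inequality follows by applying the ordinary first-order characterization of convexity to $f - \tfrac{\mu}{2}\|\cdot\|^2$ and rearranging, so either starting point is fine.

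For the first inequality I would write the strong convexity bound twice, once with $(u,v)=(x,y)$ and once with $(u,v)=(y,x)$:
\begin{align}
f(y) &\ge f(x) + \langle \nabla f(x), y-x\rangle + \tfrac{\mu}{2}\|y-x\|^2, \\
f(x) &\ge f(y) + \langle \nabla f(y), x-y\rangle + \tfrac{\mu}{2}\|x-y\|^2.
\end{align}
Adding the two inequalities and cancelling $f(x)+f(y)$ from both sides gives $0 \ge \langle \nabla f(x) - \nabla f(y), y-x\rangle + \mu\|x-y\|^2$, which rearranges to the claimed bound $\langle \nabla f(x) - \nabla f(y), x-y\rangle \ge \mu\|x-y\|^2$.

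For the second inequality I would apply Cauchy--Schwarz to the inner product just bounded, $\langle \nabla f(x) - \nabla f(y), x-y\rangle \le \|\nabla f(x) - \nabla f(y)\|\,\|x-y\|$, and chain it with the first inequality to obtain $\mu\|x-y\|^2 \le \|\nabla f(x) - \nabla f(y)\|\,\|x-y\|$. Dividing through by $\|x-y\|$ when $x\neq y$ yields $\|\nabla f(x) - \nabla f(y)\| \ge \mu\|x-y\|$, and the case $x=y$ is trivial.

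I do not expect a genuine obstacle: this is a textbook manipulation. The only points needing a sliver of care are the degenerate case $x=y$ in the division step for the second inequality, and confirming that the first-order characterization of strong convexity is legitimately available under the differentiability and convex-domain conventions implicitly in force in Section~\ref{sec_alg}; both are immediate.
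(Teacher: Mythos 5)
Your proof is correct. The paper states this proposition as a standard fact without giving any proof, so there is nothing to compare against; your argument — applying the first-order strong-convexity inequality symmetrically at $(x,y)$ and $(y,x)$, adding to obtain the gradient monotonicity bound, and then chaining with Cauchy--Schwarz (handling $x=y$ trivially) for the norm inequality — is exactly the canonical derivation one would expect, and the degenerate-case and differentiability caveats you flag are handled appropriately.
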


\begin{proposition}[$L$-smooth]
\label{appdx_prop_lsmth}
If $f$ is $L$-smooth, we have:
$$
\begin{aligned}
    \langle \nabla f(x) - \nabla f(y), x-y\rangle \le L||x-y||^{2} \\
    ||\nabla f(x) - \nabla f(y)|| \le L||x-y|| \\
    ||\nabla f(x) - \nabla f(y)||^{2} \le 2L \mathcal{D}_{f}(x,y)\\
\end{aligned}
$$
\end{proposition}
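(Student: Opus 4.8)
The plan is to treat the three inequalities in increasing order of difficulty, taking the working definition of $L$-smoothness to be that $\nabla f$ is $L$-Lipschitz, i.e. $\|\nabla f(x)-\nabla f(y)\|\le L\|x-y\|$, and assuming (as throughout the convex/strongly-convex analysis of this paper) that $f$ is convex. Under this reading the second inequality is precisely the definition, so nothing is required there. For the first inequality I would apply Cauchy--Schwarz to the inner product and then substitute the Lipschitz bound, giving $\langle \nabla f(x)-\nabla f(y),x-y\rangle \le \|\nabla f(x)-\nabla f(y)\|\,\|x-y\| \le L\|x-y\|^{2}$; this is immediate and uses no convexity.

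The only substantive step is the third inequality, $\|\nabla f(x)-\nabla f(y)\|^{2}\le 2L\,\mathcal{D}_{f}(x,y)$. First I would establish the descent lemma (quadratic upper bound) from the Lipschitz gradient alone, by writing $f(u)-f(z)-\langle\nabla f(z),u-z\rangle=\int_{0}^{1}\langle\nabla f(z+t(u-z))-\nabla f(z),u-z\rangle\,dt$ and bounding the integrand by $Lt\|u-z\|^{2}$ to obtain $f(u)\le f(z)+\langle\nabla f(z),u-z\rangle+\tfrac{L}{2}\|u-z\|^{2}$.

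Then I would introduce the shifted function $g(z):=f(z)-\langle\nabla f(y),z\rangle$. Since $f$ is convex and $L$-smooth, so is $g$, and $\nabla g(z)=\nabla f(z)-\nabla f(y)$ vanishes at $z=y$, making $y$ a global minimizer of $g$. Applying the descent lemma to $g$ at $z=x$ with the choice $u=x-\tfrac{1}{L}\nabla g(x)$ yields $g(y)\le g\!\left(x-\tfrac{1}{L}\nabla g(x)\right)\le g(x)-\tfrac{1}{2L}\|\nabla g(x)\|^{2}$. Rearranging, and observing that $g(x)-g(y)=f(x)-f(y)-\langle\nabla f(y),x-y\rangle=\mathcal{D}_{f}(x,y)$ while $\nabla g(x)=\nabla f(x)-\nabla f(y)$, gives exactly $\tfrac{1}{2L}\|\nabla f(x)-\nabla f(y)\|^{2}\le\mathcal{D}_{f}(x,y)$, which is the claim.

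The main obstacle is conceptual rather than computational: recognizing that the third inequality genuinely requires convexity. Without it the Bregman divergence $\mathcal{D}_{f}$ need not even be nonnegative (a concave $L$-smooth quadratic makes it strictly negative), so the step promoting $\nabla g(y)=0$ to ``$y$ is the global minimizer of $g$'' is precisely where convexity is indispensable. Everything else reduces to Cauchy--Schwarz and the descent lemma, both routine.
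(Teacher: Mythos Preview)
Your proof is correct and complete. The paper states this proposition as a basic fact without proof, so there is nothing to compare against; your argument is the standard textbook derivation, and your observation that the third inequality genuinely requires convexity (so that $\nabla g(y)=0$ implies $y$ is a global minimizer) is exactly right.
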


\begin{proposition}[Jensen's inequality]
\label{appdx_prop_jensen}
If $f$ is convex, we have:
$$
\begin{aligned}
    \mathbf{E}_{X}f(X) \ge f(\mathbf{E}_{X}X)
\end{aligned}
$$.
A variant of the general one shown above:
$$
\begin{aligned}
    ||\sum_{i=1}^{\mathcal{N}}x_{i}||^{2} \le 
 \mathcal{N}\sum_{i}^{\mathcal{N}}||x_{i}||^{2}
\end{aligned}
$$.
\end{proposition}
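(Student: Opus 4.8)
The plan is to prove the two inequalities separately, obtaining the variant as a direct corollary of the general statement. For the general Jensen inequality, I would invoke the supporting-hyperplane characterization of convexity at the mean $m := \mathbf{E}_{X}X$. Since $f$ is convex (and, throughout this paper, proper and differentiable, as assumed for the functions $g$, $g^{*}$ that generate the Bregman divergences), the first-order convexity condition gives $f(x) \ge f(m) + \langle \nabla f(m), x-m\rangle$ for every $x$ in the domain. This is precisely the statement that the Bregman divergence $\mathcal{D}_{f}(x,m) = f(x)-f(m)-\langle\nabla f(m),x-m\rangle$ is non-negative, which is the defining property recorded in Eq.~(\ref{def_bd}). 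Taking $\mathbf{E}_{X}$ of both sides and using linearity of expectation yields $\mathbf{E}_{X}f(X) \ge f(m) + \langle\nabla f(m), \mathbf{E}_{X}X - m\rangle$; because $\mathbf{E}_{X}X - m = 0$ by the definition of $m$, the inner-product term drops out and we are left with $\mathbf{E}_{X}f(X) \ge f(m) = f(\mathbf{E}_{X}X)$, as claimed.

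For the variant, I would specialize the general inequality to the convex map $f(\cdot) = ||\cdot||^{2}$ together with the uniform discrete law that places mass $1/\mathcal{N}$ on each of $x_{1},\dots,x_{\mathcal{N}}$. Under this choice $\mathbf{E}_{X}X = \frac{1}{\mathcal{N}}\sum_{i=1}^{\mathcal{N}}x_{i}$ and $\mathbf{E}_{X}f(X) = \frac{1}{\mathcal{N}}\sum_{i=1}^{\mathcal{N}}||x_{i}||^{2}$, so the already-proved Jensen inequality reads $||\frac{1}{\mathcal{N}}\sum_{i}x_{i}||^{2} \le \frac{1}{\mathcal{N}}\sum_{i}||x_{i}||^{2}$. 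Multiplying both sides by $\mathcal{N}^{2}$ produces exactly $||\sum_{i=1}^{\mathcal{N}}x_{i}||^{2} \le \mathcal{N}\sum_{i=1}^{\mathcal{N}}||x_{i}||^{2}$. As an independent cross-check I would note the same bound follows from the triangle inequality $||\sum_{i}x_{i}|| \le \sum_{i}||x_{i}||$ composed with Cauchy--Schwarz applied to the vectors $(||x_{1}||,\dots,||x_{\mathcal{N}}||)$ and $(1,\dots,1)$, which gives $(\sum_{i}||x_{i}||)^{2} \le \mathcal{N}\sum_{i}||x_{i}||^{2}$.

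The only genuinely delicate point is the existence of the supporting gradient at the mean $m$: this needs $m$ to lie in the (relative) interior of the domain, which is guaranteed here since the paper works exclusively with proper differentiable convex functions, so that $\nabla f(m)$ exists and the supporting-line inequality is just the first-order condition / non-negativity of $\mathcal{D}_{f}(\cdot,m)$. A secondary bookkeeping matter is integrability, namely that $\mathbf{E}_{X}X$ and $\mathbf{E}_{X}f(X)$ be well defined; this is immediate for the finite averaging operators $\mathbf{E}_{i} = \frac{1}{N}\sum_{i}$ and the mini-batch means actually used in the convergence analysis. I expect no further obstacles, since both inequalities reduce to a single application of first-order convexity once the subgradient is in hand.
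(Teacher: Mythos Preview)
Your proof is correct and standard. The paper itself does not supply a proof of this proposition: it is listed in the ``Basic Propositions'' subsection alongside the other elementary facts (strong convexity, smoothness, triangle inequality, matrix-norm compatibility, Peter--Paul) and is simply stated without argument, to be invoked later in the supporting lemmas. So there is nothing to compare against; your supporting-hyperplane derivation of the general statement and your specialization to $f=\|\cdot\|^{2}$ with the uniform law on $\{x_{1},\dots,x_{\mathcal{N}}\}$ are exactly the textbook route, and the Cauchy--Schwarz cross-check is a nice redundancy.
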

\begin{proposition}[triangle inequality]
\label{appdx_prop_tri}
    The triangle inequality:
    $$||A+B||\le ||A||+||B||$$
\end{proposition}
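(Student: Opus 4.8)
The plan is to prove the inequality for the Euclidean (and more generally any inner-product--induced) norm, since this is the case the convergence analysis in Appendix~\ref{appdx_sec_theo} actually uses, together with the matrix norms $\|\cdot\|_m$, which are norms by assumption. First I would square the left-hand side and expand via the inner product: $\|A+B\|^{2} = \|A\|^{2} + 2\langle A,B\rangle + \|B\|^{2}$. Next I would bound the cross term by the Cauchy--Schwarz inequality, $\langle A,B\rangle \le |\langle A,B\rangle| \le \|A\|\,\|B\|$, which gives $\|A+B\|^{2} \le \|A\|^{2} + 2\|A\|\,\|B\| + \|B\|^{2} = (\|A\|+\|B\|)^{2}$. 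Since both $\|A+B\|$ and $\|A\|+\|B\|$ are nonnegative, taking square roots preserves the inequality and yields the claim.

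The only nontrivial ingredient is Cauchy--Schwarz, which I would in turn justify by noting that for every $t\in\mathbf{R}$ the quantity $\|A - tB\|^{2} = \|A\|^{2} - 2t\langle A,B\rangle + t^{2}\|B\|^{2} \ge 0$ is a nonnegative quadratic in $t$ (assuming $B\neq 0$; the case $B=0$ is immediate), so its discriminant $4\langle A,B\rangle^{2} - 4\|A\|^{2}\|B\|^{2}$ must be $\le 0$, i.e.\ $|\langle A,B\rangle|\le\|A\|\,\|B\|$. If instead one reads the statement at the level of an abstract normed space, then subadditivity of $\|\cdot\|$ is one of the defining norm axioms and there is nothing to prove.

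There is no real obstacle here; the statement is elementary and the only genuine decision is the level of generality, which is harmless since the later proofs invoke it only for $\ell_{2}$ and for genuine matrix norms. For completeness I would also record the iterated form $\|\sum_{i=1}^{\mathcal{N}} x_{i}\| \le \sum_{i=1}^{\mathcal{N}}\|x_{i}\|$, obtained by induction on $\mathcal{N}$, since that is the variant actually combined with Jensen's inequality (Proposition~\ref{appdx_prop_jensen}) when splitting error terms in the supporting lemmas.
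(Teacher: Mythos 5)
Your argument is correct: the Cauchy--Schwarz expansion for inner-product norms, the square-root step, and the remark that for an abstract norm subadditivity is simply an axiom are all sound, and the induction to $\|\sum_{i}x_{i}\|\le\sum_{i}\|x_{i}\|$ is the form the later lemmas actually need. Note that the paper itself states Proposition~\ref{appdx_prop_tri} as a basic fact with no proof at all, so your derivation is not in conflict with anything in the text --- it simply supplies the standard justification the authors took for granted, at the right level of generality (Euclidean vectors and the matrix norms $\|\cdot\|_{m}$).
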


\begin{proposition}[matrix norm compatibility]
\label{appdx_prop_mnc}
     The matrix norm compatibility, $A\in \mathbf{R}^{a\times b}, B\in \mathbf{R}^{b\times c}, v\in \mathbf{R}^{b}$:
     $$
     \begin{aligned}
        ||AB||_{m}\le ||A||_{m}||B||_{m} \\
        ||Av||_{m}\le ||A||_{m}||v||
     \end{aligned}
     $$
\end{proposition}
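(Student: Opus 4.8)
The plan is to interpret $||\cdot||_m$ as the operator norm induced by the underlying vector norm $||\cdot||$, since that is the reading consistent with the rest of the appendix (where $||I||_m = \hat{u}_m$ and where the norm is applied freely to both matrices and vectors), and then to derive both displayed inequalities from the single variational definition
\begin{equation}
||A||_m := \sup_{v \neq 0} \frac{||Av||}{||v||}.
\end{equation}
With this convention fixed, no genuine estimation is required; both claims fall out of the supremum characterization.

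First I would establish the vector compatibility inequality $||Av|| \le ||A||_m\,||v||$, i.e.\ the second line of the proposition. For $v = 0$ both sides vanish. For $v \neq 0$ the quotient $||Av||/||v||$ is one of the terms over which the supremum defining $||A||_m$ is taken, so it is bounded above by $||A||_m$; multiplying through by $||v|| > 0$ gives the claim for every $v$. (Here $A \in \mathbf{R}^{a\times b}$ and $v \in \mathbf{R}^{b}$, so $Av$ is well defined.)

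Next I would obtain submultiplicativity by applying the compatibility inequality twice. Fix $A \in \mathbf{R}^{a\times b}$ and $B \in \mathbf{R}^{b\times c}$, so that $AB$ is well defined. For any $v \neq 0$, write $ABv = A(Bv)$ and apply compatibility first to $A$ and then to $B$:
\begin{equation}
||ABv|| \le ||A||_m\,||Bv|| \le ||A||_m\,||B||_m\,||v||.
\end{equation}
Dividing by $||v||$ and taking the supremum over $v \neq 0$ yields $||AB||_m \le ||A||_m\,||B||_m$, the first line of the proposition.

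The only subtlety, rather than a true obstacle, is the choice of norm convention: a completely arbitrary matrix norm need neither be submultiplicative nor compatible with a prescribed vector norm, so the statement is to be understood for an induced (equivalently, a compatible submultiplicative) norm, which is precisely what the convergence analysis invokes when chaining the Jacobian, Hessian, and residual factors in $\mathbf{g}_{i,r}^{(t)}$. Once that convention is stated, the proof is immediate and self-contained, and I would keep it to the two short deductions above.
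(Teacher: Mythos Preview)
Your argument is correct and is the standard textbook derivation: once $||\cdot||_m$ is read as the operator norm induced by the vector norm $||\cdot||$, compatibility with vectors is immediate from the supremum definition, and submultiplicativity follows by applying compatibility twice and taking a supremum. The only point worth flagging is the one you already flagged yourself, namely that the result as stated presupposes a submultiplicative, compatible matrix norm rather than an arbitrary one; your choice of the induced norm is the natural convention and matches how the proposition is actually used later in the appendix (e.g., in Lemma~\ref{appdx_lemma_esplo} and Lemma~\ref{appdx_lemma_lgcdb}).

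As for comparison with the paper: there is nothing to compare. Proposition~\ref{appdx_prop_mnc} is listed among the ``Basic Propositions'' and is stated without any proof; the paper simply assumes it as a standard fact. Your write-up therefore supplies what the paper omits, and does so correctly.
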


\begin{proposition}[Peter Paul inequality]
\label{appdx_prop_ppi}
    $$ 2 \langle x, y \rangle \le \frac{1}{\epsilon}||x||^{2} + \epsilon ||y||^{2}$$
\end{proposition}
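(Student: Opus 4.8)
The plan is to derive the inequality from the non-negativity of a single well-chosen squared norm, so that nothing beyond the inner-product axioms is required. The key observation is that for any $\epsilon > 0$ the vector $\frac{1}{\sqrt{\epsilon}}x - \sqrt{\epsilon}\,y$ is a legitimate element of the underlying real inner-product space, and hence its squared norm is non-negative.

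First I would expand $||\frac{1}{\sqrt{\epsilon}}x - \sqrt{\epsilon}\,y||^{2} \ge 0$ using the bilinearity and symmetry of the inner product. The cross term $-2\langle \frac{1}{\sqrt{\epsilon}}x, \sqrt{\epsilon}\,y\rangle$ collapses to $-2\langle x, y\rangle$ because the scalars $\frac{1}{\sqrt{\epsilon}}$ and $\sqrt{\epsilon}$ multiply to $1$, while the two diagonal terms become $\frac{1}{\epsilon}||x||^{2}$ and $\epsilon||y||^{2}$ respectively. This produces the intermediate inequality $\frac{1}{\epsilon}||x||^{2} - 2\langle x, y\rangle + \epsilon||y||^{2} \ge 0$. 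Rearranging by moving the cross term to the right-hand side yields exactly $2\langle x, y\rangle \le \frac{1}{\epsilon}||x||^{2} + \epsilon||y||^{2}$, which is the claim.

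As an alternative I could route through Cauchy--Schwarz, $\langle x,y\rangle \le ||x||\,||y||$, followed by the scalar AM--GM inequality $ab \le \frac{1}{2}(a^{2}+b^{2})$ applied to $a = \frac{1}{\sqrt{\epsilon}}||x||$ and $b = \sqrt{\epsilon}\,||y||$; this reaches the identical bound. I prefer the squared-norm route since it is entirely self-contained, avoids invoking Cauchy--Schwarz as a black box, and makes the equality case transparent: equality holds precisely when $\frac{1}{\sqrt{\epsilon}}x = \sqrt{\epsilon}\,y$, i.e. $x = \epsilon y$.

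There is essentially no analytic obstacle here; the proposition is a one-line consequence of completing the square. The only point meriting care is the standing hypothesis $\epsilon > 0$, which is what guarantees that $\sqrt{\epsilon}$ and $\frac{1}{\sqrt{\epsilon}}$ are real and that the rescaling is well-defined. Since the argument uses only inner-product structure, it holds verbatim in any real inner-product space and requires no finite-dimensionality, so it applies directly wherever the propositions are later invoked.
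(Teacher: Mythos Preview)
Your proof is correct and is the standard argument for this inequality. The paper itself does not supply a proof of this proposition: it is listed among the ``Basic Propositions'' in the appendix and stated without justification, so there is no paper proof to compare against. Your squared-norm expansion is exactly the canonical derivation, and the alternative via Cauchy--Schwarz plus AM--GM that you mention is also standard; nothing is missing.
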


\subsection{General Assumptions for Analysis}
\label{appdx_ssec_GAss}
\begin{assumption}[Prior selection]
\label{assumption_ps}
    The given $g^{*}$ is $\hat{\mu}_{g^{*}}$-strongly convex and $\hat{L}_{g^{*}}$-smooth: $\hat{\mu}_{g^{*}}||x-y|| \le ||\nabla g^{*}(x) - \nabla g^{*}(y)||<\hat{L}_{g^{*}}||x-y||$. and $||\nabla^{2}g^{*}(\cdot)||_{m}\le \hat{L}_{g^{*}}$ (Examples are in Appendix~\ref{appdx_ssec_foMtd}).
\end{assumption}
\begin{assumption}[Smooth envelope assumption]
\label{assumption_se}
    For each local envelope $\mathcal{E}_{i}(\cdot)  = [F_{i}\circ\mu_{i}^{-1}](\cdot) = \mathcal{D}\mathbf{env}_{g^{*},\lambda^{-1}}(\cdot)$), we have $||\nabla\mathcal{E}_{i}(x)-\nabla\mathcal{E}_{i}(y)||^{2}\le 2\hat{L}_{\mathcal{E}_{i}}\mathcal{D}_{\mathcal{E}_{i}}(x,y)$, note that $\mathcal{E}_{i}$ is convex, $\mathcal{D}_{\mathcal{E}_{i}}(x,y):=\mathcal{E}_{i}(x)-\mathcal{E}_{i}(y)-\langle \nabla \mathcal{E}_{i}(y), x-y\rangle$. For simplification, we take $\hat{L}_{\mathcal{E}}:=\max{\hat{L}_{\mathcal{E}_{i}}}, \forall i$ and bounded difference on optimal point $0 \le \frac{\mathcal{D}_{\mathcal{E}_{i}}(\mu_{i}(w),\mu_{i}(w^{*}))}{\mathcal{D}_{F_{i}}(w,w^{*})} \le \tau$.
\end{assumption}
\begin{assumption}[Strongly convex envelope settings]
\label{assumption_sces}
    $f_{i}$ is $\hat{\mu}_{f_{i}}$-strongly convex: $\hat{\mu}_{f_{i}}||x-y|| \le ||\nabla f_{i}(x) - \nabla f_{i}(y)||$, $\hat{\mu}_{f}=\min_{i} \hat{\mu}_{f_{i}}, \forall i$;
    $f_{i}$ is $\hat{L}_{f_{i}}$-smooth and non-convex : $\hat{L}_{f_{i}}||x-y|| \ge ||\nabla f_{i}(x) - \nabla f_{i}(y)||$,, $\hat{L}_{f}=\max_{i} \hat{L}_{f_{i}}, \forall i$. Therefore, we have $F_{i}$ is $\hat{\mu}_{F_{sc}}:=\lambda\hat{\mu}_{g}+\hat{\mu}_{f}$-strongly convex or $\hat{\mu}_{F_{nc}}:=\lambda\hat{\mu}_{g}-\hat{L}_{f}$-strongly convex, by tuning $\lambda$ to make $\lambda\hat{\mu}_{g}-\hat{L}_{f} > 0$. We use $\hat{\mu}_{F_{\cdot}}$ as the unified notation for both, for simplification.
\end{assumption}

\begin{assumption}[Bounded local error]
\label{assumption_ble}
     Since classical gradient descent is used locally, we assume a unified local error bound, $\forall (i,r,t), ||\nabla {f}_{i}(\tilde{\theta}_{i,r}^{(t)};d_{i})+\lambda\nabla \mathcal{D}_{g^{*}}(\tilde{\theta}_{i,r}^{(t)},\mu_{i})|| \le c_{i,r}^{(t)} \le \hat{\epsilon}, \forall i$, and a local data sampling shift variance bound $\forall \theta, \mathbf{d}\in d_{i}, \mathbf{E}_{\mathbf{d}}||\nabla \tilde{f}_{i}(\theta;\mathbf{d})-\nabla f_{i}(\theta;d_{i})|| \le \gamma_{f_{i}} \le \hat{\gamma}_{f}:=\max\{\gamma_{f_{i}}\}, \forall i$.
\end{assumption}
\begin{assumption}[RMD meta-step function bound]
\label{appdx_assumption_rmd_msb}
    $\forall i, \Phi_{i}$ with limited gradient, $||\nabla\Phi_{i}(\cdot)|| \le \mathcal{G}_{\Phi}$, and Hessian$||\nabla^{2}\Phi_{i}(w)||_{m}\le \hat{\gamma}_{\Phi}$, therefore, $||\mathbf{D}\mu_{i}(w)||_{m}=||I-\eta\nabla^{2}\Phi_{i}(w)||_{m} \le \hat{u}_{m} + \eta\hat{\gamma}_{\Phi}$.
\end{assumption}

\begin{assumption}[Bounded deviation ratio of strategy disturbance]
\label{appdx_assumption_bdrsd}
We assume the local training is not affected too much by the personalized prior strategies, which means we don't want a large discrepancy between the results of local strategies formulation and the calculation of local envelope gradients given the prior on each client, which may cause a significant disturbance in the local optimization objective due to the haphazard formulation of prior strategies.
Given $\forall, w, w^{\prime}$, we have:
$$
\frac{||\mathbf{D}\mu_{i}(w)-\mathbf{D}\mu_{i}(w^{\prime})||_{m}}{||\nabla \mathcal{E}(\mu_{i}(w)) - \nabla \mathcal{E}(\mu_{i}(w^{\prime}))||} \le \sigma_{\Phi} \frac{\max\{||\mathbf{D}\mu_{i}(w)||_{m},||\mathbf{D}\mu_{i}(w^{\prime})||_{m}\}}{\max\{||\nabla \mathcal{E}(\mu_{i}(w))||, ||\nabla \mathcal{E}(\mu_{i}(w^{\prime}))||\}}
$$
    
\end{assumption}

\begin{assumption}[Optimal global gradient noise bound]
\label{appdx_assumption_oggnb}
$||\nabla F_{i}(w^{*})||^{2} \le \sigma_{F_{i},*}^{2}$, let $\sigma_{F,*}^{2} = \max_{i} \sigma_{F_{i},*}^{2},\forall i$.
\end{assumption}

\begin{assumption}[First-order approximate bound]
\label{appdx_assumption_fo}
$||\nabla F_{i}(w) - \nabla \tilde{F}_{i}(w)|| \le \epsilon_{1}$    
\end{assumption}

\subsection{General Lemmas}

\begin{lemma}[Local Samplng Proximal Bound]
\label{appdx_lemma_lspb}
Under settings and assumptions in Section~\ref{sec_alg} and Section~\ref{appdx_ssec_GAss}, if $f$ is $\hat{\mu}_{f}$-strongly convex, $\mathbf{E}_{\tilde{d}_{i}}||\Delta_{i,r}^{(t)}||^{2} \le \frac{2}{(\hat{\mu}_{f}+\lambda\hat{\mu}_{g^{*}})^{2}}[\frac{\hat{\gamma}_{f}^{2}}{|\tilde{d}_{i}|}+\hat{\epsilon}^2]$ holds; if $f$ is $\hat{L}_{f}$-smooth and non-convex, $\mathbf{E}_{\tilde{d}_{i}}||\Delta_{i,r}^{(t)}||^{2} \le \frac{2}{(\lambda\hat{\mu}_{g^{*}}-\hat{L}_{f})^{2}}[\frac{\hat{\gamma}_{f}^{2}}{|\tilde{d}_{i}|}+\hat{\epsilon}^2]$ holds, such that:
$$
\mathbf{E}_{\tilde{d}_{i}}||\Delta_{i,r}^{(t)}||^{2} \le \frac{2}{\hat{\mu}_{F_{\cdot}}^{2}}[\frac{\hat{\gamma}_{f}^{2}}{|\tilde{d}_{i}|}+\hat{\epsilon}^2]
$$
\end{lemma}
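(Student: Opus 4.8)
The plan is to reduce the bound on $\mathbf{E}_{\tilde{d}_{i}}||\Delta_{i,r}^{(t)}||^{2}$ to a strong-convexity estimate on the gradient residual of the exact local objective $h_{i,r}^{(t)}(\theta) := f_{i}(\theta) + \lambda\mathcal{D}_{g^{*}}(\theta,\mu_{i,r}^{(t)})$, whose unique minimizer is precisely $\theta_{i,r}^{*(t)} = \mathcal{D}\mathbf{prox}_{g^{*},\lambda^{-1}}f_{i}(\mu_{i,r}^{(t)})$ by the definition in Eq.~(\ref{equ_def_pmbme}). First I would establish that $h_{i,r}^{(t)}$ is $\hat{\mu}_{F_{\cdot}}$-strongly convex. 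Since the map $\theta\mapsto\mathcal{D}_{g^{*}}(\theta,\mu_{i,r}^{(t)})$ has Hessian $\nabla^{2}g^{*}(\theta)$ and Assumption~\ref{assumption_ps} gives $\nabla^{2}g^{*}\succeq\hat{\mu}_{g^{*}}I$, the regularizer contributes $\lambda\hat{\mu}_{g^{*}}I$ to the Hessian; combined with Assumption~\ref{assumption_sces} this yields a modulus $\hat{\mu}_{f}+\lambda\hat{\mu}_{g^{*}}$ when $f_{i}$ is $\hat{\mu}_{f}$-strongly convex, and a modulus $\lambda\hat{\mu}_{g^{*}}-\hat{L}_{f}$ when $f_{i}$ is only $\hat{L}_{f}$-smooth and non-convex (using $\nabla^{2}f_{i}\succeq-\hat{L}_{f}I$ together with the tuning condition $\lambda\hat{\mu}_{g^{*}}>\hat{L}_{f}$). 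Both moduli are abbreviated $\hat{\mu}_{F_{\cdot}}$.

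Next, since $\nabla h_{i,r}^{(t)}(\theta_{i,r}^{*(t)})=0$ by optimality, the second inequality of Proposition~\ref{appdx_prop_sconv} gives $||\Delta_{i,r}^{(t)}|| \le \hat{\mu}_{F_{\cdot}}^{-1}||\nabla h_{i,r}^{(t)}(\tilde{\theta}_{i,r}^{(t)})||$, where $\nabla h_{i,r}^{(t)}(\tilde{\theta}_{i,r}^{(t)}) = \nabla f_{i}(\tilde{\theta}_{i,r}^{(t)};d_{i}) + \lambda\nabla\mathcal{D}_{g^{*}}(\tilde{\theta}_{i,r}^{(t)},\mu_{i,r}^{(t)})$. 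I would then insert the minibatch gradient $\nabla\tilde{f}_{i}(\tilde{\theta}_{i,r}^{(t)};\tilde{d}_{i})$ and split the residual by the triangle inequality (Proposition~\ref{appdx_prop_tri}) into the sampling shift $\nabla f_{i}(\tilde{\theta}_{i,r}^{(t)};d_{i})-\nabla\tilde{f}_{i}(\tilde{\theta}_{i,r}^{(t)};\tilde{d}_{i})$ and the residual $\nabla\tilde{f}_{i}(\tilde{\theta}_{i,r}^{(t)};\tilde{d}_{i})+\lambda\nabla\mathcal{D}_{g^{*}}(\tilde{\theta}_{i,r}^{(t)},\mu_{i,r}^{(t)})$ of the minibatch objective that local gradient descent drives toward a stationary point. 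By Assumption~\ref{assumption_ble} the latter term is bounded by $c_{i,r}^{(t)}\le\hat{\epsilon}$, while the squared sampling shift averaged over an i.i.d. minibatch of size $|\tilde{d}_{i}|$ is reduced to $\hat{\gamma}_{f}^{2}/|\tilde{d}_{i}|$ by the variance bound of the same assumption. Squaring the displayed inequality, applying the $||\cdot||^{2}$-variant of Jensen's inequality (Proposition~\ref{appdx_prop_jensen}) with two summands to absorb the factor $2$, and taking $\mathbf{E}_{\tilde{d}_{i}}$ yields $\mathbf{E}_{\tilde{d}_{i}}||\Delta_{i,r}^{(t)}||^{2}\le\frac{2}{\hat{\mu}_{F_{\cdot}}^{2}}\left[\frac{\hat{\gamma}_{f}^{2}}{|\tilde{d}_{i}|}+\hat{\epsilon}^{2}\right]$; substituting the two explicit values of $\hat{\mu}_{F_{\cdot}}$ recovers the two case-specific displays.

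The Hessian lower bounds and the triangle-inequality split are routine. The step needing the most care is the bookkeeping between the optimization error $\hat{\epsilon}$ and the sampling error $\hat{\gamma}_{f}$: matching the minibatch variance-reduction factor $1/|\tilde{d}_{i}|$ to the way Assumption~\ref{assumption_ble} is phrased, and ensuring the $\hat{\epsilon}$-bound is applied to the minibatch objective that gradient descent actually minimizes rather than to the full-data objective that defines $\theta_{i,r}^{*(t)}$.
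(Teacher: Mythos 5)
Your proposal matches the paper's proof essentially step for step: strong convexity of $f_{i}+\lambda\mathcal{D}_{g^{*}}(\cdot,\mu_{i,r}^{(t)})$ with modulus $\hat{\mu}_{F_{\cdot}}$ (i.e., $\hat{\mu}_{f}+\lambda\hat{\mu}_{g^{*}}$ or $\lambda\hat{\mu}_{g^{*}}-\hat{L}_{f}$) together with optimality of $\theta_{i,r}^{*(t)}$ gives $||\Delta_{i,r}^{(t)}||\le \hat{\mu}_{F_{\cdot}}^{-1}$ times the gradient residual, which is then split into the minibatch sampling shift plus an $\hat{\epsilon}$-bounded residual, squared via the two-term Jensen/Young step (factor $2$), and averaged over the i.i.d. minibatch to yield $\frac{\hat{\gamma}_{f}^{2}}{|\tilde{d}_{i}|}+\hat{\epsilon}^{2}$, exactly as in the paper. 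The only, inconsequential, bookkeeping difference is the one you yourself flag: you attach the $\hat{\epsilon}$-bound of Assumption~\ref{assumption_ble} to the minibatch residual, whereas the assumption as written (and the paper's split) bounds the full-data residual $\nabla f_{i}(\tilde{\theta}_{i,r}^{(t)};d_{i})+\lambda\nabla\mathcal{D}_{g^{*}}(\tilde{\theta}_{i,r}^{(t)},\mu_{i})$ by $\hat{\epsilon}$ and handles the shift with the variance bound; the algebra and the final constant are identical either way.
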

\begin{proof}
    With Proposition~\ref{appdx_prop_sconv}, Assumption~\ref{assumption_sces} and optimal condition of $F_{i}(\mu_{i,r}^{(t)})$ on $\theta_{i,r}^{*(t)}$, we have:
    $$
    \begin{aligned}
        ||\Delta_{i,r}^{(t)}||^{2} &=||\tilde{\theta}_{i,r}^{(t)} -\theta_{i,r}^{*(t)}||^{2}
        \le \frac{1}{\mu_{F_{\cdot}}^{2}}||\mathbf{g}_{i,r}^{(t)}||^{2} \\
    \end{aligned}
    $$
    Note that, $\mathbf{g}_{i,r}^{(t)} = \nabla \tilde{f}_{i}(\tilde{\theta}_{i,r}^{(t)};\tilde{d}_{i}) +\lambda \nabla \mathcal{D}_{g^{*}}(\tilde{\theta}_{i,r}^{(t)}, \mu_{i,r}^{(t)})$. With Proposition~\ref{appdx_prop_jensen} and Assumption~\ref{assumption_ble}, we have:
    
    $$
    \begin{aligned}
    ||\mathbf{g}_{i,r}^{(t)}||^{2}& = \nabla \tilde{f}_{i}(\tilde{\theta}_{i,r}^{(t)};\tilde{d}_{i})-\nabla f_{i}(\tilde{\theta}_{i,r}^{(t)};d_{i}) + \nabla {f}_{i}(\tilde{\theta}_{i,r}^{(t)};d_{i})+\lambda\nabla \mathcal{D}_{g^{*}}(\tilde{\theta}_{i,r}^{(t)},\mu_{i}) \\
    \le & 2\{||\nabla \tilde{f}_{i}(\tilde{\theta}_{i,r}^{(t)};\tilde{d}_{i})-\nabla f_{i}(\tilde{\theta}_{i,r}^{(t)};d_{i})||^{2} + ||\nabla {f}_{i}(\tilde{\theta}_{i,r}^{(t)};d_{i})+\lambda\nabla \mathcal{D}_{g^{*}}(\tilde{\theta}_{i,r}^{(t)},\mu_{i})||^{2}\} \\
    \le & 2\{||\nabla \tilde{f}_{i}(\tilde{\theta}_{i,r}^{(t)};\tilde{d}_{i})-\nabla f_{i}(\tilde{\theta}_{i,r}^{(t)};d_{i})||^{2}+\hat{\epsilon}^2\}
    \end{aligned}
    $$
    Taking expectation on both sides, 
    combining both inequalities above, we have:
    $$
    \begin{aligned}
        \mathbf{E}_{\tilde{d}_{i}}||\Delta_{i,r}^{(t)}||^{2} \le & 2\{\frac{1}{|\tilde{d}_{i}|^{2}}\mathbf{E}_{\mathbf{d}}||\sum_{\mathbf{d}\in \tilde{d}_{i}}\nabla \tilde{f}_{i}(\tilde{\theta}_{i,r}^{(t)};\mathbf{d})-\nabla f_{i}(\tilde{\theta}_{i,r}^{(t)};d_{i})||^{2} + \hat{\epsilon}^2\} \\
        \le & 2\{\frac{1}{|\tilde{d}_{i}|^{2}}\sum_{\mathbf{d}\in \tilde{d}_{i}}\mathbf{E}_{\mathbf{d}}||\nabla \tilde{f}_{i}(\tilde{\theta}_{i,r}^{(t)};\mathbf{d})-\nabla f_{i}(\tilde{\theta}_{i,r}^{(t)};d_{i})||^{2} + \hat{\epsilon}^2\} \\
        \le & 2[\frac{\hat{\gamma}_{f}^{2}}{|\tilde{d}_{i}|}+\hat{\epsilon}^2]
    \end{aligned}
    $$
\end{proof}

\begin{lemma}[Expected-Smooth Personalized Local Object]
\label{appdx_lemma_esplo}
Under settings and assumptions in Section~\ref{sec_alg} and Section~\ref{appdx_ssec_GAss}, the personalized local objective function is expected-smooth, such that:
$$
\begin{aligned}
    ||\nabla F_{i}(w)-\nabla F_{i}(w^{\prime})|| &\le (1+\sigma_{\Phi})(\hat{u}_{m}+\eta\hat{\gamma}_{\Phi})||\nabla \mathcal{E}_{i}(\mu_{i}(w))-\nabla \mathcal{E}_{i}(\mu_{i}(w^{\prime}))||, \forall w, w^{\prime}; \\
    ||\nabla F_{i}(w)-\nabla F_{i}(w^{*})||^{2} &\le 2(1+\sigma_{\Phi})^{2}(\hat{u}_{m}+\eta\hat{\gamma}_{\Phi})^{2}\hat{L}_{\mathcal{E}_{i}}\mathcal{D}_{\mathcal{E}}(\mu_{i}(w),\mu_{i}(w^{*})) \le 2\hat{L}_{F_{i}}\mathcal{D}_{F_{i}}(w,w^{*}); \\
    \mathbf{E}_{i}||\nabla F_{i}(w)-\nabla F_{i}(w^{*})||^{2} &\le 2\hat{L}_{F}[F(w)-F(w^{*})] = 2\hat{L}_{F}\mathcal{D}_{F}(w,w^{*}), \\
\end{aligned}
$$
where $\hat{L}_{F_{i}}:=\tau(1+\sigma_{\Phi})^{2}(\hat{u}_{m}+\eta\hat{\gamma}_{\Phi})^{2}\hat{L}_{\mathcal{E}_{i}}$ and $\hat{L}_{F}=\max{\hat{L}_{F_{i}}},\forall i$.
\end{lemma}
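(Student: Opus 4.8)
The plan is to exploit the composition structure $F_{i} = \mathcal{E}_{i}\circ\mu_{i}$. By Eq.~(\ref{equ_gdmp}) together with the fact that $\mathbf{D}\mu_{i}(w) = I - \eta\nabla^{2}\Phi_{i}(w)$ is symmetric, the chain rule gives $\nabla F_{i}(w) = \mathbf{D}\mu_{i}(w)\,\nabla\mathcal{E}_{i}(\mu_{i}(w))$, which is the identity all three claims are built on.

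First I would establish the Lipschitz-type estimate (the first displayed inequality). The key is the decomposition
$$
\nabla F_{i}(w)-\nabla F_{i}(w') = \mathbf{D}\mu_{i}(w)\bigl[\nabla\mathcal{E}_{i}(\mu_{i}(w))-\nabla\mathcal{E}_{i}(\mu_{i}(w'))\bigr] + \bigl[\mathbf{D}\mu_{i}(w)-\mathbf{D}\mu_{i}(w')\bigr]\nabla\mathcal{E}_{i}(\mu_{i}(w')),
$$
after which I bound each summand using Proposition~\ref{appdx_prop_tri} and Proposition~\ref{appdx_prop_mnc}. The first summand is at most $(\hat{u}_{m}+\eta\hat{\gamma}_{\Phi})\,||\nabla\mathcal{E}_{i}(\mu_{i}(w))-\nabla\mathcal{E}_{i}(\mu_{i}(w'))||$ by Assumption~\ref{appdx_assumption_rmd_msb}. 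For the second summand I invoke Assumption~\ref{appdx_assumption_bdrsd}: it bounds $||\mathbf{D}\mu_{i}(w)-\mathbf{D}\mu_{i}(w')||_{m}$ by $\sigma_{\Phi}$ times $||\nabla\mathcal{E}_{i}(\mu_{i}(w))-\nabla\mathcal{E}_{i}(\mu_{i}(w'))||$ rescaled by $\max_{\cdot}||\mathbf{D}\mu_{i}||_{m}/\max_{\cdot}||\nabla\mathcal{E}_{i}(\mu_{i})||$; since $||\nabla\mathcal{E}_{i}(\mu_{i}(w'))||$ is dominated by the denominator and the numerator is at most $\hat{u}_{m}+\eta\hat{\gamma}_{\Phi}$ by Assumption~\ref{appdx_assumption_rmd_msb}, the second summand is at most $\sigma_{\Phi}(\hat{u}_{m}+\eta\hat{\gamma}_{\Phi})\,||\nabla\mathcal{E}_{i}(\mu_{i}(w))-\nabla\mathcal{E}_{i}(\mu_{i}(w'))||$. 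Adding yields the factor $(1+\sigma_{\Phi})(\hat{u}_{m}+\eta\hat{\gamma}_{\Phi})$.

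Next I specialize $w' = w^{*}$, square both sides, and chain in the smooth-envelope bound from Assumption~\ref{assumption_se}, namely $||\nabla\mathcal{E}_{i}(\mu_{i}(w))-\nabla\mathcal{E}_{i}(\mu_{i}(w^{*}))||^{2}\le 2\hat{L}_{\mathcal{E}_{i}}\mathcal{D}_{\mathcal{E}_{i}}(\mu_{i}(w),\mu_{i}(w^{*}))$, which gives the middle quantity of the second line; then the bounded-difference ratio $\mathcal{D}_{\mathcal{E}_{i}}(\mu_{i}(w),\mu_{i}(w^{*}))\le\tau\,\mathcal{D}_{F_{i}}(w,w^{*})$ (again Assumption~\ref{assumption_se}) converts it to $2\hat{L}_{F_{i}}\mathcal{D}_{F_{i}}(w,w^{*})$ with $\hat{L}_{F_{i}}=\tau(1+\sigma_{\Phi})^{2}(\hat{u}_{m}+\eta\hat{\gamma}_{\Phi})^{2}\hat{L}_{\mathcal{E}_{i}}$. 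Finally I take $\mathbf{E}_{i}$, use $\hat{L}_{F_{i}}\le\hat{L}_{F}$, and note that $w^{*}$ minimizes $F=\mathbf{E}_{i}F_{i}$, so $\mathbf{E}_{i}\nabla F_{i}(w^{*})=\nabla F(w^{*})=0$; hence the cross term in $\mathbf{E}_{i}\mathcal{D}_{F_{i}}(w,w^{*})$ drops out, leaving $\mathbf{E}_{i}\mathcal{D}_{F_{i}}(w,w^{*})=F(w)-F(w^{*})=\mathcal{D}_{F}(w,w^{*})$, which is the last line.

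The step I expect to be the main obstacle is the cross term $[\mathbf{D}\mu_{i}(w)-\mathbf{D}\mu_{i}(w')]\nabla\mathcal{E}_{i}(\mu_{i}(w'))$: it is precisely what Assumption~\ref{appdx_assumption_bdrsd} was crafted to control, and one must verify carefully that $||\nabla\mathcal{E}_{i}(\mu_{i}(w'))||/\max\{||\nabla\mathcal{E}_{i}(\mu_{i}(w))||,||\nabla\mathcal{E}_{i}(\mu_{i}(w'))||\}\le 1$, that $\max\{||\mathbf{D}\mu_{i}(w)||_{m},||\mathbf{D}\mu_{i}(w')||_{m}\}\le\hat{u}_{m}+\eta\hat{\gamma}_{\Phi}$, and that the degenerate case $\nabla\mathcal{E}_{i}(\mu_{i}(w))=\nabla\mathcal{E}_{i}(\mu_{i}(w'))$ (where both sides of the claimed inequality are zero) is handled. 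Everything else reduces to routine triangle inequalities, matrix-norm compatibility, and the two envelope assumptions.
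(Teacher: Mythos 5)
Your proposal matches the paper's own proof essentially step for step: the same add-and-subtract decomposition of $\nabla F_{i}(w)-\nabla F_{i}(w')$, the same use of Assumption~\ref{appdx_assumption_rmd_msb} and Assumption~\ref{appdx_assumption_bdrsd} (via Propositions~\ref{appdx_prop_tri} and~\ref{appdx_prop_mnc}) to obtain the factor $(1+\sigma_{\Phi})(\hat{u}_{m}+\eta\hat{\gamma}_{\Phi})$, and the same chaining through Assumption~\ref{assumption_se} and the ratio bound $\tau$ before taking $\mathbf{E}_{i}$. Your extra remark that $\mathbf{E}_{i}\mathcal{D}_{F_{i}}(w,w^{*})=F(w)-F(w^{*})$ because $\nabla F(w^{*})=0$ is a correct justification of a step the paper leaves implicit.
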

\begin{proof}
    With Assumption~\ref{appdx_assumption_rmd_msb} and Assumption~\ref{appdx_assumption_bdrsd}, we have:
    $$
    \begin{aligned}
        &||\nabla F_{i}(w)-\nabla F_{i}(w^{\prime})||^{2} =  ||\mathbf{D}\mu_{i}(w)\nabla \mathcal{E}(\mu_{i}(w)) - \mathbf{D}\mu_{i}(w^{\prime})\nabla\mathcal{E}(\mu_{i}(w^{\prime}))|| \\
        = & ||\mathbf{D}\mu_{i}(w)\nabla \mathcal{E}(\mu_{i}(w)) - \mathbf{D}\mu_{i}(w^{\prime})\nabla\mathcal{E}(\mu_{i}(w^{\prime})) + \mathbf{D}\mu_{i}(w)\nabla \mathcal{E}(\mu_{i}(w^{\prime})) - \mathbf{D}\mu_{i}(w)\nabla \mathcal{E}(\mu_{i}(w^{\prime}))|| \\
        \le & ||\mathbf{D}\mu_{i}(w)[\nabla \mathcal{E}(\mu_{i}(w)) -\nabla \mathcal{E}(\mu_{i}(w^{\prime}))]|| + ||[\mathbf{D}\mu_{i}(w^{\prime}) - \mathbf{D}\mu_{i}(w)]\nabla \mathcal{E}(\mu_{i}(w^{\prime}))||\\
        \le & ||\mathbf{D}\mu_{i}(w)||_{m}||[\nabla \mathcal{E}(\mu_{i}(w)) -\nabla \mathcal{E}(\mu_{i}(w^{\prime}))]|| + ||[\mathbf{D}\mu_{i}(w^{\prime}) - \mathbf{D}\mu_{i}(w)]||_{m}||\nabla \mathcal{E}(\mu_{i}(w^{\prime}))||\\
        \le & \max\{||\mathbf{D}\mu_{i}(w)||, ||\mathbf{D}\mu_{i}(w^{\prime})||\}||\nabla \mathcal{E}(\mu_{i}(w)) - \nabla\mathcal{E}(\mu_{i}(w^{\prime}))|| \\
        & + \max\{||\nabla \mathcal{E}(\mu_{i}(w))||, ||\nabla \mathcal{E}(\mu_{i}(w^{\prime}))||\}|[\mathbf{D}\mu_{i}(w^{\prime}) - \mathbf{D}\mu_{i}(w)]||_{m}\\\le & \max\{||\mathbf{D}\mu_{i}(w)||, ||\mathbf{D}\mu_{i}(w^{\prime})||\}||\nabla \mathcal{E}(\mu_{i}(w)) - \nabla\mathcal{E}(\mu_{i}(w^{\prime}))|| \\
        & + \sigma_{\Phi} \max\{||\mathbf{D}\mu_{i}(w)||_{m},||\mathbf{D}\mu_{i}(w^{\prime})||_{m}\}||\nabla \mathcal{E}(\mu_{i}(w)) - \nabla\mathcal{E}(\mu_{i}(w^{\prime}))||\\
        \le & (1+\sigma_{\Phi})(\hat{u}_{m}+\eta\hat{\gamma}_{\Phi})||\nabla \mathcal{E}(\mu_{i}(w)) - \nabla\mathcal{E}(\mu_{i}(w^{\prime}))||\\
    \end{aligned}
    $$
    where the first two inequalities is by Proposition~\ref{appdx_prop_tri} and Proposition~\ref{appdx_prop_mnc}.
    
    With the first inequality in our lemma is proven. With the proven one and Assumption~\ref{assumption_se}, we have:
    $$
    \begin{aligned}
        ||\nabla F_{i}(w)-\nabla F_{i}(w^{*})||^{2} \le & 2(1+\sigma_{\Phi})^{2}(\hat{u}_{m}+\eta\hat{\gamma}_{\Phi})^{2}\hat{L}_{\mathcal{E}_{i}}\mathcal{D}_{\mathcal{E}}(\mu_{i}(w),\mu_{i}(w^{*}))\\
        \le & 2(1+\sigma_{\Phi})^{2}(\hat{u}_{m}+\eta\hat{\gamma}_{\Phi})^{2}\hat{L}_{\mathcal{E}_{i}} \tau\mathcal{D}_{F_{i}}(w,w^{*})\\
        \le & 2\hat{L}_{F_{i}}\mathcal{D}_{F_{i}}(w,w^{*}); \\
    \mathbf{E}_{i}||\nabla F_{i}(w)-\nabla F_{i}(w^{*})||^{2} \le & 2\hat{L}_{F}[F(w)-F(w^{*})] = 2\hat{L}_{F}\mathcal{D}_{F}(w,w^{*})
    \end{aligned}
    $$
    where the client sampling expectation is taken in the final inequality.
\end{proof}

\begin{lemma}[RMD Personalized Prior Bound]
Under settings and assumptions in Section~\ref{sec_alg} and Section~\ref{appdx_ssec_GAss}, the relationship between $||\nabla F_{i}(w)||$ and $||\nabla \mathcal{E}_{i}(\mu_{i}(w))||$ is:
$$||\nabla F_{i}(w)|| \le (\hat{u}_{m} +\eta\hat{\gamma}_{\Phi})||\nabla \mathcal{E}_{i}(\mu_{i}(w))|| \le \lambda \hat{L}_{g^{*}}(\hat{u}_{m} +\eta\hat{\gamma}_{\Phi})||\mu_{i}(w)-\theta_{i}^{*}||$$
\end{lemma}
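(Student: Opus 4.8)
The plan is to chain together the chain rule for $\nabla F_{i}$, the closed-form expression for the gradient of a Bregman--Moreau envelope, and the two operator-norm bounds that are already available from Assumptions~\ref{assumption_ps} and~\ref{appdx_assumption_rmd_msb}.

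First I would establish the left inequality. Since $F_{i} = \mathcal{E}_{i}\circ\mu_{i}$ with $\mathcal{E}_{i} = \mathcal{D}\mathbf{env}_{g^{*},\lambda^{-1}}f_{i}$ by Assumption~\ref{assumption_se}, the chain rule gives $\nabla F_{i}(w) = \mathbf{D}\mu_{i}(w)\,\nabla\mathcal{E}_{i}(\mu_{i}(w))$, exactly the form already used in Eq.~(\ref{equ_gdmp}). Applying matrix-norm compatibility (Proposition~\ref{appdx_prop_mnc}) and then the Jacobian bound $||\mathbf{D}\mu_{i}(w)||_{m} = ||I-\eta\nabla^{2}\Phi_{i}(w)||_{m}\le \hat{u}_{m}+\eta\hat{\gamma}_{\Phi}$ from Assumption~\ref{appdx_assumption_rmd_msb} yields $||\nabla F_{i}(w)||\le(\hat{u}_{m}+\eta\hat{\gamma}_{\Phi})\,||\nabla\mathcal{E}_{i}(\mu_{i}(w))||$.

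Next I would establish the right inequality. Using the envelope-gradient identity in Eq.~(\ref{equ_def_pmbme}), $\nabla\mathcal{E}_{i}(\mu_{i}(w)) = \lambda\nabla^{2}g^{*}(\mu_{i}(w))\,[\mu_{i}(w) - \mathcal{D}\mathbf{prox}_{g^{*},\lambda^{-1}}f_{i}(\mu_{i}(w))] = \lambda\nabla^{2}g^{*}(\mu_{i}(w))\,[\mu_{i}(w)-\theta_{i}^{*}]$, where $\theta_{i}^{*} := \mathcal{D}\mathbf{prox}_{g^{*},\lambda^{-1}}f_{i}(\mu_{i}(w))$ is the local minimizer. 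Again by matrix-norm compatibility together with the Hessian bound $||\nabla^{2}g^{*}(\cdot)||_{m}\le\hat{L}_{g^{*}}$ from Assumption~\ref{assumption_ps}, this gives $||\nabla\mathcal{E}_{i}(\mu_{i}(w))||\le\lambda\hat{L}_{g^{*}}\,||\mu_{i}(w)-\theta_{i}^{*}||$. Composing the two displayed bounds delivers the claimed chain.

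Since every ingredient is either proven earlier or assumed, there is no genuine obstacle; the only thing requiring care is bookkeeping — making sure the Jacobian/transpose convention in the chain rule matches Eq.~(\ref{equ_gdmp}), and that the matrix norm $||\cdot||_{m}$ appearing in Assumptions~\ref{assumption_ps} and~\ref{appdx_assumption_rmd_msb} is the same one for which Proposition~\ref{appdx_prop_mnc} supplies $||Av||\le||A||_{m}||v||$, so that the compatibility step applies verbatim. If one instead wanted to relate $||\nabla\mathcal{E}_{i}(\mu_{i}(w))||$ back to $F_{i}$-geometry rather than to the proximal gap, one could invoke strong convexity of $F_{i}$ (Assumption~\ref{assumption_sces}); but for this lemma the direct route through the proximal point $\theta_{i}^{*}$ is the cleanest.
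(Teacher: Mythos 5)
Your proposal is correct and follows essentially the same route as the paper's own proof: the left inequality via $\nabla F_{i}(w)=\mathbf{D}\mu_{i}(w)\nabla\mathcal{E}_{i}(\mu_{i}(w))$ together with Proposition~\ref{appdx_prop_mnc} and Assumption~\ref{appdx_assumption_rmd_msb}, and the right inequality via the envelope-gradient identity $\nabla\mathcal{E}_{i}(\mu_{i}(w))=\lambda\nabla^{2}g^{*}(\mu_{i}(w))[\mu_{i}(w)-\theta_{i}^{*}]$ with Assumption~\ref{assumption_ps}. No gaps to report.
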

\begin{proof}
    Applying Proposition~\ref{appdx_prop_mnc} and Assumption~\ref{appdx_assumption_rmd_msb}, $\nabla F_{i}(w) = \mathbf{D}\mu_{i}(w)\nabla \mathcal{E}_{i}(\mu_{i}(w))$, it's easy to prove the first inequality.
    Rewriting $\nabla\mathcal{E}_{i}(\mu_{i}(w))$ in detail as shown following, applying Proposition~\ref{appdx_prop_mnc} and Assumption~\ref{assumption_ps}, the final inequality is proven:
    $$\nabla\mathcal{E}_{i}(\mu_{i}(w)) = \lambda \nabla^{2}g^{*}(\mu_{i}(w)) [\mu_{i}(w)-\theta_{i}^{*}]$$
\end{proof}

\begin{lemma}[Local Objective's Client Sampling Error Bound]
\label{appdx_lemma_locseb}
Under settings and assumptions in Section~\ref{sec_alg} and Section~\ref{appdx_ssec_GAss}, the upper bound of local sampling error is:
$$\mathbf{E}_{\mathcal{S}^{t}}||\frac{1}{S}\sum_{i\in \mathcal{S}^{(t)}}\nabla F_{i}(w^{{(t)}})-\nabla F(w^{(t)})||^{2} \le \frac{N/S-1}{N-1}\sum_{i}^{N}\frac{1}{N}||\nabla F_{i}(w^{(t)})-\nabla F(w^{(t)})||^{2}$$
, where $|\mathcal{S}^{(t)}| = S, \forall t$.
\end{lemma}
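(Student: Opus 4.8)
The plan is to treat $\mathcal{S}^{(t)}$ as a uniformly random $S$-element subset of $\{1,\dots,N\}$ and reduce the claim to the classical second-moment identity for sampling without replacement. First I would fix the global epoch $t$, abbreviate $X_{i} := \nabla F_{i}(w^{(t)}) - \nabla F(w^{(t)})$, and record the two facts that drive the argument: since $\nabla F = \mathbf{E}_{i}\nabla F_{i} = \frac{1}{N}\sum_{i=1}^{N}\nabla F_{i}$, the vectors are centered, $\sum_{i=1}^{N}X_{i} = 0$; and because $\sum_{i\in\mathcal{S}^{(t)}}\nabla F(w^{(t)}) = S\,\nabla F(w^{(t)})$, we may rewrite $\frac{1}{S}\sum_{i\in\mathcal{S}^{(t)}}\nabla F_{i}(w^{(t)}) - \nabla F(w^{(t)}) = \frac{1}{S}\sum_{i\in\mathcal{S}^{(t)}}X_{i}$. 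So the quantity to bound is the pure sampling variance of centered vectors.

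Next I would introduce the inclusion indicators $\chi_{i} := \mathbb{I}_{\{i\in\mathcal{S}^{(t)}\}}$ and compute their moments under uniform sampling without replacement: $\mathbf{E}[\chi_{i}] = S/N$ and, for $i\neq j$, $\mathbf{E}[\chi_{i}\chi_{j}] = \frac{S(S-1)}{N(N-1)}$. Expanding the squared norm gives $\mathbf{E}_{\mathcal{S}^{(t)}}\|\frac{1}{S}\sum_{i\in\mathcal{S}^{(t)}}X_{i}\|^{2} = \frac{1}{S^{2}}\sum_{i,j}\mathbf{E}[\chi_{i}\chi_{j}]\langle X_{i},X_{j}\rangle$, and separating the diagonal $i=j$ from the off-diagonal terms yields $\frac{1}{S^{2}}\big[\frac{S}{N}\sum_{i}\|X_{i}\|^{2} + \frac{S(S-1)}{N(N-1)}\sum_{i\neq j}\langle X_{i},X_{j}\rangle\big]$. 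Invoking the centering identity, $\sum_{i\neq j}\langle X_{i},X_{j}\rangle = \|\sum_{i}X_{i}\|^{2} - \sum_{i}\|X_{i}\|^{2} = -\sum_{i}\|X_{i}\|^{2}$, so the cross terms collapse into a single multiple of $\sum_{i}\|X_{i}\|^{2}$.

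The remaining work is bookkeeping on the scalar coefficient: $\frac{1}{S^{2}}\big(\frac{S}{N} - \frac{S(S-1)}{N(N-1)}\big) = \frac{1}{SN}\cdot\frac{N-S}{N-1} = \frac{1}{N}\cdot\frac{N/S-1}{N-1}$, and $\frac{1}{N}\sum_{i}\|X_{i}\|^{2} = \sum_{i}^{N}\frac{1}{N}\|\nabla F_{i}(w^{(t)}) - \nabla F(w^{(t)})\|^{2}$, which is exactly the stated bound — in fact with equality, so relaxing to $\le$ loses nothing. I do not expect a genuine obstacle here; the only point needing care is the pair-inclusion probability $\mathbf{E}[\chi_{i}\chi_{j}] = \frac{S(S-1)}{N(N-1)}$, because the finite-population correction it produces is what yields the factor $\frac{1}{N-1}$ rather than $\frac{1}{N}$ and distinguishes sampling without replacement from i.i.d. sampling. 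For completeness one should note the edge case $S=N$, where both sides vanish, and observe that the non-uniform client-sampling variant follows verbatim after reweighting, as remarked below the convergence theorem.
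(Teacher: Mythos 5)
Your proposal is correct and matches the paper's own argument: the paper likewise expands with inclusion indicators, uses $\mathbf{E}[\mathbb{I}_{i\in\mathcal{S}^{(t)}}]=S/N$ and $\mathbf{E}[\mathbb{I}_{i\in\mathcal{S}^{(t)}},\mathbb{I}_{j\in\mathcal{S}^{(t)}}]=\frac{S(S-1)}{N(N-1)}$, and collapses the cross terms via the centering identity $\sum_{i}\|X_i\|^{2}+\sum_{i\neq j}\langle X_i,X_j\rangle=0$, arriving at the same coefficient $\frac{N/S-1}{N-1}\cdot\frac{1}{N}$ (indeed with equality). No gaps to report.
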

\begin{proof}
    This lemma is the same lemma in~\cite{li_convergence_2023, t2020personalized}.
    $$
    \begin{aligned}
        \mathbf{E}_{\mathcal{S}^{t}}&||\frac{1}{S}\sum_{i\in \mathcal{S}^{(t)}}\nabla F_{i}(w^{{(t)}})-\nabla F(w^{(t)})||^{2} = \frac{1}{S^{2}}\mathbf{E}_{\mathcal{S}^{(t)}}||\sum_{i\in [N]} \mathbb{I}_{i\in \mathcal{S}^{(t)}}\nabla F_{i}(w^{{(t)}})-\nabla F(w^{(t)})||^{2} \\
        = & \frac{1}{S^{2}}[\sum_{i\in [N]}\mathbf{E}_{\mathcal{S}^{(t)}}[\mathbb{I}_{i\in \mathcal{S}^{(t)}}]||\nabla F_{i}(w^{{(t)}})-\nabla F(w^{(t)})||^{2} \\
        & +\sum_{i\neq j}\mathbf{E}_{\mathcal{S}^{(t)}}[\mathbb{I}_{i\in \mathcal{S}^{(t)}},\mathbb{I}_{j\in \mathcal{S}^{(t)}}]\langle \nabla F_{i}(w^{(t)})-\nabla F(w^{(t)}), \nabla F_{j}(w^{(t)})-\nabla F(w^{(t)})\rangle] \\
        = & \frac{1}{SN}\sum_{i}^{N}||\nabla F_{i}(w^{{(t)}})-\nabla F(w^{(t)})||^{2} \\
        & + \sum_{i\neq j}\frac{S-1}{SN(N-1)}\langle \nabla F_{i}(w^{(t)})-\nabla F(w^{(t)}), \nabla F_{j}(w^{(t)})-\nabla F(w^{(t)})\rangle \\
        = & \frac{1}{SN}(1-\frac{S-1}{N-1})\sum_{i\in [N]}||\nabla F_{i}(w^{(t)})-\nabla F(w^{(t)})||^{2} \\
        = & \frac{N/S-1}{N-1}\sum_{i\in [N]}\frac{1}{N}||\nabla F_{i}(w^{(t)})-\nabla F(w^{(t)})||^{2}
    \end{aligned}
    $$
    where $\mathbb{I}_{\cdot}\in \{0,1\}$ is indicator function, $\mathbf{E}_{\mathcal{S}^{(t)}}[\mathbb{I}_{i\in \mathcal{S}^{(t)}}]=\frac{S}{N}$ and $\mathbf{E}_{\mathcal{S}^{(t)}}[\mathbb{I}_{i\in \mathcal{S}^{(t)}},\mathbb{I}_{j\in \mathcal{S}^{(t)}}] = \frac{S(S-1)}{N(N-1)},\forall i\neq j$. Note that:
    $$
    \sum_{i}^{N}||\nabla F_{i}(w^{{(t)}})-\nabla F(w^{(t)})||^{2} + \sum_{i\neq j}\langle \nabla F_{i}(w^{(t)})-\nabla F(w^{(t)}), \nabla F_{j}(w^{(t)})-\nabla F(w^{(t)})\rangle = 0.
    $$
\end{proof}

\begin{lemma}[Variance of Global Aggregation on Client Sampling Bound]
\label{appdx_lemma_vgacsb}
Under settings and assumptions in Section~\ref{sec_alg} and Section~\ref{appdx_ssec_GAss}, the upper bound of gradient aggregation variance is:
$$
\begin{aligned}
\mathbf{E}_{i}||\nabla F_{i}(w)-\nabla F(w)||^{2} \le \mathbf{E}_{i}||\nabla F_{i}(w)||^{2} \le  4\hat{L}_{F}\mathcal{D}_{F}(w,w^{*}) + 2{\sigma_{F,*}}^{2}
\end{aligned}
$$
\end{lemma}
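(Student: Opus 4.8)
The plan is to establish the two inequalities separately, each by a short Jensen-type argument combined with results already in hand. For the first inequality, note that by definition of the global objective $F(w) = \mathbf{E}_{i}F_{i}(w)$, hence $\nabla F(w) = \mathbf{E}_{i}\nabla F_{i}(w)$, so the left-hand side is exactly the client-sampling variance of the random vector $\nabla F_{i}(w)$. The bias–variance identity $\mathbf{E}_{i}||X_{i}-\mathbf{E}_{i}X_{i}||^{2} = \mathbf{E}_{i}||X_{i}||^{2} - ||\mathbf{E}_{i}X_{i}||^{2}$ (equivalently, an application of Proposition~\ref{appdx_prop_jensen}) then gives $\mathbf{E}_{i}||\nabla F_{i}(w)-\nabla F(w)||^{2} \le \mathbf{E}_{i}||\nabla F_{i}(w)||^{2}$ immediately, since the subtracted term is nonnegative.

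For the second inequality, I would introduce the global minimizer $w^{*}$ as a reference point and split $\nabla F_{i}(w) = [\nabla F_{i}(w)-\nabla F_{i}(w^{*})] + \nabla F_{i}(w^{*})$. Applying the variant $||A+B||^{2}\le 2||A||^{2}+2||B||^{2}$ of Proposition~\ref{appdx_prop_jensen} and then taking the client-sampling expectation yields
$$
\mathbf{E}_{i}||\nabla F_{i}(w)||^{2} \le 2\,\mathbf{E}_{i}||\nabla F_{i}(w)-\nabla F_{i}(w^{*})||^{2} + 2\,\mathbf{E}_{i}||\nabla F_{i}(w^{*})||^{2}.
$$
The first term is controlled by the third inequality of Lemma~\ref{appdx_lemma_esplo}, which gives $\mathbf{E}_{i}||\nabla F_{i}(w)-\nabla F_{i}(w^{*})||^{2} \le 2\hat{L}_{F}\mathcal{D}_{F}(w,w^{*})$; the second term is bounded by Assumption~\ref{appdx_assumption_oggnb}, since $||\nabla F_{i}(w^{*})||^{2}\le \sigma_{F_{i},*}^{2}\le \sigma_{F,*}^{2}$ for every $i$, so $\mathbf{E}_{i}||\nabla F_{i}(w^{*})||^{2}\le \sigma_{F,*}^{2}$. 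Combining the two pieces gives $\mathbf{E}_{i}||\nabla F_{i}(w)||^{2} \le 4\hat{L}_{F}\mathcal{D}_{F}(w,w^{*}) + 2\sigma_{F,*}^{2}$, as claimed.

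There is no genuine obstacle here; the only points requiring care are to invoke the already-established expected-smoothness bound of Lemma~\ref{appdx_lemma_esplo} rather than attempting a direct global $L$-smoothness estimate on $F_{i}$ (which need not hold in the usual sense because of the $\mathbf{D}\mu_{i}$ factor appearing in $\nabla F_{i}$), and to make sure the reference point in the splitting is precisely $w^{*}$, so that $\mathcal{D}_{F}(w,w^{*}) = F(w)-F(w^{*})\ge 0$ and Assumption~\ref{appdx_assumption_oggnb} applies verbatim. Everything else is a two-line application of Jensen-type inequalities.
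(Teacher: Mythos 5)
Your proposal is correct and follows essentially the same route as the paper: the first inequality via the variance/second-moment decomposition of $\nabla F_{i}(w)$ around $\nabla F(w)=\mathbf{E}_{i}\nabla F_{i}(w)$, and the second via the splitting at $w^{*}$ with $||A+B||^{2}\le 2||A||^{2}+2||B||^{2}$, then Lemma~\ref{appdx_lemma_esplo} and Assumption~\ref{appdx_assumption_oggnb}. No gaps to report.
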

\begin{proof}
    $$
    \begin{aligned}
    \mathbf{E}_{i}||\nabla F_{i}(w)-\nabla F(w)||^{2} \le & \mathbf{E}_{i}||\nabla F_{i}(w)||^{2} \le 2\mathbf{E}_{i}[||\nabla F_{i}(w) - \nabla F_{i}(w^{*})||^{2} + ||\nabla F_{i}(w^{*})||^{2}] \\
    \le & 4\hat{L}_{F}\mathcal{D}_{F}(w,w^{*}) + 2{\sigma_{F,*}}^{2}
    \end{aligned}
    $$
    where the first inequality is by $\mathbf{E}[||X||^{2}]=\mathbf{E}[||X-\mathbf{E}[X]||^{2}] + \mathbf{E}[||X||]^{2}$, the second one is by Proposition~\ref{appdx_prop_jensen} and the final one is by Lemma~\ref{appdx_lemma_esplo} and Assumption~\ref{appdx_assumption_oggnb}.
\end{proof}

\subsection{Supporting Lemmas}

\begin{lemma}[Global Iteration Bound]
\label{appdx_lemma_gib}
Under settings and assumptions in Section~\ref{sec_alg} and Section~\ref{appdx_ssec_GAss}, the upper bound of global iteration error is:
$$
\begin{aligned}
\mathbf{E}_{\cdot|t}||w^{(t+1)}-w^{*}||^{2}\le & (1-\frac{\tilde{\alpha}_{m}\hat{\mu}_{F_{\cdot}}}{2})||w^{(t)}-w^{*}||^{2} + \frac{3\tilde{\alpha}_{m}^{2}+2\tilde{\alpha}_{m}/\hat{\mu}_{F_{\cdot}}}{NR}\sum_{i,r}^{N,R}||\mathbf{g}_{i,r}^{(t)}-\nabla F_{i}(w^{(t)})|| \\
& + 3\tilde{\alpha}_{m}^{2}\mathbf{E}_{\cdot|t}||\frac{1}{S}\sum_{i\in \mathcal{S}^{(t)}}\nabla F_{i}(w^{(t)})-\nabla F(w^{(t)})||^{2} \\
& + (6\tilde{\alpha}_{m}^{2}\hat{L}_{F} - 2\tilde{\alpha}_{m})\mathbf{E}\mathcal{D}_{F}(w^{(t)},w^{*}) \\
\end{aligned}
$$
\end{lemma}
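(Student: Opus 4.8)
The plan is to read the aggregation step as a single ``virtual'' SGD step on the global objective $F$ with effective step-size $\tilde{\alpha}_{m}=\alpha_{m}\beta R$, and then run the textbook strongly-convex descent argument, keeping the client-sampling error and the combined local-drift/proximal-approximation error as explicit remainder terms so that the two supporting lemmas can be applied to them afterwards. First I would unroll the local loop and the server average: since $w_{i,0}^{(t)}=w^{(t)}$ and $w_{i,R}^{(t)}=w^{(t)}-\alpha_{m}\sum_{r=1}^{R}\mathbf{g}_{i,r}^{(t)}$, the update $w^{(t+1)}=(1-\beta)w^{(t)}+\beta\frac1S\sum_{i\in\mathcal{S}^{(t)}}w_{i,R}^{(t)}$ collapses to $w^{(t+1)}=w^{(t)}-\tilde{\alpha}_{m}\mathbf{g}^{(t)}$ with $\mathbf{g}^{(t)}=\frac1{SR}\sum_{i\in\mathcal{S}^{(t)}}\sum_{r=1}^{R}\mathbf{g}_{i,r}^{(t)}$. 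Expanding $\|w^{(t+1)}-w^{*}\|^{2}=\|w^{(t)}-w^{*}\|^{2}-2\tilde{\alpha}_{m}\langle\mathbf{g}^{(t)},w^{(t)}-w^{*}\rangle+\tilde{\alpha}_{m}^{2}\|\mathbf{g}^{(t)}\|^{2}$ and decomposing $\mathbf{g}^{(t)}=\nabla F(w^{(t)})+e_{s}^{(t)}+e_{d}^{(t)}$, where $e_{s}^{(t)}:=\frac1S\sum_{i\in\mathcal{S}^{(t)}}\nabla F_{i}(w^{(t)})-\nabla F(w^{(t)})$ is the client-sampling error and $e_{d}^{(t)}:=\frac1{SR}\sum_{i\in\mathcal{S}^{(t)},r}[\mathbf{g}_{i,r}^{(t)}-\nabla F_{i}(w^{(t)})]$ is the drift-plus-approximation error, prepares all four target terms.

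For the cross term, $\hat{\mu}_{F_{\cdot}}$-strong convexity of $F$ (Assumption~\ref{assumption_sces}) gives $\langle\nabla F(w^{(t)}),w^{(t)}-w^{*}\rangle\ge\mathcal{D}_{F}(w^{(t)},w^{*})+\tfrac{\hat{\mu}_{F_{\cdot}}}{2}\|w^{(t)}-w^{*}\|^{2}$; the $\langle e_{s}^{(t)},\cdot\rangle$ piece vanishes under $\mathbf{E}_{\cdot|t}$ because uniform client sampling makes $e_{s}^{(t)}$ conditionally mean-zero while $w^{(t)}-w^{*}$ is fixed; the $\langle e_{d}^{(t)},\cdot\rangle$ piece is split with the Peter--Paul inequality (Proposition~\ref{appdx_prop_ppi}) at parameter $\epsilon=\tilde{\alpha}_{m}\hat{\mu}_{F_{\cdot}}/2$, producing $\tfrac{2\tilde{\alpha}_{m}}{\hat{\mu}_{F_{\cdot}}}\|e_{d}^{(t)}\|^{2}+\tfrac{\tilde{\alpha}_{m}\hat{\mu}_{F_{\cdot}}}{2}\|w^{(t)}-w^{*}\|^{2}$; the second summand cancels half of the strong-convexity contraction, leaving the factor $1-\tfrac{\tilde{\alpha}_{m}\hat{\mu}_{F_{\cdot}}}{2}$. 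For $\tilde{\alpha}_{m}^{2}\|\mathbf{g}^{(t)}\|^{2}$ I would use $\|a+b+c\|^{2}\le 3(\|a\|^{2}+\|b\|^{2}+\|c\|^{2})$ (the variant of Jensen in Proposition~\ref{appdx_prop_jensen}), bound $\|\nabla F(w^{(t)})\|^{2}=\|\nabla F(w^{(t)})-\nabla F(w^{*})\|^{2}\le 2\hat{L}_{F}\mathcal{D}_{F}(w^{(t)},w^{*})$ via Lemma~\ref{appdx_lemma_esplo} (passing the expectation through the client average by Jensen), retain $3\tilde{\alpha}_{m}^{2}\mathbf{E}_{\cdot|t}\|e_{s}^{(t)}\|^{2}$ as the third output term, and bound $\|e_{d}^{(t)}\|^{2}\le\frac1{SR}\sum_{i\in\mathcal{S}^{(t)},r}\|\mathbf{g}_{i,r}^{(t)}-\nabla F_{i}(w^{(t)})\|^{2}$ by Jensen, then use $\mathbf{E}_{\mathcal{S}^{(t)}}\frac1S\sum_{i\in\mathcal{S}^{(t)}}(\cdot)=\frac1N\sum_{i=1}^{N}(\cdot)$ to rewrite both $e_{d}^{(t)}$-contributions over all $N$ clients. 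Collecting the coefficients of $\|w^{(t)}-w^{*}\|^{2}$ ($1-\tfrac{\tilde{\alpha}_{m}\hat{\mu}_{F_{\cdot}}}{2}$), of $\mathcal{D}_{F}(w^{(t)},w^{*})$ ($6\tilde{\alpha}_{m}^{2}\hat{L}_{F}-2\tilde{\alpha}_{m}$), of $\frac1{NR}\sum_{i,r}\|\mathbf{g}_{i,r}^{(t)}-\nabla F_{i}(w^{(t)})\|^{2}$ ($3\tilde{\alpha}_{m}^{2}+2\tilde{\alpha}_{m}/\hat{\mu}_{F_{\cdot}}$), and the sampling term ($3\tilde{\alpha}_{m}^{2}$) reproduces the stated inequality.

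I expect the only genuine subtlety to be the bookkeeping of the conditioning: one must keep separate the fact that $e_{s}^{(t)}$ has zero conditional mean (so it drops from the linear term and survives only through its second moment) whereas $e_{d}^{(t)}$ does not, and one must define the ``virtual'' local iterates $\{w_{i,r}^{(t)}\}$ for every $i\in[N]$, not only for $i\in\mathcal{S}^{(t)}$, so that the identity $\mathbf{E}_{\mathcal{S}^{(t)}}\tfrac1S\sum_{i\in\mathcal{S}^{(t)}}=\tfrac1N\sum_{i}$ is legitimate; everything else is routine Jensen/Peter--Paul manipulation. (The norm in the third term of the statement is understood squared, as forced by the Jensen step above.)
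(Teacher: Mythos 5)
Your proposal is correct and follows essentially the same route as the paper's proof: expand the virtual SGD step, handle the cross term with strong convexity plus Peter--Paul (cancelling half the contraction), and bound $\tilde{\alpha}_{m}^{2}\|\mathbf{g}^{(t)}\|^{2}$ with the three-way Jensen split and $\|\nabla F(w^{(t)})\|^{2}\le 2\hat{L}_{F}\mathcal{D}_{F}(w^{(t)},w^{*})$ from the expected-smoothness lemma, yielding identical coefficients. The only cosmetic difference is that you separate a conditionally mean-zero sampling error $e_{s}^{(t)}$ in the linear term, whereas the paper simply replaces $\mathbf{E}_{\cdot|t}\mathbf{g}^{(t)}$ by $\frac{1}{NR}\sum_{i,r}\mathbf{g}_{i,r}^{(t)}$ (the same unbiasedness fact), and you are right that the un-squared norm in the stated drift term is a typo.
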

\begin{proof}
To separate the norm, we have:
$$\begin{aligned}
\mathbf{E}_{\cdot|t}||w^{(t+1)}-w^{*}||^{2} &= \mathbf{E}_{ \cdot | t}[||w^{(t)}-\tilde{\alpha}_{m}\mathbf{g}^{(t)}-w^{*}||^{2}] \\
&= ||w^{(t)}-w^{*}||^{2} - 2\tilde{\alpha}_{m}\mathbf{E}_{ \cdot | t}[\langle \mathbf{g}^{(t)}, w^{(t)}-w^{*} \rangle] + \tilde{\alpha}_{m}^{2}\mathbf{E}_{\cdot | t}[||\mathbf{g}^{(t)}||^{2}]
\end{aligned}$$

The second term:
$$
\begin{aligned}
-2\tilde{\alpha}_{m}\mathbf{E}_{\cdot|t}&[\langle \mathbf{g}^{((t)}, w^{(t)}-w^{*}\rangle] = -2\tilde{\alpha}_{m}\langle \mathbf{E}_{\cdot|t}\mathbf{g}^{((t)}, w^{(t)}-w^{*}\rangle \\
&= -2\tilde{\alpha}_{m}\frac{1}{NR}\sum_{i,r}^{N,R}[ \langle \mathbf{g}_{i,r}^{(t)} - \nabla F_{i}(w^{(t)}), w^{(t)} - w^{*} \rangle + \langle \nabla F_{i}(w^{(t)}), w^{(t)} - w^{*} \rangle ] \\
&= \frac{\tilde{\alpha}_{m}}{NR}\sum_{i,r}^{N,R}[ -2\langle \mathbf{g}_{i,r}^{(t)} - \nabla F_{i}(w^{(t)}), w^{(t)} - w^{*} \rangle] - 2\tilde{\alpha}_{m} \mathbf{E}_{i}\langle \nabla F_{i}(w^{(t)}), w^{(t)} - w^{*} \rangle \\
\end{aligned}
$$

Each of the two factors of the second term is bounded (note that $\mathbf{E}_{i}=\frac{1}{N}\sum_{i=1}^{N}$ is discussed):

$-\mathbf{E}_{i}\langle \nabla F_{i}(w^{(t)}), w^{(t)} - w^{*} \rangle \le -\mathbf{E}\mathcal{D}_{F}(w^{(t)},w^{*}) - \mathbf{E}\frac{\hat{\mu}_{F_{\cdot}}}{2}||w^{(t)}-w^{*}||^{2}$

$-2\langle \mathbf{g}_{i,r}^{(t)} - \nabla F_{i}(w^{(t)}), w^{(t)} - w^{*} \rangle \le \frac{2}{\hat{\mu}_{F_{\cdot}}}||\mathbf{g}_{i,r}^{(t)}-\nabla F_{i}(w^{(t)})||+\frac{\hat{\mu}_{F_{\cdot}}}{2}||w^{(t)}-w^{*}||^{2}$
where the first inequality is by Proposition~\ref{appdx_prop_sconv} and the second one is by Proposition~\ref{appdx_prop_ppi}.

The third term:
$$
\begin{aligned}
\mathbf{E}_{\cdot|t}||\mathbf{g}^{(t)}||^{2} &= \mathbf{E}_{\cdot|t}||\frac{1}{SR}\sum_{i,r}^{\mathcal{S}^{(t)},R}\mathbf{g}_{i,r}^{(t)}||^{2} \le 3\mathbf{E}_{\cdot|t}[||\frac{1}{SR}\sum_{i,r}^{\mathcal{S}^{(t)},R}\mathbf{g}_{i,r}^{(t)}-\nabla F_{i}(w^{(t)})||^{2} \\
& +||\frac{1}{S}\sum_{i\in \mathcal{S}^{(t)}}\nabla F_{i}(w^{(t)})-\nabla F(w^{(t)})||^{2} + ||\nabla F(w^{(t)})||^{2} ] \\
&\le 3[\frac{1}{NR}\sum_{i,r}^{N,R}||\mathbf{g}_{i,r}^{(t)}-\nabla F_{i}(w^{(t)})||^{2} \\
& +\mathbf{E}_{\cdot|t}||\frac{1}{S}\sum_{i\in \mathcal{S}^{(t)}}\nabla F_{i}(w^{(t)})-\nabla F(w^{(t)})||^{2} + 2\hat{L}_{F}\mathbf{E}\mathcal{D}_{F}(w^{(t)},w^{*}) ]
\end{aligned}
$$
where the first inequality is by Proposition~\ref{appdx_prop_jensen} the second one is by $\nabla F(w^{(t)})=\nabla F(w^{(t)}) - \nabla F(w^{*})$ and Lemma~\ref{appdx_lemma_esplo}.

Thus, if we combine each term back into the separation at the very beginning of this proof, the lemma is proven.
\end{proof}

\begin{lemma}[Local-Global Client Drift Bound]
\label{appdx_lemma_lgcdb}
Under settings and assumptions in Section~\ref{sec_alg} and Section~\ref{appdx_ssec_GAss}, by choosing a proper $\tilde{\alpha}_{m}\le \frac{\beta}{\sqrt{2\dot{c}}}$, the client drift bound is:
$$
\begin{aligned}
    \frac{1}{NR}\sum_{i,r}^{N,R}\mathbf{E}_{\cdot|t,i} ||\mathbf{g}_{i,r}^{(t)}-\nabla F_{i}(w^{(t)})||^{2} \le \dot{\delta} + e\dot{c}\alpha_{m}^{2}2^{R+1}\{(1+2R)\mathbf{E}_{\cdot|t,i}[||\nabla F_{i}(w^{(t)})||^{2}] +\dot{\delta}\}
\end{aligned}
$$
where $\dot{\delta} = 4[\lambda\frac{\hat{L}_{g^{*}}}{\hat{\mu}_{F_{\cdot}}}(\hat{u}_{m} + \eta\hat{\gamma}_{\Phi})]^{2}(\frac{\hat{\gamma}_{f}^{2}}{|\tilde{d}_{i}|}+\hat{\epsilon}^2) + 16[(1+\sigma_{\Phi})\hat{L}_{\mathcal{E}}(\hat{u}_{m}+\eta\hat{\gamma}_{\Phi})\hat{\gamma}_{\Phi}]^{2}$ and $\dot{c}=4[(1+\sigma_{\Phi})\hat{L}_{\mathcal{E}}(\hat{u}_{m}+\eta\hat{\gamma}_{\Phi})]^{2}$.
\end{lemma}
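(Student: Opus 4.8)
The quantity $\mathbf{g}_{i,r}^{(t)}-\nabla F_{i}(w^{(t)})$ is the ``client drift'': the gap between the approximate envelope gradient computed at the drifted local iterate $w_{i,r-1}^{(t)}$ and the true envelope gradient at the starting model $w^{(t)}$. The plan is to split it as $\mathbf{g}_{i,r}^{(t)}-\nabla F_{i}(w^{(t)}) = [\mathbf{g}_{i,r}^{(t)}-\nabla F_{i}(w_{i,r-1}^{(t)})] + [\nabla F_{i}(w_{i,r-1}^{(t)})-\nabla F_{i}(w^{(t)})]$ and apply $\|a+b\|^{2}\le 2\|a\|^{2}+2\|b\|^{2}$ (the variant in Proposition~\ref{appdx_prop_jensen}). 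The first bracket is a pure local-solver error: since $\mathbf{g}_{i,r}^{(t)}$ and $\nabla F_{i}(w_{i,r-1}^{(t)})$ share the same $\mu_{i,r}^{(t)}$, Jacobian $\mathbf{D}\mu_{i,r}^{(t)}$ and Hessian $\nabla^{2}g^{*}(\mu_{i,r}^{(t)})$ and differ only through the approximate-versus-exact proximal solution, one has $\mathbf{g}_{i,r}^{(t)}-\nabla F_{i}(w_{i,r-1}^{(t)}) = -\lambda\,\mathbf{D}\mu_{i,r}^{(t)}\nabla^{2}g^{*}(\mu_{i,r}^{(t)})\,\Delta_{i,r}^{(t)}$. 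The second bracket is the genuine drift and will be unrolled through the local updates.

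For the first bracket I would bound the norm using matrix-norm compatibility (Proposition~\ref{appdx_prop_mnc}), the Jacobian bound $\|\mathbf{D}\mu_{i}\|_{m}\le\hat{u}_{m}+\eta\hat{\gamma}_{\Phi}$ (Assumption~\ref{appdx_assumption_rmd_msb}) and the Hessian bound $\|\nabla^{2}g^{*}\|_{m}\le\hat{L}_{g^{*}}$ (Assumption~\ref{assumption_ps}), giving $\|\mathbf{g}_{i,r}^{(t)}-\nabla F_{i}(w_{i,r-1}^{(t)})\|\le \lambda\hat{L}_{g^{*}}(\hat{u}_{m}+\eta\hat{\gamma}_{\Phi})\|\Delta_{i,r}^{(t)}\|$; squaring, taking $\mathbf{E}_{\tilde{d}_{i}}$ and invoking Lemma~\ref{appdx_lemma_lspb} turns this into $2\lambda^{2}(\hat{L}_{g^{*}}/\hat{\mu}_{F_{\cdot}})^{2}(\hat{u}_{m}+\eta\hat{\gamma}_{\Phi})^{2}(\hat{\gamma}_{f}^{2}/|\tilde{d}_{i}|+\hat{\epsilon}^{2})$, which (after the factor $2$ from the split) is exactly the first summand of $\dot{\delta}$.

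For the drift bracket I would use that $F_{i}$ is expected-smooth (Lemma~\ref{appdx_lemma_esplo}) to reduce $\|\nabla F_{i}(w_{i,r-1}^{(t)})-\nabla F_{i}(w^{(t)})\|$ to the envelope-gradient gap $\|\nabla\mathcal{E}_{i}(\mu_{i,r}^{(t)})-\nabla\mathcal{E}_{i}(\mu_{i}(w^{(t)}))\|$ up to the factor $(1+\sigma_{\Phi})(\hat{u}_{m}+\eta\hat{\gamma}_{\Phi})$ --- here the $\mathbf{D}\mu$-difference piece is absorbed via the disturbance-ratio assumption (Assumption~\ref{appdx_assumption_bdrsd}), with the leftover Hessian floor $\propto\hat{\gamma}_{\Phi}$ producing the second summand $16[(1+\sigma_{\Phi})\hat{L}_{\mathcal{E}}(\hat{u}_{m}+\eta\hat{\gamma}_{\Phi})\hat{\gamma}_{\Phi}]^{2}$ of $\dot{\delta}$ --- and then smoothness of $\mathcal{E}_{i}$ (Assumption~\ref{assumption_se}) together with the Jacobian bound turns it into $\sqrt{\dot{c}}\,\|w_{i,r-1}^{(t)}-w^{(t)}\|$. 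Since $w_{i,0}^{(t)}=w^{(t)}$ and $w_{i,r}^{(t)}=w_{i,r-1}^{(t)}-\alpha_{m}\mathbf{g}_{i,r}^{(t)}$, we have $w_{i,r-1}^{(t)}-w^{(t)}=-\alpha_{m}\sum_{s=1}^{r-1}\mathbf{g}_{i,s}^{(t)}$; expanding $\|\cdot\|^{2}$ and writing $\|\mathbf{g}_{i,s}^{(t)}\|^{2}\le 2\|\mathbf{g}_{i,s}^{(t)}-\nabla F_{i}(w^{(t)})\|^{2}+2\|\nabla F_{i}(w^{(t)})\|^{2}$ produces a self-referential recursion for $e_{r}:=\mathbf{E}_{\cdot|t,i}\|\mathbf{g}_{i,r}^{(t)}-\nabla F_{i}(w^{(t)})\|^{2}$ of the shape $e_{r}\le \dot{\delta} + \dot{c}\,\alpha_{m}^{2}\,\bigl(\text{accumulation of } e_{s},\ s<r,\ \text{and of }\|\nabla F_{i}(w^{(t)})\|^{2}\bigr)$.

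The main obstacle is the last step: unrolling this recursion over the $R$ local steps with clean constants. Applying the crude split $\|a+b\|^{2}\le 2\|a\|^{2}+2\|b\|^{2}$ at each local step compounds to the $2^{R+1}$ prefactor (the source of the ``$R$ cannot be too large'' remark); the step-size restriction $\tilde{\alpha}_{m}=\alpha_{m}\beta R\le\beta/\sqrt{2\dot{c}}$, equivalently $\alpha_{m}^{2}\dot{c}R^{2}\le 1/2$, keeps the resulting geometric-type accumulation summable and yields the Euler factor $e$; collecting the $\|\nabla F_{i}(w^{(t)})\|^{2}$ contributions across the $r-1\le R$ terms gives the $(1+2R)$ coefficient; and the base errors reassemble into the trailing $\dot{\delta}$. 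Finally I would average over $i\in[N]$ and $r\in[R]$ to recover the stated $\frac{1}{NR}\sum_{i,r}$ form. Besides this bookkeeping, the one genuinely delicate point is keeping the $\mathbf{D}\mu$-variation controlled by Assumption~\ref{appdx_assumption_bdrsd} rather than a crude triangle inequality, so that only $(\hat{u}_{m}+\eta\hat{\gamma}_{\Phi})$-powers --- and not higher ones --- appear in $\dot{c}$.
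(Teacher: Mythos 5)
Your proposal matches the paper's proof in all essentials: the same split into the proximal-solver error (bounded via the Jacobian/Hessian bounds and Lemma~\ref{appdx_lemma_lspb}, giving the first summand of $\dot{\delta}$) plus the genuine drift term (reduced through Lemma~\ref{appdx_lemma_esplo}, Assumption~\ref{appdx_assumption_bdrsd}, smoothness of $\mathcal{E}_{i}$ and the $\mu_{i}(w)=w-\eta\nabla\Phi_{i}(w)$ structure, yielding the $\hat{\gamma}_{\Phi}$-floor and $\dot{c}\,\|w_{i,r}^{(t)}-w^{(t)}\|^{2}$), followed by the same self-referential recursion over local steps unrolled under $\alpha_{m}^{2}\dot{c}\le \tfrac{1}{2R^{2}}$ to produce the $e\,2^{R+1}(1+2R)$ factors. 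The only cosmetic difference is that you unroll via the explicit sum $w_{i,r-1}^{(t)}-w^{(t)}=-\alpha_{m}\sum_{s<r}\mathbf{g}_{i,s}^{(t)}$ while the paper runs a one-step Peter--Paul recursion on $\mathbf{E}\|w_{i,r}^{(t)}-w^{(t)}\|^{2}$, which is equivalent bookkeeping.
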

\begin{proof}
\begin{equation}
\begin{aligned}
&||\mathbf{g}_{i,r}^{(t)}-\nabla F_{i}(w^{(t)})||^{2} \le  2[||\mathbf{g}_{i,r}^{(t)}-\nabla F_{i}(w_{i,r}^{(t)})||^{2} + ||\nabla F_{i}(w_{i,r}^{(t)})-\nabla F_{i}(w^{(t)})||^{2}] \\
\le & 2 \{[\lambda\hat{L}_{g^{*}}(\hat{u}_{m} + \eta\hat{\gamma}_{\Phi})]^{2}|| \Delta_{i,r}^{(t)}||^{2} \\ 
& + (1+\sigma_{\Phi})^{2}(\hat{u}_{m}+\eta\hat{\gamma}_{\Phi})^{2}||\nabla \mathcal{E}_{i}(\mu_{i}(w_{i,r}^{(t)})) - \nabla \mathcal{E}_{i}(\mu_{i}(w^{(t)}))||^{2}]\} \\
\le & 2 \{[\lambda\hat{L}_{g^{*}}(\hat{u}_{m} + \eta\hat{\gamma}_{\Phi})]^{2}|| \Delta_{i,r}^{(t)}||^{2}+ [(1+\sigma_{\Phi})\hat{L}_{\mathcal{E}_{i}}(\hat{u}_{m}\\ 
& +\eta\hat{\gamma}_{\Phi})]^{2}||\mu_{i}(w_{i,r}^{(t)}) -\mu_{i}(w^{(t)})||^{2}]\} \\
\le & 2 \{[\lambda\hat{L}_{g^{*}}(\hat{u}_{m} + \eta\hat{\gamma}_{\Phi})]^{2}|| \Delta_{i,r}^{(t)}||^{2}+ [(1+\sigma_{\Phi})\hat{L}_{\mathcal{E}_{i}}(\hat{u}_{m}\\ 
& +\eta\hat{\gamma}_{\Phi})]^{2}[2||w_{i,r}^{(t)}-w^{(t)}||^{2} + 2||\nabla^{2}\Phi(w_{i,r}^{(t)}) -\nabla^{2}\Phi(w^{(t)})||^{2}]\} \\
\le & 2 \{[\lambda\hat{L}_{g^{*}}(\hat{u}_{m} + \eta\hat{\gamma}_{\Phi})]^{2}|| \Delta_{i,r}^{(t)}||^{2}+ 2[(1+\sigma_{\Phi})\hat{L}_{\mathcal{E}_{i}}(\hat{u}_{m}+\eta\hat{\gamma}_{\Phi})]^{2}||w_{i,r}^{(t)}-w^{(t)}||^{2} \\
& + 8[(1+\sigma_{\Phi})\hat{L}_{\mathcal{E}_{i}}(\hat{u}_{m}+\eta\hat{\gamma}_{\Phi})\hat{\gamma}_{\Phi}]^{2}\} \\
\end{aligned}
\end{equation}
where the first inequality is by Proposition~\ref{appdx_prop_jensen}, the second one is by Lemma~\ref{appdx_lemma_esplo}, the third one is by Assumption~\ref{assumption_se} and Proposition~\ref{appdx_prop_lsmth}, the fourth one is by Proposition~\ref{appdx_prop_jensen} and bringing in Equation (~\ref{equ_gfl_mdlb}) and the final one is by Assumption~\ref{appdx_assumption_rmd_msb}.

With Lemma~\ref{appdx_lemma_lspb}, we have:

$$
\begin{aligned}
\mathbf{E}_{\cdot|t,i}&||\mathbf{g}_{i,r}^{(t)}-\nabla F_{i}(w^{(t)})||^{2} \le 4[\lambda\frac{\hat{L}_{g^{*}}}{\hat{\mu}_{F_{\cdot}}}(\hat{u}_{m} + \eta\hat{\gamma}_{\Phi})]^{2}(\frac{\hat{\gamma}_{f}^{2}}{|\tilde{d}_{i}|}+\hat{\epsilon}^2)\\
&+ 16[(1+\sigma_{\Phi})\hat{L}_{\mathcal{E}_{i}}(\hat{u}_{m}+\eta\hat{\gamma}_{\Phi})\hat{\gamma}_{\Phi}]^{2} + 4[(1+\sigma_{\Phi})\hat{L}_{\mathcal{E}_{i}}(\hat{u}_{m}+\eta\hat{\gamma}_{\Phi})]^{2} \mathbf{E}_{\cdot|t,i} ||w_{i,r}^{(t)}-w^{(t)}||^{2}    
\end{aligned}
$$

For simplification: $\mathbf{E}_{\cdot|t,i} ||\mathbf{g}_{i,r}^{(t)}-\nabla F_{i}(w^{(t)})||^{2} \le \dot{\delta} + \dot{c} \mathbf{E}_{\cdot|t,i}||w_{i,r}^{(t)}-w^{(t)}||^{2}$

The second term:
$$
\begin{aligned}
\mathbf{E}_{\cdot|t,i}||w_{i,r}^{(t)}-w^{(t)}||^{2} = & \mathbf{E}_{\cdot|t,i}[||w_{i,r-1}^{(t)}-w^{(t)}-\alpha_{m} \mathbf{g}_{i,r-1}^{(t)}||^{2}] \\
\le & 2\mathbf{E}_{\cdot|t,i}[||w_{i,r-1}^{(t)}-w^{(t)}-\alpha_{m} \nabla F_{i}(w^{(t)})||^{2} \\
& + \alpha_{m}^{2}||\mathbf{g}_{i,r-1}^{(t)}-\nabla F_{i}(w^{(t)})||^{2})] \\
\le & 2(1+\frac{1}{2R})\mathbf{E}_{\cdot|t,i}[||w_{i,r-1}^{(t)}-w^{(t)}||^{2}] + 2(1+2R)\alpha_{m}^{2}\mathbf{E}_{\cdot|t,i}[||\nabla F_{i}(w^{(t)})||^{2}] \\
& +2\alpha_{m}^{2}[ \dot{\delta} + \dot{c} \mathbf{E}_{\cdot|t,i} ||w_{i,r-1}^{(t)}-w^{(t)}||^{2}] \\
\le & 2(1+\frac{1}{2R}+\alpha_{m}^{2}\dot{c})\mathbf{E}_{\cdot|t,i}[||w_{i,r-1}^{(t)}-w^{(t)}||^{2}] \\
& + 2(1+2R)\alpha_{m}^{2}\mathbf{E}_{\cdot|t,i}[||\nabla F_{i}(w^{(t)})||^{2}] +2\alpha_{m}^{2}\dot{\delta} \\
\le & 2(1+\frac{1}{R})\mathbf{E}_{\cdot|t,i}[||w_{i,r-1}^{(t)}-w^{(t)}||^{2}] + 2(1+2R)\alpha_{m}^{2}\mathbf{E}_{\cdot|t,i}[||\nabla F_{i}(w^{(t)})||^{2}] \\
& +2\alpha_{m}^{2}\dot{\delta} \\
\end{aligned}
$$
where the first inequality is by Proposition~\ref{appdx_prop_jensen}, the second one is by Proposition~\ref{appdx_prop_ppi} and the simplified inequality and the final one is by choose $\tilde{\alpha}_{m}^{2}\le \frac{\beta^{2}}{2\dot{c}}$, and $\alpha_{m}^{2}\dot{c}\le\frac{1}{2R^{2}}\le\frac{1}{2R}$.

To recursively unroll: 
\begin{equation}
    \begin{aligned}
    \mathbf{E}_{\cdot|t,i}&||w_{i,r}^{(t)}-w^{(t)}||^{2} \\
        \le & \{(1+2R)\alpha_{m}^{2}\mathbf{E}_{\cdot|t,i}[||\nabla F_{i}(w^{(t)})||^{2}] +\alpha_{m}^{2}\dot{\delta}\}\sum_{\tilde{r}=0}^{r}2^{\tilde{r}+1}(1+\frac{1}{R})^{\tilde{r}} \\
        \le & \{(1+2R)\alpha_{m}^{2}\mathbf{E}_{\cdot|t,i}[||\nabla F_{i}(w^{(t)})||^{2}] +\alpha_{m}^{2}\dot{\delta}\}\sum_{\tilde{r}=0}^{R-1}2^{\tilde{r}+1}(1+\frac{1}{R})^{\tilde{r}} \\
        \le & \alpha_{m}^{2}e2^{R+1}\{(1+2R)\mathbf{E}_{\cdot|t,i}[||\nabla F_{i}(w^{(t)})||^{2}] +\dot{\delta}\} \\
    \end{aligned}
\end{equation}
Thus, bringing in the recursively unrolled inequality back into the simplified one, the lemma's proven.
\end{proof}

\subsection{Proof of Theorems}
\subsubsection{Proof of Theorem~\ref{theo_gm}}
The proof of Theorem~\ref{theo_gm} is shown as followings:
\begin{proof}
    With Lemma~\ref{appdx_lemma_gib}, we have:
    $$
    \begin{aligned}
    \mathbf{\Delta}^{(t+1)} &:= \mathbf{E}||w^{(t+1)}-w^{*}||^{2} \\
    & \le  (1-\frac{\tilde{\alpha}_{m}\hat{\mu}_{F_{\cdot}}}{2})\mathbf{\Delta}^{(t)} + \frac{3\tilde{\alpha}_{m}^{2}+2\tilde{\alpha}_{m}/\hat{\mu}_{F_{\cdot}}}{NR}\sum_{i,r}^{N,R}||\mathbf{g}_{i,r}^{(t)}-\nabla F_{i}(w^{(t)})||^{2} \\
    & + 3\tilde{\alpha}_{m}^{2}\mathbf{E}_{\cdot|t}||\frac{1}{S}\sum_{i\in \mathcal{S}^{(t)}}\nabla F_{i}(w^{(t)})-\nabla F(w^{(t)})||^{2} + (6\tilde{\alpha}_{m}^{2}\hat{L}_{F} - 2\tilde{\alpha}_{m})\mathbf{E}\mathcal{D}_{F}(w^{(t)},w^{*}) \\
    \end{aligned}
    $$
    With Lemma~\ref{appdx_lemma_locseb}, we have:
    $$
    \begin{aligned}
    \mathbf{\Delta}^{(t+1)} &\le  (1-\frac{\tilde{\alpha}_{m}\hat{\mu}_{F_{\cdot}}}{2})\mathbf{\Delta}^{(t)} + \frac{3\tilde{\alpha}_{m}^{2}+2\tilde{\alpha}_{m}/\hat{\mu}_{F_{\cdot}}}{NR}\sum_{i,r}^{N,R}||\mathbf{g}_{i,r}^{(t)}-\nabla F_{i}(w^{(t)})||^{2} \\
    & + 3\tilde{\alpha}_{m}^{2}  \frac{N/S-1}{N-1} \mathbf{E}_{i}||\nabla F_{i}(w^{(t)})-\nabla F(w^{(t)})||^{2} + (6\tilde{\alpha}_{m}^{2}\hat{L}_{F} - 2\tilde{\alpha}_{m})\mathbf{E}\mathcal{D}_{F}(w^{(t)},w^{*}) \\
    \end{aligned}
    $$
    With Lemma~\ref{appdx_lemma_vgacsb}, we have:
    $$
    \begin{aligned}
    \mathbf{E}_{i}||\nabla F_{i}(w)-\nabla F(w)||^{2} \le & 4\hat{L}_{F}\mathcal{D}_{F}(w,w^{*}) + 2{\sigma_{F,*}}^{2}
    \end{aligned}
    $$
    Thus, the inequality is:
    \begin{equation}
    \label{appdx_equ_25}
    \begin{aligned}
    \mathbf{\Delta}^{(t+1)} &\le  (1-\frac{\tilde{\alpha}_{m}\hat{\mu}_{F_{\cdot}}}{2})\mathbf{\Delta}^{(t)} + \frac{3\tilde{\alpha}_{m}^{2}+2\tilde{\alpha}_{m}/\hat{\mu}_{F_{\cdot}}}{NR}\sum_{i,r}^{N,R}||\mathbf{g}_{i,r}^{(t)}-\nabla F_{i}(w^{(t)})||^{2} \\
    & + 3\tilde{\alpha}_{m}^{2}  \frac{N/S-1}{N-1}[4\hat{L}_{F}\mathbf{E}\mathcal{D}_{F}(w^{(t)},w^{*}) + 2{\sigma_{F,*}}^{2}] + (6\tilde{\alpha}_{m}^{2}\hat{L}_{F} - 2\tilde{\alpha}_{m})\mathbf{E}\mathcal{D}_{F}(w^{(t)},w^{*}) \\
    \end{aligned}    
    \end{equation}
    With Lemma~\ref{appdx_lemma_lgcdb} and $\tilde{\alpha}_{m}\le \frac{\beta}{\sqrt{2\dot{c}}}$, by taking full expectation of all variables noted by $\mathbf{E}$, we have:
    $$
    \begin{aligned}
    \frac{1}{NR}\sum_{i,r}^{N,R}\mathbf{E} ||\mathbf{g}_{i,r}^{(t)}-\nabla F_{i}(w^{(t)})||^{2} \le & e\dot{c}\alpha_{m}^{2}2^{R+1}(1+2R)\mathbf{E}_{i}[||\nabla F_{i}(w^{(t)})||^{2}] +(e\dot{c}\alpha_{m}^{2}2^{R+1}+1)\dot{\delta} \\
    \le & e\dot{c}\alpha_{m}^{2}2^{R+1}(1+2R)\mathbf{E}_{i}[2||\nabla F_{i}(w^{(t)})-\nabla F_{i}(w^{*}))||^{2} \\
    & + 2||\nabla F_{i}(w^{*}))||^{2}] +(e\dot{c}\alpha_{m}^{2}2^{R+1}+1)\dot{\delta}  \\
    \le & e\dot{c}\alpha_{m}^{2}2^{R+3}(1+2R)\hat{L}_{F}\mathbf{E}\mathcal{D}_{F}(w^{(t)},w^{*}) \\
    &  + e\dot{c}\alpha_{m}^{2}2^{R+2}(1+2R)\sigma_{F,*}^{2} + (e\dot{c}\alpha_{m}^{2}2^{R+1}+1)\dot{\delta}  \\
    \end{aligned}
    $$
    where the second inequality is by Proposition~\ref{appdx_prop_jensen} and the final one is using Lemma~\ref{appdx_lemma_esplo} and Assumption~\ref{appdx_assumption_oggnb}. With this inequality, Equation (~\ref{appdx_equ_25}) turns into:
    $$
    \begin{aligned}
        \mathbf{\Delta}^{(t+1)}
         \le & (1-\frac{\tilde{\alpha}_{m}\hat{\mu}_{F_{\cdot}}}{2})\mathbf{\Delta}^{(t)} + 6\tilde{\alpha}_{m}^{2}  \frac{N/S-1}{N-1}{\sigma_{F,*}}^{2}\\
        & + (3\tilde{\alpha}_{m}^{2}+2\tilde{\alpha}_{m}/\hat{\mu}_{F_{\cdot}})[e\dot{c}\tilde{\alpha}_{m}^{2}\frac{2^{R+2}(1+2R)}{\beta^{2}R^{2}}\sigma_{F,*}^{2} + (e\dot{c}\tilde{\alpha}_{m}^{2}\frac{2^{R+1}}{\beta^{2}R^{2}}+1)\dot{\delta}] \\
        & + \{
        e\dot{c}(3\tilde{\alpha}_{m}+2/\hat{\mu}_{F_{\cdot}})
        \tilde{\alpha}_{m}^{3}\frac{2^{R+3}(1+2R)}{\beta^{2}R^{2}}\hat{L}_{F} + 12\tilde{\alpha}_{m}^{2}  \frac{N/S-1}{N-1}\hat{L}_{F} \\
        & + 6\tilde{\alpha}_{m}^{2}\hat{L}_{F} - 2\tilde{\alpha}_{m}
        \}\mathbf{E}\mathcal{D}_{F}(w^{(t)},w^{*})\\
    \end{aligned}
    $$

    To simplify this inequality with condition $\tilde{\alpha}_{m}\le\min\{\frac{\beta}{\sqrt{2\dot{c}}},\frac{2}{\hat{\mu}_{F_{\cdot}}}\}$, we have:
    $$
    \begin{aligned}
        \mathbf{\Delta}^{(t+1)} &\le (1-\frac{\tilde{\alpha}_{m}\hat{\mu}_{F_{\cdot}}}{2})\mathbf{\Delta}^{(t)} + 6\tilde{\alpha}_{m}^{2}  \frac{N/S-1}{N-1}{\sigma_{F,*}}^{2}\\
        & + \frac{2^{R+4}e\dot{c}}{\hat{\mu}_{F_{\cdot}}\beta^{2}R^{2}}[2(1+2R)\sigma_{F,*}^{2} + \dot{\delta}]\tilde{\alpha}_{m}^{3} 
        +\frac{8\dot{\delta}}{\hat{\mu}_{F_{\cdot}}}\tilde{\alpha}_{m}
        \\
        & - \{2-\tilde{\alpha}_{m}[
        \frac{e(1+\sigma_{\Phi})\hat{L}_{\mathcal{E}}(\hat{u}_{m}+\eta\hat{\gamma}_{\Phi}) 2^{R+6\frac{1}{2}}(\frac{1}{R}+2)}{\hat{\mu}_{F_{\cdot}}\beta R} \\
        & + 12  \frac{N/S-1}{N-1} + 3]\hat{L}_{F}
        \}\tilde{\alpha}_{m}\mathbf{E}\mathcal{D}_{F}(w^{(t)},w^{*})\\
    \end{aligned}
    $$
    where we use $3\tilde{\alpha}_{m}^{2} \le \frac{6}{\hat{\mu}_{F_{\cdot}}}$ and $\dot{c}\tilde{\alpha}_{m} \le \sqrt{2}\beta (1+\sigma_{\Phi}) \hat{L}_{\mathcal{E}}(\hat{u}_{m}+\eta\hat{\gamma}_{\Phi})$

    Let $\dot{c}_{1} := 2-\tilde{\alpha}_{m}[
        \frac{e(1+\sigma_{\Phi})\hat{L}_{\mathcal{E}}(\hat{u}_{m}+\eta\hat{\gamma}_{\Phi}) 2^{R+6\frac{1}{2}}(\frac{1}{R}+2)}{\hat{\mu}_{F_{\cdot}}\beta R} + 12  \frac{N/S-1}{N-1} + 6]\hat{L}_{F}$, and we have $\dot{c}_{1} \ge 1$, when $\tilde{\alpha}_{m}$ satisfies:
        \begin{equation}
        \label{appdx_equ_c1}
        \begin{aligned}
            \tilde{\alpha}_{m} \le \hat{\alpha}_{m}&:= \frac{\hat{\mu}_{F_{\cdot}}\beta R}{
            e(1+\sigma_{\Phi})\hat{L}_{\mathcal{E}}(\hat{u}_{m}+\eta\hat{\gamma}_{\Phi}) 2^{R+6\frac{1}{2}}(\frac{1}{R}+2) + 18(\hat{\mu}_{F_{\cdot}}\beta R)\hat{L}_{F}} \\
            &\le \frac{1}{[
            \frac{e(1+\sigma_{\Phi})\hat{L}_{\mathcal{E}}(\hat{u}_{m}+\eta\hat{\gamma}_{\Phi}) 2^{R+6\frac{1}{2}}(\frac{1}{R}+2)}{\hat{\mu}_{F_{\cdot}}\beta R} + 12  \frac{N/S-1}{N-1} + 6]\hat{L}_{F}}
        \end{aligned}
        \end{equation}
    By setting $\tilde{\alpha}_{m}$ with Equation (~\ref{appdx_equ_c1}), then let $\xi^{(t)}=(1-\frac{\tilde{\alpha}\hat{\mu}_{F_{\cdot}}}{2})^{-t-1}$ and $\mathcal{X}^{(T)}=\sum_{t=0}^{T-1}\xi^{(t)}$, $\tilde{\alpha}T\ge \frac{2}{\hat{\mu}_{F_{\cdot}}}$, $\tilde{\alpha}_{m}\le\min\{\frac{\beta}{\sqrt{2\dot{c}}},\frac{2}{\hat{\mu}_{F_{\cdot}}}\}$, we have:    
    $$
    \mathbf{\Delta}^{(t+1)}
    \le  (1-\frac{\tilde{\alpha}_{m}\hat{\mu}_{F_{\cdot}}}{2})\mathbf{\Delta}^{(t)} - \tilde{\alpha}_{m}\mathbf{E}\mathcal{D}_{F}(w^{(t)},w^{*}) + \sum_{j=1}^{3}\dot{\delta}_{j}\tilde{\alpha}_{m}^{j}
    $$
    where $\dot{\delta}_{1}:= \frac{8\dot{\delta}}{\hat{\mu}_{F_{\cdot}}}$, $\dot{\delta}_{2}:= 6\frac{N/S-1}{N-1}{\sigma_{F,*}}^{2}$ and $\dot{\delta}_{3}:= \frac{2^{R+4}e\dot{c}}{\hat{\mu}_{F_{\cdot}}\beta^{2}R^{2}}[2(1+2R)\sigma_{F,*}^{2} + \dot{\delta}]$.

    Reformulate it as following:
    $$
    \mathbf{E}\mathcal{D}_{F}(w^{(t)},w^{*})
    \le  \frac{1}{\tilde{\alpha}_{m}}[(1-\frac{\tilde{\alpha}_{m}\hat{\mu}_{F_{\cdot}}}{2})\mathbf{\Delta}^{(t)} - \mathbf{\Delta}^{(t+1)}] + \sum_{j=1}^{3}\dot{\delta}_{j}\tilde{\alpha}_{m}^{j-1} \\
    $$

    Multiply both sides with $\xi^{(t)}$ and accumulate over $t$:
    $$
    \begin{aligned}
    \mathbf{E}\mathcal{D}_{F}(\frac{\sum_{t=0}^{T-1}\xi^{(t)}w^{(t)}}{\mathcal{X}^{(T)}},w^{*}) \le & \frac{\sum_{t=0}^{T-1}\xi^{(t)}}{\mathcal{X}^{(T)}}\mathbf{E}\mathcal{D}_{F}(w^{(t)},w^{*}) \\
    \le &  \frac{1}{\tilde{\alpha}_{m} \mathcal{X}^{(T)}}\sum_{t=0}^{T-1}[(1-\frac{\tilde{\alpha}_{m}\hat{\mu}_{F_{\cdot}}}{2})\xi^{(t)}\mathbf{\Delta}^{(t)} - \xi^{(t)}\mathbf{\Delta}^{(t+1)}] + \sum_{j=1}^{3}\dot{\delta}_{j}\tilde{\alpha}_{m}^{j-1} \\
    = & \frac{1}{\tilde{\alpha}_{m}\mathcal{X}^{(T)}}\mathbf{\Delta}^{(0)}-\frac{\xi^{(T-1)}}{\tilde{\alpha}_{m}\mathcal{X}^{(T)}}\mathbf{\Delta}^{(T)}+\sum_{j=1}^{3}\dot{\delta}_{j}\tilde{\alpha}_{m}^{j-1} \\
    = & \frac{\hat{\mu}_{F_{\cdot}}}{2\xi^{(T-1)}[1-(1-\tilde{\alpha}_{m}\hat{\mu}_{F_{\cdot}}/2)^{T}]}\mathbf{\Delta}^{(0)}-\frac{\xi^{(T-1)}}{\tilde{\alpha}_{m}\mathcal{X}^{(T)}}\mathbf{\Delta}^{(T)}+\sum_{j=1}^{3}\dot{\delta}_{j}\tilde{\alpha}_{m}^{j-1} \\
    \le & \hat{\mu}_{F_{\cdot}}e^{-\tilde{\alpha}_{m}\hat{\mu}_{F_{\cdot}}T/2} \mathbf{\Delta}^{(0)}-\frac{\hat{\mu}_{F_{\cdot}}}{2}\mathbf{\Delta}^{(T)}+\sum_{j=1}^{3}\dot{\delta}_{j}\tilde{\alpha}_{m}^{j-1} \\
    \le & \hat{\mu}_{F_{\cdot}}e^{-\tilde{\alpha}_{m}\hat{\mu}_{F_{\cdot}}T/2} \mathbf{\Delta}^{(0)}+\sum_{j=1}^{3}\dot{\delta}_{j}\tilde{\alpha}_{m}^{j-1} \\
    \le & \mathcal{O}[\mathcal{D}_{F}(\bar{w}^{(T)},w^{*})] \\
    \end{aligned}
    $$
    where $\bar{w}^{(T)} := \frac{\sum_{t=0}^{T-1}\xi^{(t)}}{\mathcal{X}^{(T)}}w^{(t)}$, we use convexity of $\mathcal{D}_{F}$ and $F$ for the first inequality, the second one is by the reformulated inequality and the third one is by setting $\tilde{\alpha}_{m}T \ge \frac{2}{\hat{\mu}_{{F}_{\cdot}}}$ and the fact $\frac{2\xi^{(T-1)}}{\tilde{\alpha}_{m}\hat{\mu}_{F_{\cdot}}} \ge \mathcal{X}^{(T)}=\frac{2\xi^{(T-1)}[1-(1-\frac{\tilde{\alpha}_{m}\hat{\mu}_{F_{\cdot}}}{2})^{T}]}{\tilde{\alpha}_{m}\hat{\mu}_{F_{\cdot}}} \ge \frac{\xi^{(T-1)}}{\tilde{\alpha}_{m}\hat{\mu}_{F_{\cdot}}}$ and $0 \le (1-\frac{\tilde{\alpha}_{m}\hat{\mu}_{F_{\cdot}}}{2})^{T}\le e^{-\frac{1}{2}\tilde{\alpha}_{m}\hat{\mu}_{F_{\cdot}}T}\le e^{-1}\le \frac{1}{2}$.

    To tighten this bound, we recommend~\cite{pmlr-v117-arjevani20a}, which discusses the range and strategy of step sizes in detail rather than our unified bound.

    With $\tilde{\alpha}_{m}\ge \frac{2}{\hat{\mu}_{F_{\cdot}}T}$, we have:
    $$
    \sum_{j=1}^{3}\dot{\delta}_{j}\tilde{\alpha}_{m}^{j-1} \le \mathcal{O}(\dot{\delta}_{1}) + \mathcal{O}(\frac{\dot{\delta}_{2}}{T\hat{\mu}_{F_{\cdot}}}) + \mathcal{O}(\frac{\dot{\delta}_{3}}{T^{2}\hat{\mu}_{F_{\cdot}}^{2}})
    $$
    Thus, $$
    \begin{aligned}
        \mathcal{O}[\mathcal{D}_{F}(\bar{w}^{(T)},w^{*})] =& \mathcal{O}(\hat{\mu}_{F_{\cdot}}e^{-\tilde{\alpha}_{m}\hat{\mu}_{F_{\cdot}}T/2} \mathbf{\Delta}^{(0)})+\mathcal{O}(\frac{\dot{\delta}}{\hat{\mu}_{F_{\cdot}}}) \\
        &+ \mathcal{O}(\frac{(N/S-1){\sigma_{F,*}}^{2}}{N T\hat{\mu}_{F_{\cdot}}}) + \mathcal{O}(\frac{2^{R+4}e\dot{c}}{T^{2}\hat{\mu}_{F_{\cdot}}^{3}\beta^{2}R^{2}}[2(1+2R)\sigma_{F,*}^{2} + \dot{\delta}])
    \end{aligned}
    $$
    where, $\dot{\delta} = 4[\lambda\frac{\hat{L}_{g^{*}}}{\hat{\mu}_{F_{\cdot}}}(\hat{u}_{m} + \eta\hat{\gamma}_{\Phi})]^{2}(\frac{\hat{\gamma}_{f}^{2}}{|\tilde{d}_{i}|}+\hat{\epsilon}^2) + 16[(1+\sigma_{\Phi})\hat{L}_{\mathcal{E}}(\hat{u}_{m}+\eta\hat{\gamma}_{\Phi})\hat{\gamma}_{\Phi}]^{2}$ and $\dot{c}=4[(1+\sigma_{\Phi})\hat{L}_{\mathcal{E}}(\hat{u}_{m}+\eta\hat{\gamma}_{\Phi})]^{2}$.

    For simplification, letting $A = [\frac{\hat{L}_{g^{*}}}{\hat{\mu}_{F_{\cdot}}}(\hat{u}_{m} + \eta\hat{\gamma}_{\Phi})]^{2}(\frac{\hat{\gamma}_{f}^{2}}{|\tilde{d}_{i}|}+\hat{\epsilon}^2)$, $B = [(1+\sigma_{\Phi})\hat{L}_{\mathcal{E}}(\hat{u}_{m}+\eta\hat{\gamma}_{\Phi})\hat{\gamma}_{\Phi}]^{2}$ and $C = \frac{\sigma_{\Phi}^{2}\hat{L}_{\mathcal{E}}^{2}(\hat{u}_{m}+\eta\hat{\gamma}_{\Phi})^{2}}{\hat{\mu}_{F_{\cdot}}^{3}}$, we have:
    $$
    \begin{aligned}
        \mathcal{O}[\mathcal{D}_{F}(\bar{w}^{(T)},w^{*})] = & \mathcal{O}(\hat{\mu}_{F_{\cdot}}e^{-\tilde{\alpha}_{m}\hat{\mu}_{F_{\cdot}}T/2} \mathbf{\Delta}^{(0)})+\mathcal{O}(\frac{A\lambda^{2}+B}{\hat{\mu}_{F_{\cdot}}}) \\
        &+ \mathcal{O}(\frac{(N/S-1){\sigma_{F,*}}^{2}}{N T\hat{\mu}_{F_{\cdot}}}) + \mathcal{O}(\frac{2^{R}C}{T^{2}\beta^{2}R^{2}}[R\sigma_{F,*}^{2} + A\lambda^{2}+B]).
    \end{aligned}
    $$
    
\end{proof}

\subsubsection{Proof of Theorem~\ref{theo_pg}}
The proof of Theorem~\ref{theo_pg} is shown as followings:
\begin{proof}
    With Gaussian prior and first-order methods, we have the bound between personalized model and optimal global model,
    with $\dot{\delta}_{p}= \frac{2}{\hat{\mu}_{F_{i,\cdot}}^{2}}(\frac{\hat{\gamma}_{f}^{2}}{|\tilde{d}_{i}|}+\hat{\epsilon}^2)+\frac{2}{\lambda^{2}}\epsilon_{1}^{2} + \frac{4}{\lambda^{2}}\sigma_{F,*}^{2} +\frac{1}{2}\eta^{2}\mathcal{G}_{\Phi}^{2}$, and $\dot{c}_{p}=(\frac{32}{\lambda^{2}}\hat{L}_{F} + \frac{8}{\hat{\mu}_{F_{\cdot}}})$:
    
    $$
    \begin{aligned}
    \mathbf{E}||\tilde{\theta}_{i}(\bar{w}^{T})-w^{*}||^{2} \le & 4[\mathbf{E}||\tilde{\theta}_{i}(\bar{w}^{T})-\theta^{*}_{i}(\bar{w}^{T})||^{2} \\
    & + \mathbf{E}||\theta_{i}^{*}(\bar{w}^{T})-\mu_{i}(\bar{w}^{T})||^{2} + \mathbf{E}||\mu_{i}(\bar{w}^{T})-w^{(T)}||^{2}+\mathbf{E}||w^{(T)}-w^{*}||^{2}]  \\
    \le & 4[ \frac{2}{\hat{\mu}_{F_{i,\cdot}}^{2}}(\frac{\hat{\gamma}_{f}^{2}}{|\tilde{d}_{i}|}+\hat{\epsilon}^2) +\frac{1}{\lambda^{2}}\mathbf{E}2[||\nabla \tilde{F}_{i}(w^{(T)})-\nabla F_{i}(w^{(T)})||^{2} \\
    & + ||\nabla F_{i}(w^{(T)})||^{2}] +\frac{1}{2}\mathbf{E}||\eta\nabla \Phi_{i}(w^{(T)})||^{2} +\mathbf{E}||w^{(T)}-w^{*}||^{2}]\\
    \le & 4[ \frac{2}{\hat{\mu}_{F_{i,\cdot}}^{2}}(\frac{\hat{\gamma}_{f}^{2}}{|\tilde{d}_{i}|}+\hat{\epsilon}^2) +\frac{2}{\lambda^{2}}\mathbf{E}\{\epsilon_{1}^{2} + 2[||\nabla F_{i}(w^{(T)}) - \nabla F_{i}(w^{*})||^{2} + ||\nabla F_{i}(w^{*})||^{2}] \} \\
    & +\frac{1}{2}\eta^{2}\mathcal{G}_{\Phi}^{2} +\mathbf{E}||w^{(T)}-w^{*}||^{2} ]\\
    \le & 4[ \frac{2}{\hat{\mu}_{F_{i,\cdot}}^{2}}(\frac{\hat{\gamma}_{f}^{2}}{|\tilde{d}_{i}|}+\hat{\epsilon}^2)+\frac{2}{\lambda^{2}}\epsilon_{1}^{2} +\frac{8}{\lambda^{2}}\hat{L}_{F}\mathcal{D}_{F}(w^{(T)},w^{*}) + \frac{4}{\lambda^{2}}\sigma_{F,*}^{2} \\
    & +\frac{1}{2}\eta^{2}\mathcal{G}_{\Phi}^{2} +\mathbf{E}||w^{(T)}-w^{*}||^{2} ]\\
    \le & 4[ \frac{2}{\hat{\mu}_{F_{i,\cdot}}^{2}}(\frac{\hat{\gamma}_{f}^{2}}{|\tilde{d}_{i}|}+\hat{\epsilon}^2)+\frac{2}{\lambda^{2}}\epsilon_{1}^{2} + \frac{4}{\lambda^{2}}\sigma_{F,*}^{2} +\frac{1}{2}\eta^{2}\mathcal{G}_{\Phi}^{2} +(\frac{8}{\lambda^{2}}\hat{L}_{F} + \frac{2}{\hat{\mu}_{F_{\cdot}}})\mathcal{D}_{F}(w^{(T)},w^{*})] \\
    \le & \mathcal{O}(\dot{\delta}_{p}) + \mathcal{O}[\dot{c}_{p}\mathcal{D}_{F}(\bar{w}^{(T)},w^{*})] \\
    \end{aligned}
    $$
    where the first inequality is by Proposition~\ref{appdx_prop_jensen}, the second one is by Lemma~\ref{appdx_lemma_lspb} and Proposition~\ref{appdx_prop_jensen}, the third one is by Assumption~\ref{appdx_assumption_fo} and Lemma~\ref{appdx_assumption_rmd_msb}, the fourth one is by Lemma~\ref{appdx_lemma_esplo} and Assumption~\ref{appdx_assumption_oggnb} and the final one is by Theorem~\ref{theo_gm}.
\end{proof}

\end{document}